\documentclass[11pt]{article}
\usepackage{amsmath,amsthm,amssymb,graphicx}
\usepackage[margin = 1in]{geometry}
\usepackage{grffile}
\usepackage{amssymb}
\usepackage{amsmath}
\usepackage{bm}
\usepackage{amsthm}
\usepackage{epsfig}
\usepackage{graphicx}
\usepackage{graphics}
\usepackage{float}
\usepackage{subfigure}
\usepackage{multirow}
\usepackage{color}
\usepackage{xcolor}
\usepackage[normalem]{ulem} 
\usepackage{makeidx}
\usepackage{xspace}
\usepackage{wrapfig}
\usepackage{import}
\usepackage{pdfpages}
\usepackage[export]{adjustbox}

\usepackage[toc,page]{appendix}
\usepackage{minitoc}

\counterwithout{figure}{section}

\usepackage{hyperref}
\hypersetup{
    colorlinks=true,
    linkcolor=blue,
    filecolor=magenta,      
    urlcolor=cyan,
    pdftitle={Overleaf Example},
    pdfpagemode=FullScreen,
}

\usepackage{authblk}
\usepackage[numbers]{natbib}

\usepackage[utf8]{inputenc} 
\usepackage[T1]{fontenc}    
\usepackage{hyperref}       
\usepackage{url}            
\usepackage{booktabs}       
\usepackage{amsfonts}       
\usepackage{nicefrac}       
\usepackage{microtype}      
\usepackage{xcolor}         

\usepackage[framemethod=tikz,xcolor=true]{mdframed}

\newtheorem{theorem}{Theorem}[section]
\newtheorem{rmq}{Remark}[section]
\newtheorem{lemma}[theorem]{Lemma}
\newtheorem{corollary}[theorem]{Corollary}
\newtheorem{prop}[theorem]{Proposition}
\newtheorem{definition}[theorem]{Definition}

\newtheorem{assumption}{Assumption}[section]

\theoremstyle{definition}

\newcommand{\E}{\mathbb{E}}

\title{\textbf{Triangular Flows for Generative Modeling:\\ Statistical Consistency, Smoothness Classes,\\ and Fast Rates}}

\date{December 31, 2021}

\author[1]{Nicholas J. Irons}
\affil[1]{Department of Statistics,
  University of Washington}

\author[2]{Meyer Scetbon}
\affil[2]{Department of Statistics, CREST ENSAE}

\author[3]{Soumik Pal}
\affil[3]{Department of Mathematics,
  University of Washington}

\author[1]{Zaid Harchaoui}

\begin{document}
\maketitle

\doparttoc 
\faketableofcontents 

\begin{abstract}
Triangular flows, also known as Kn\"othe-Rosenblatt measure couplings, comprise an important building block of normalizing flow models for generative modeling and density estimation, including popular autoregressive flows such as Real NVP. We present statistical guarantees and sample complexity bounds for triangular flow statistical models. In particular, we establish the statistical consistency and the finite sample convergence rates of the Kullback-Leibler estimator of the Kn\"othe-Rosenblatt measure coupling using tools from empirical process theory. Our results highlight the anisotropic geometry of function classes at play in triangular flows, shed light on optimal coordinate ordering, and lead to statistical guarantees for Jacobian flows. We conduct numerical experiments on synthetic data to illustrate the practical implications of our theoretical findings.
\end{abstract}

\section{Introduction}
\label{sec:intro}
Triangular flows are popular generative models that allow one to define complex multivariate distributions via push-forwards from simpler multivariate distributions \cite{kobyzev}. Triangular flow models target the \emph{Kn\"{o}the-Rosenblatt map}~\cite{spantini}, which originally appeared in two independent papers by Kn\"{o}the and Rosenblatt \cite{knothe,rosenblatt}. The Kn\"{o}the-Rosenblatt map is a multivariate function $S^*$ from $\mathbb{R}^d$ onto itself pushing a Lebesgue probability density $f$ onto another one $g$:
\begin{equation*}
S^*\#f=g
\end{equation*}
The Kn\"{o}the-Rosenblatt map has the striking property of being \emph{triangular} in that its Jacobian is an upper triangular matrix. 
This map and its properties have been studied in probability theory, nonparametric statistics, and optimal transport, under the name 
of Kn\"othe-Rosenblatt (KR) coupling or rearrangement. 

KR can also be seen as an \emph{optimal transport under anisotropic distortion} \cite{santambrogio},
hence shared connections with optimal transport theory~\cite{otam,villani}. Unlike the well-known Brenier map, KR
has a simple and explicit definition in terms of  conditional densities of distributions at play, sparing the need for optimization to be defined. 

KR can be used to synthesize a sampler of a probability distribution given data drawn from that probability distribution. Moreover, 
any probability distribution can be well approximated via a KR map. This key property has been an important motivation of 
a number of flow models~\cite{nice,nvp,moselhy,glow,kobyzev,sampling,spantini,naf,maf,mnn,made} in machine learning, statistical science, computational science, and AI domains. Flow models or normalizing flows have achieved great success in a number of settings, allowing 
one to generate images that look realistic as well as texts that look as if they were written by humans. The theoretical analysis of 
normalizing flows is yet a huge undertaking as many challenges arise at the same time: the recursive construction of a push forward, 
the learning objective to estimate the push forward, the neural network functional parameterization, and the statistical modeling of complex data. 

We focus in this paper on the Kn\"{o}the-Rosenblatt coupling, and more generally triangular flows, as it can be seen as the statistical backbone of normalizing flows. 
The authors of~\citep{spantini} showed how to estimate KR from data by minimizing a Kullback-Leibler objective. The estimated KR can then be used to sample at will 
from the probability distribution at hand. 
The learning objective of~\cite{spantini} has the benefit of being statistically classical, 
hence amenable to detailed analysis compared to adversarial learning objectives which are still subject to active research. 
The sample complexity or rate of convergence of this KR estimator is however, to this day, unknown. 
On the other hand, KR and its multiple relatives are frequently motivated from a universal approximation perspective \cite{naf,kobyzev}, which can be misleading as it focuses on the approximation aspect only. Indeed, while universal approximation holds for many proposed models, slow rates can occur (see Theorem \ref{thm-slow-rates}).

\paragraph{Contributions.} We present a theoretical analysis of the Kn\"{o}the-Rosenblatt coupling, from its statistical framing to convergence rates.
We put forth a simple example of slow rates  showing the limitations of a viewpoint based on universal approximation only. 
This leads us to identify the function classes that the KR maps belong to and bring to light their anisotropic geometry. We then establish
finite sample rates of convergence using tools from empirical process theory. 
Our analysis delineates different regimes of statistical convergence, 
depending on the dimension and the sample. Our theoretical results hold under general conditions.
Assuming that the source density is log-concave, we establish fast rates of Sobolev-type convergence in the smooth regime. 
We outline direct implications of our results on Jacobian flows. We provide numerical illustrations on synthetic data to demonstrate the practical implications of our theoretical results.

\section{Triangular Flows}
\label{sec:krmap}

\paragraph{Kn\"{o}the-Rosenblatt rearrangement.}
KR originated from independent works of Rosenblatt and Kn\"{o}the and has spawned fruitful applications in diverse areas. M. Rosenblatt studied the KR map for statistical purposes, specifically multivariate goodness-of-fit testing~\cite{rosenblatt}. 
H. Kn\"{o}the, on the other hand, elegantly utilized the KR map to extend the Brunn-Minkowski inequality, which can be used to prove the celebrated isoperimetric inequality \cite{knothe}.
More recently, triangular flows have been proposed as simple and expressive building blocks of generative models, which aim to model a distribution given samples from it. They have been studied and implemented for the problems of sampling and density estimation, among others \cite{kobyzev,spantini,sampling,moselhy}. A triangular flow can be used to approximate the KR map between a source density and a target density from their respective samples.

Consider two Lebesgue probability densities $f$ and $g$ on $\mathbb{R}^d$. The Kn\"othe-Rosenblatt map $S^*:\mathbb{R}^d\to\mathbb{R}^d$ between $f$ and $g$ is the unique monotone non-decreasing upper triangular measurable map pushing $f$ forward to $g$, written $S^*\#f=g$ \cite{spantini}. The KR map is upper triangular in the sense that its Jacobian is an upper triangular matrix, since $S^*$ takes the form
\[
S^*(x) = \begin{bmatrix}
S^*_1(x_1,\ldots,x_d) \\
S^*_2(x_2,\ldots,x_d) \\
\vdots \\
S^*_{d-1}(x_{d-1},x_d) \\
S^*_d(x_d)
\end{bmatrix}.
\]
Furthermore, $S^*$ is monotone non-decreasing in the sense that the univariate map $x_k\mapsto S^*_k(x_k,\ldots,x_d)$ is monotone non-decreasing for any $(x_{k+1},\ldots,x_d)\in\mathbb{R}^{d-k}$ for each $k\in\{1,\ldots,d\}=:[d]$. We will sometimes abuse notation and write $S^*_k(x)$ for $x=(x_1,\ldots,x_d)$, even though $S^*_k$ depends only on $(x_k,\ldots,x_d)$. 

The components of the KR map can be defined recursively via the monotone transport between the univariate conditional densities of $f$ and $g$. Let $F_k(x_k|x_{(k+1):d})$ denote the cdf of the conditional density 
\[
f_k(x_k|x_{(k+1):d})
:= \frac{\int f(x_1,\ldots,x_d)dx_1\cdots dx_{k-1}}{\int f(x_1,\ldots,x_d)dx_1\cdots dx_k}.
\]
Similarly, let $G_k(y_k|y_{(k+1):d})$ denote the conditional cdf of $g$. In particular, when $k=d$ we obtain $F_d(x_d)$, 
the cdf of the $d$-th marginal density $f_d(x_d)= \int f(x_1,\ldots,x_d)dx_1\cdots dx_{d-1}$, and similarly for $g$.
Assuming $g(y)>0$ everywhere, the maps $y_k\mapsto G_k(y_k|y_{(k+1):d})$ are strictly increasing and therefore invertible. In this case, the $d$th component of $S^*$ is defined as the monotone transport between $f_d(x_d)$ and $g_d(y_d)$ \cite{otam},
\[
S^*_d(x_d) = G_d^{-1}(F_d(x_d)).
\]
From here the $k$th component of $S^*$ is given by
\[
S^*_k(x_k,\ldots,x_d) = G_k^{-1}\left(F_k(x_k|x_{(k+1):d})\bigg|S^*_{(k+1):d}(x_{(k+1):d})\right)
\]
for $k=1,\ldots, d-1$, where $x_{(k+1):d}=(x_{k+1},\ldots,x_d)$ and 
\[
S^*_{(k+1):d}(x_{(k+1):d}) = (S^*_{k+1}(x_{(k+1):d}),\ldots,S^*_d(x_d)).
\]
That $S^*$ is upper triangular and monotonically non-decreasing is clear from the construction. Under tame assumptions on $f$ and $g$ discussed below, $S^*$ is invertible. We denote $T^*=(S^*)^{-1}$, which is the KR map from $g$ to $f$.
In this paper, we shall be interested in the asymptotic and non-asymptotic convergence of statistical estimators towards $S^*$ and  $T^*$, respectively.

In one dimension ($d=1$), the Kn\"{o}the-Rosenblatt rearrangement simply matches the quantiles of $f$ and $g$ via monotone transport, $S^*(x)=(G^{-1}\circ F)(x)$. In this case, the KR rearrangement is an optimal transport map for any convex cost that is a function of the displacement $x-S(x)$ (see, e.g., Theorem 2.9 in \cite{otam}). 
In higher dimensions, the KR map is not generally an optimal transport map for the commonly studied cost functions. However, it does arise as a limit of optimal transport maps under an anisotropic quadratic cost.
We discuss this connection further in Section \ref{sec:anisotropic}, in which we prove a theorem concerning densities of anisotropic smoothness that complements the result of Carlier, Galichon, and Santambrogio \cite{santambrogio}. 

\begin{figure*}[t]
\centering
\begin{tabular}{llll}
\hspace{-4mm}
\includegraphics[height=40mm,width=0.25\textwidth]{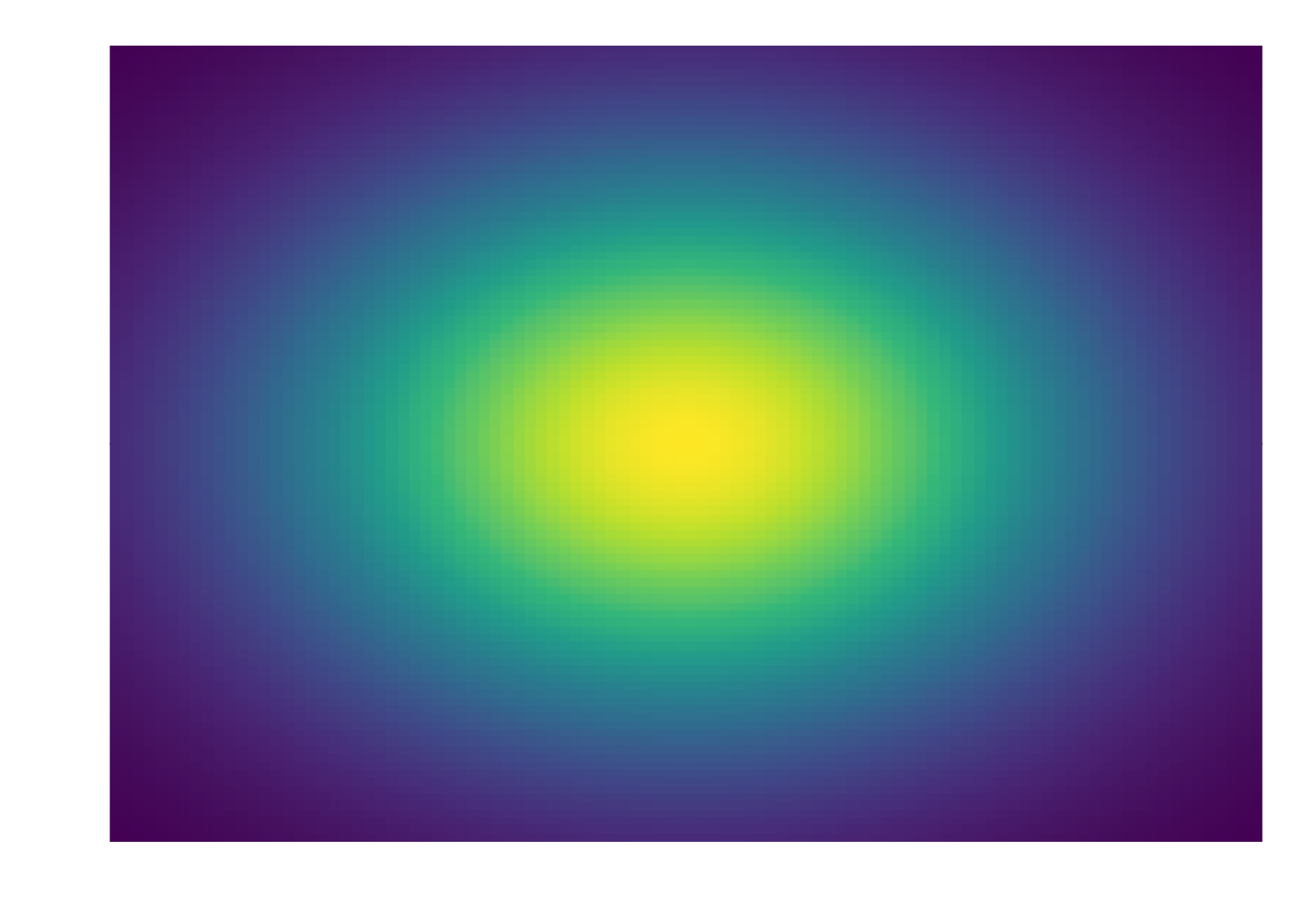} 
&
\hspace{-8mm}
\includegraphics[height=40mm,width=0.25\textwidth]{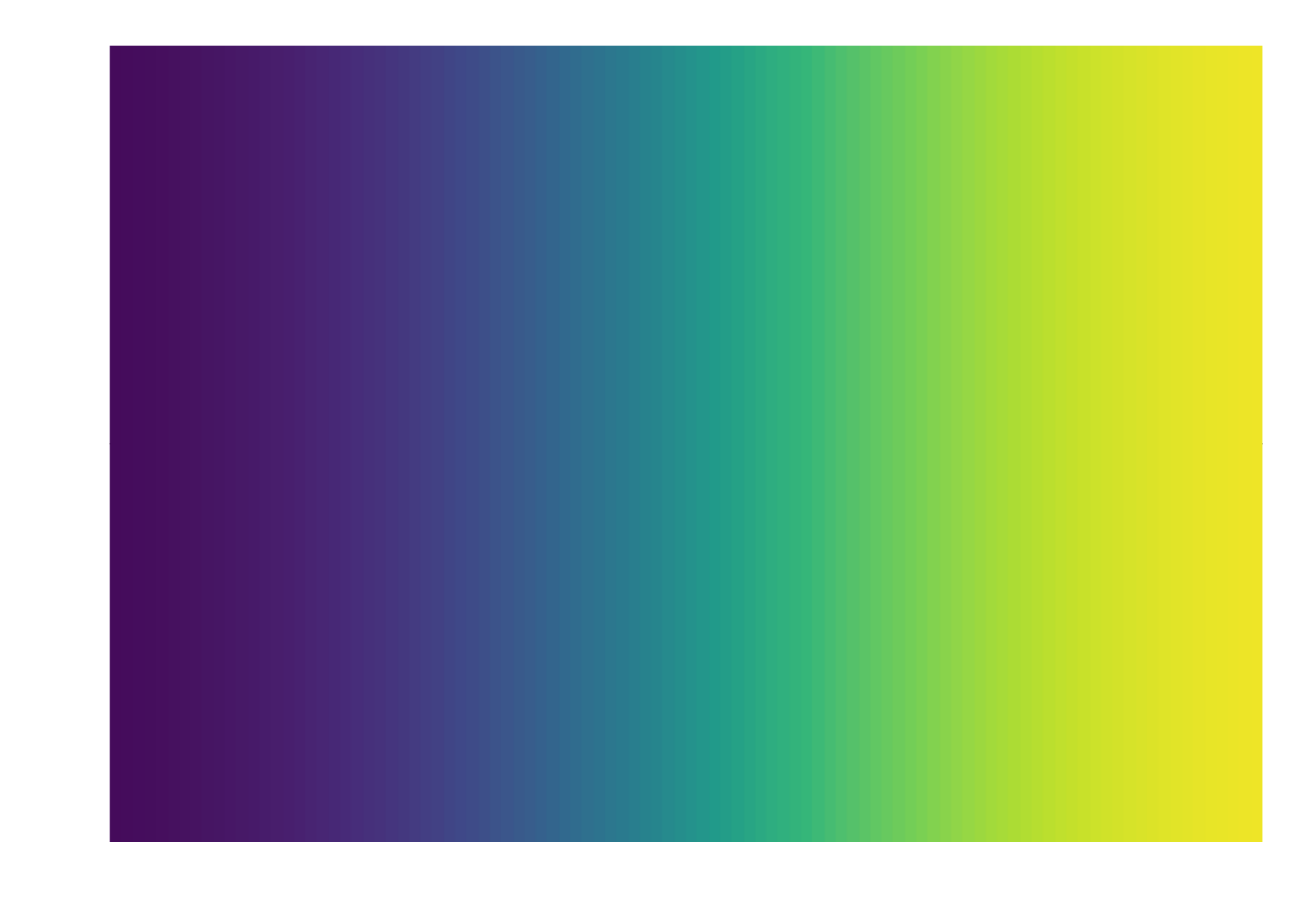}
&
\hspace{-8mm}
\includegraphics[height=40mm,width=0.25\textwidth]{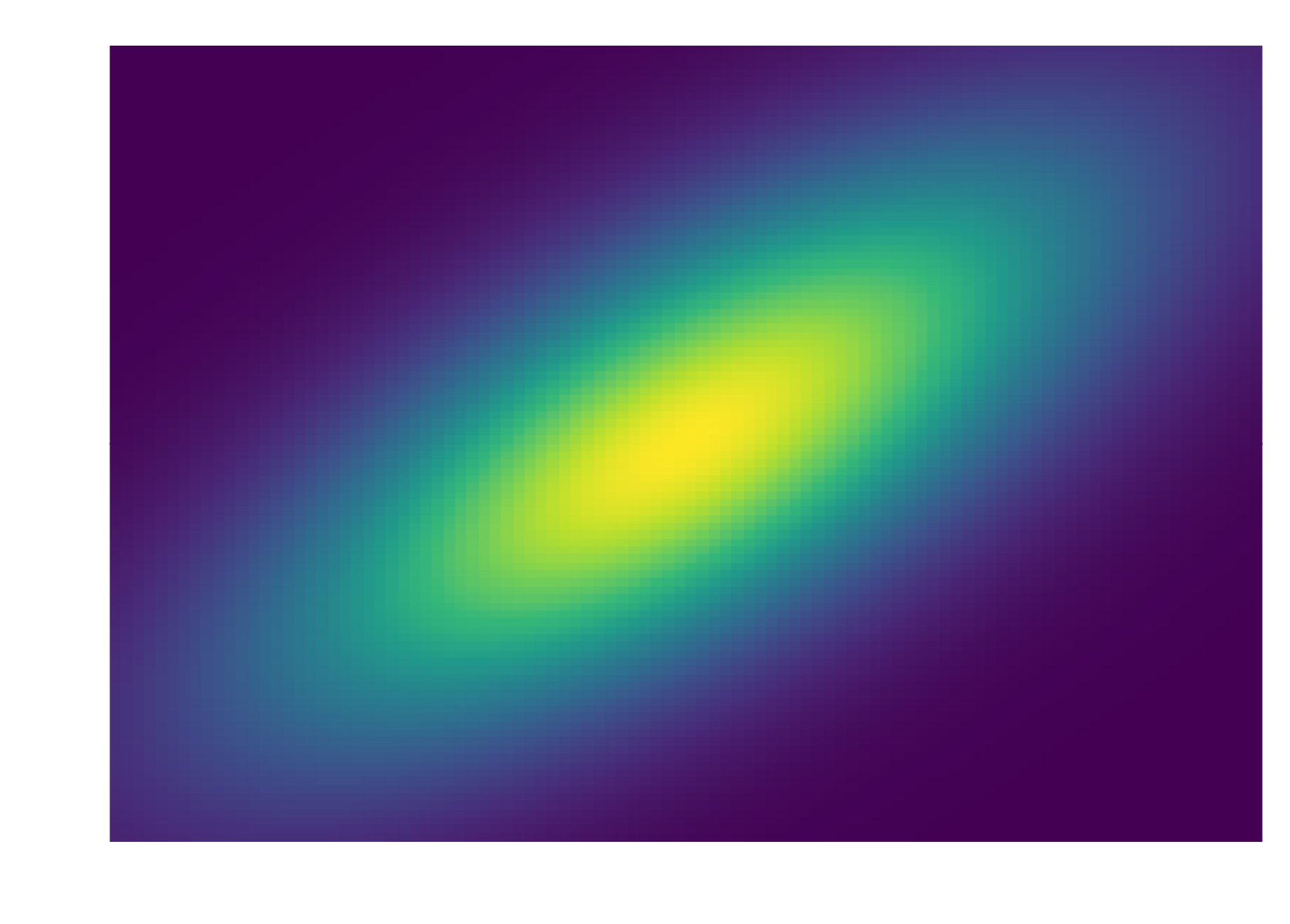}
&
\hspace{-8mm}
\includegraphics[height=40mm,width=0.25\textwidth]{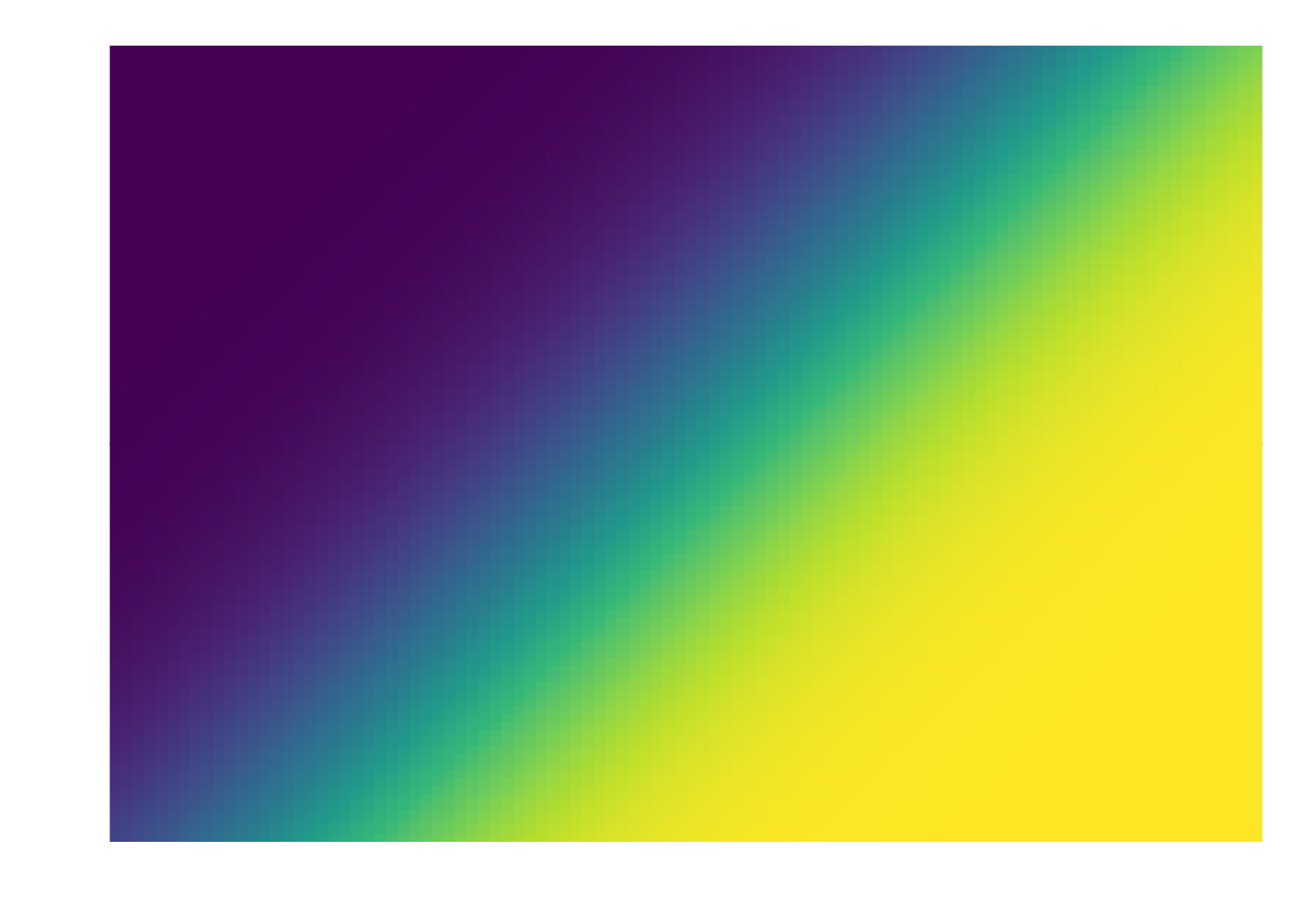} 
\end{tabular}
\caption{\textbf{Left:} Heat map of standard bivariate Gaussian density $f(x_1,x_2)$ with correlation $\rho=0$ (leftmost) and the first component of the KR map $F_1(x_1|x_2)$ from $f$ to $g$, the uniform density on $[0,1]^2$. \textbf{Right:} Bivariate Gaussian density with $\rho=0.7$ and first component of the KR map $F_1(x_1|x_2)$ (rightmost).}
\label{fig:bvnorm}
\end{figure*}

\paragraph{From the uniform distribution to any distribution.}
In his seminal paper on the Kn\"{o}the-Rosenblatt rearrangement, Rosenblatt considered the special case in which $g$ is the uniform density on the hypercube $[0,1]^d$ \cite{rosenblatt}. This implies that the conditional cdfs are identity maps, $G_k(y_k|y_{k+1},\ldots,y_d)=y_k$, and so the KR map from $f$ to $g$ consists simply of the conditional cdfs
\begin{align*}
S^*_d(x_d) &= P(X_d\le x_d) = F_d(x_d), \\
S^*_{d-1}(x_{d-1},x_d) &= P(X_{d-1}\le x_{d-1}|X_d=x_d) = F_{d-1}(x_{d-1}|x_d), \\
&\qquad\vdots \\
S^*_{1}(x_1,\ldots,x_d) &= P(X_1\le x_1|X_{2}= x_{2},\ldots,X_d=x_d) = F_{1}(x_{1}|x_2,\ldots,x_d), \\
\end{align*}
When $d=2$ and $f$ is a bivariate Gaussian 
$N\left(\begin{bmatrix} \mu_1 \\ \mu_2 \end{bmatrix}, \begin{bmatrix} \sigma_1^2 & \rho\sigma_1\sigma_2 \\ \rho\sigma_1\sigma_2 & \sigma_2^2  \end{bmatrix}\right)$, we have
\begin{align*}
F_2(x_2) &= \Phi\left(\frac{x_2-\mu_2}{\sigma_2}\right),\\
F_1(x_1|x_2) &= \Phi\left(\frac{x_1-\mu_1-\frac{\rho\sigma_2}{\sigma_1}(x_2-\mu_2)}{\sigma_1\sqrt{1-\rho^2}}\right),
\end{align*}
where $\Phi$ is the univariate standard normal cdf. Figure \ref{fig:bvnorm} exhibits heat maps of the target density $f(x_1,x_2)$ and the first component of the KR map $F_1(x_1|x_2)$ for the choice $\mu_1=\mu_2=0, \sigma_1=\sigma_2=1,$ with $\rho=0$ (left panels) and $\rho=0.7$ (right panels). 

\paragraph{Kullback-Leibler objective.}
The authors of~\cite{spantini} proposed to estimate the KR map using a minimum distance approach based on the Kullback-Leibler divergence.
Since $S^*$ is the unique monotone upper triangular map satisfying $S\# f=g$, it is the unique such map satisfying $\text{KL}(S\#f|g)=0$, where KL denotes the Kullback-Leibler divergence,
\[
\text{KL}(p|q) = \int p(x)\log\frac{p(x)}{q(x)}dx
\] 
for Lebesgue densities $p,q$ on $\mathbb{R}^d$. As such, $S^*$ uniquely solves the variational problem
$
\min_{S\in\mathcal{T}}\text{KL}(S\# f|g),
$
where $\mathcal{T}$ denotes the convex cone of monotone non-decreasing upper triangular maps $S:\mathbb{R}^d\to\mathbb{R}^d$. The change of variables formula for densities states that
\begin{equation}
(S^{-1}\# g)(x) = g(S(x))|\det(JS(x))|,
\label{eq:COV}
\end{equation}
where $JS(x)$ denotes the Jacobian matrix of $S$ evaluated at $x$. Applying this formula to $\text{KL}(S\#f|g)=\text{KL}(f|S^{-1}\# g)$, we rewrite $\min_{S\in\mathcal{T}}\text{KL}(S\# f|g)$ as 
\begin{equation}
\min_{S\in\mathcal{T}} \E\left\{\log [f(X)/g(S(X))] - \sum_{k=1}^d \log D_k S_k(X)\right\},   
\label{min:pop}
\end{equation}
where $X$ is a random variable on $\mathbb{R}^d$ with density $f$ and $D_k$ is shorthand for differentiation with respect to the $k$th component $\frac{\partial}{\partial x_k}$. 
By monotonicity, $D_k S_k(x)$ is defined Lebesgue almost everywhere for every $S\in\mathcal{T}$. The relation (\ref{min:pop}) is proved in the Supplement.
\subsection{Statistical estimator of the KR map}

We shall study an estimator $S^n$ of $S^*$ derived from the sample average approximation to (\ref{min:pop}), which yields the minimization problem~\cite{spantini}
\begin{equation}
\min_{S\in\mathcal{S}}
\frac{1}{n}\sum_{i=1}^n \left\{ \log [f(X^i)/g(S(X^i))] - \sum_{k=1}^d \log D_k S_k(X^i) \right\},
\label{min:sample}   
\end{equation}
where $X^1,\ldots,X^n$ is an i.i.d random sample from $f$ and $\mathcal{S}$ is a hypothesis function class. In generative modeling, we have a finite sample from $f$, perhaps an image dataset, that we use to train a map that can generate more samples from $f$. In this case, $f$ is the target density and $g$, the source density, is a degree of freedom in the problem. In practice, $g$ should be chosen so that it is easy to sample from, e.g., a multivariate normal density. 
The target density $f$ could be also unknown in practice, and if necessary we can omit the terms involving $f$ from the objective function in (\ref{min:sample}), since they do not depend on the argument $S$. 

With an estimator $S^n$ in hand, which approximately solves the sample average problem (\ref{min:sample}), we can generate approximate samples from $f$ by pulling back samples from $g$ under $S^n$, or equivalently by pushing forward samples from $g$ under $T^n =(S^n)^{-1}$. 
As $S^n$ is defined via KL projection, it can also be viewed as a non-parametric maximum likelihood estimator (MLE). 

In practice, $S^n$ can be estimated by parameterizing the space of triangular maps via some basis expansion, for example, using orthogonal polynomials~\cite{sampling}. In this case, the problem becomes parametric and $S^n$ is the MLE. Universal approximation is insufficient to reason about nonparametric estimators,  since slow rates can happen, as we shall show next.

Figure \ref{fig:ex} illustrates the problem at hand in the case where the target density $f$ is an unbalanced mixture of three bivariate normal distributions centered on the vertices of an equilateral triangle with spherical covariance. 
Panel (a) of Figure \ref{fig:ex} displays level curves of $f$. The source density $g$ is a standard bivariate normal density. We solved for $S^n$ by parametrizing its components via a Hermite polynomial basis, as described in \cite{sampling}. Since $g$ is log-concave, the optimization problem (\ref{min:sample}) is convex and can be minimized efficiently using standard convex solvers. By the change of variables formula, $f(x)=g(S^*(x))|\det(J S^*(x)|$, where $|\det(J S^*(x))|$ denotes the Jacobian determinant of the KR map $S^*$ from $f$ to $g$. Hence we take $f_n(x)=g(S^n(x))|\det(J S^n(x))|$ as an estimate of the target density. Panels (b)-(d) of Figure \ref{fig:ex} display the level curves of $f_n$ as the sample size $n$ increases from $1500$ to $5000$. 
The improving accuracy of the density estimates $f_n$ as the sample size grows is consistent with our convergence results established in Section \ref{sec:consistency}.

\begin{figure*}[tp]
\centering
\begin{tabular}{llll}
\includegraphics[height=35mm,width=0.22\textwidth]{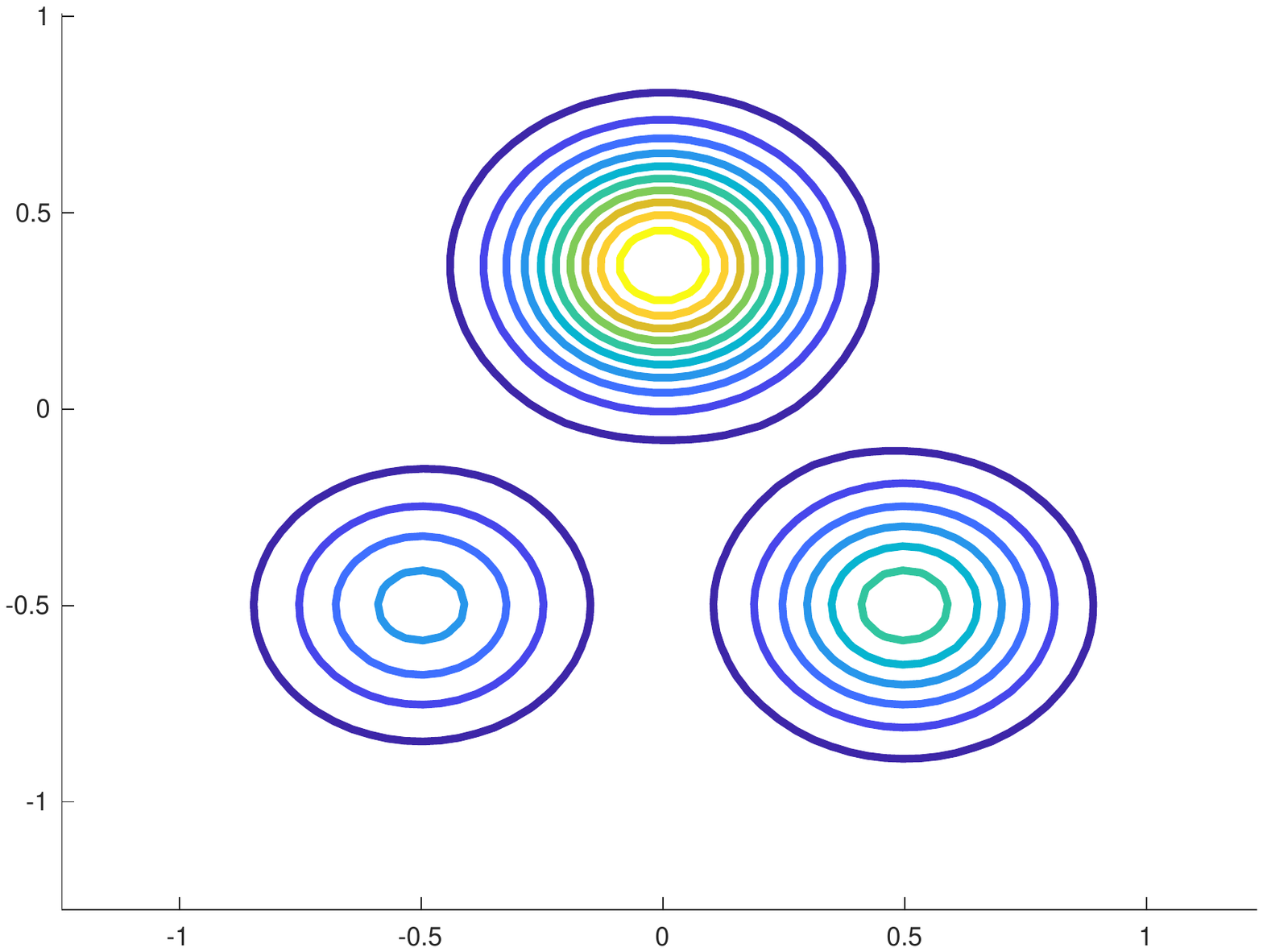}
&
\includegraphics[height=35mm,width=0.22\textwidth]{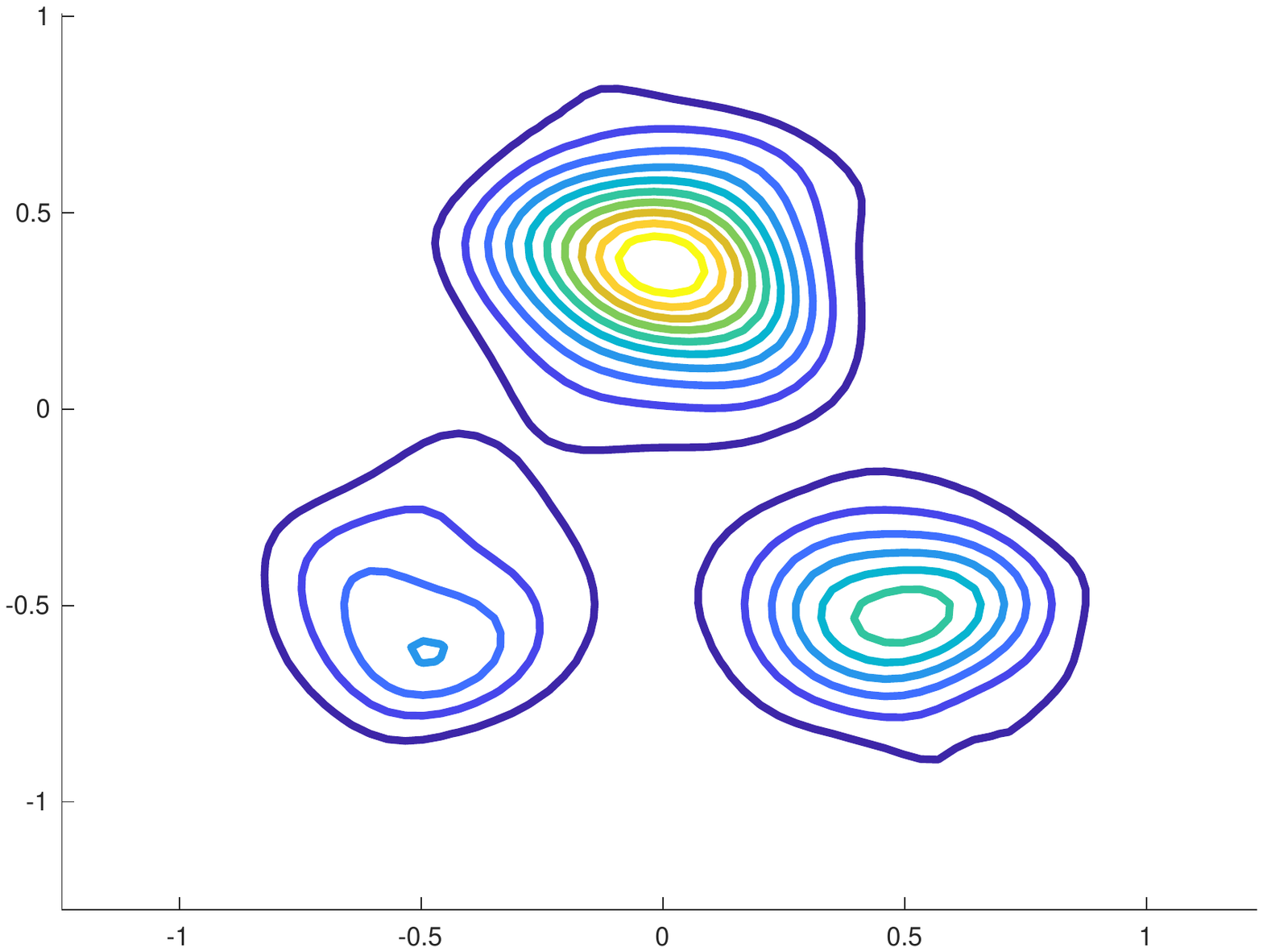}
&
\includegraphics[height=35mm,width=0.22\textwidth]{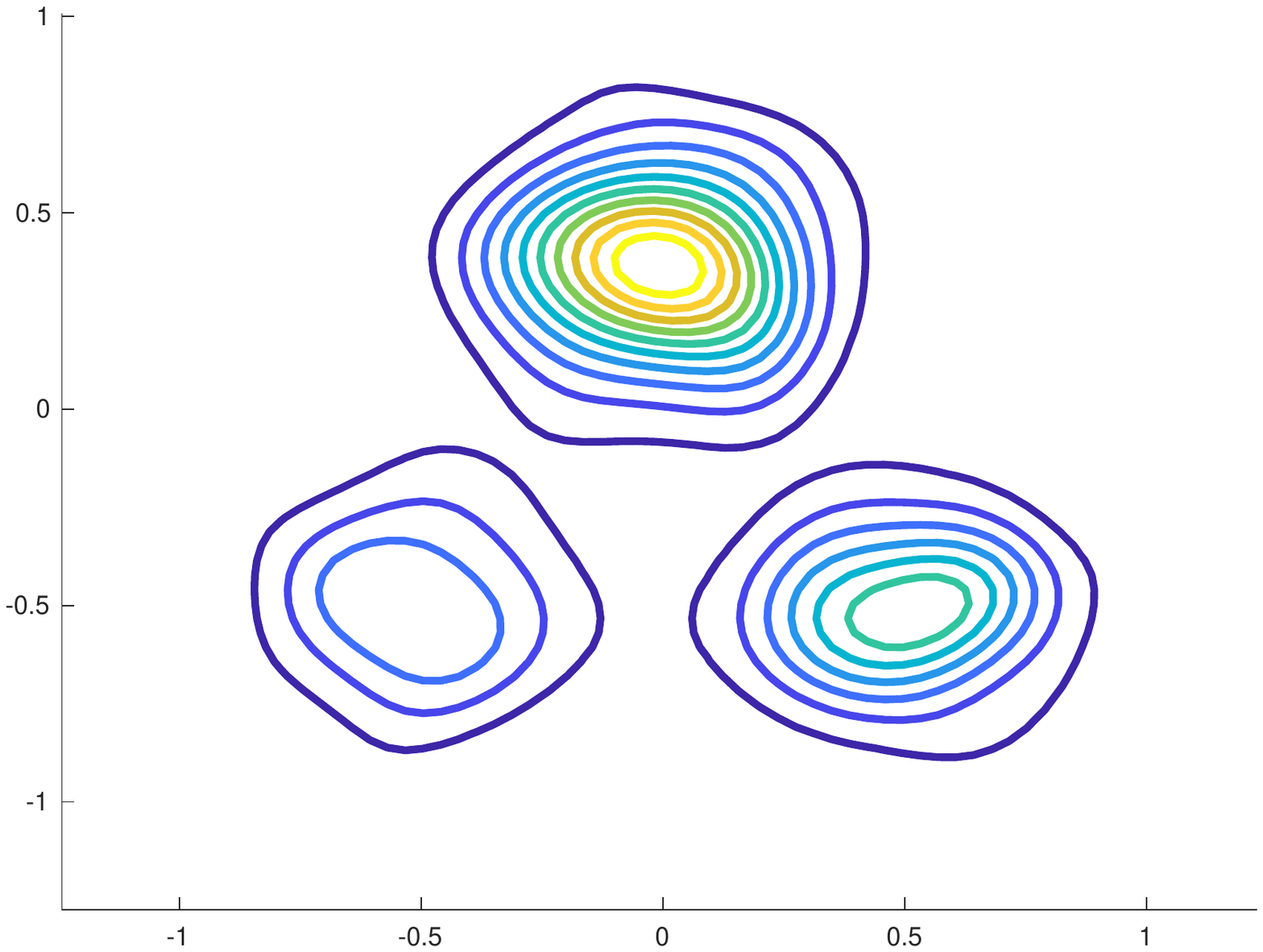}
&
\includegraphics[height=35mm,width=0.22\textwidth]{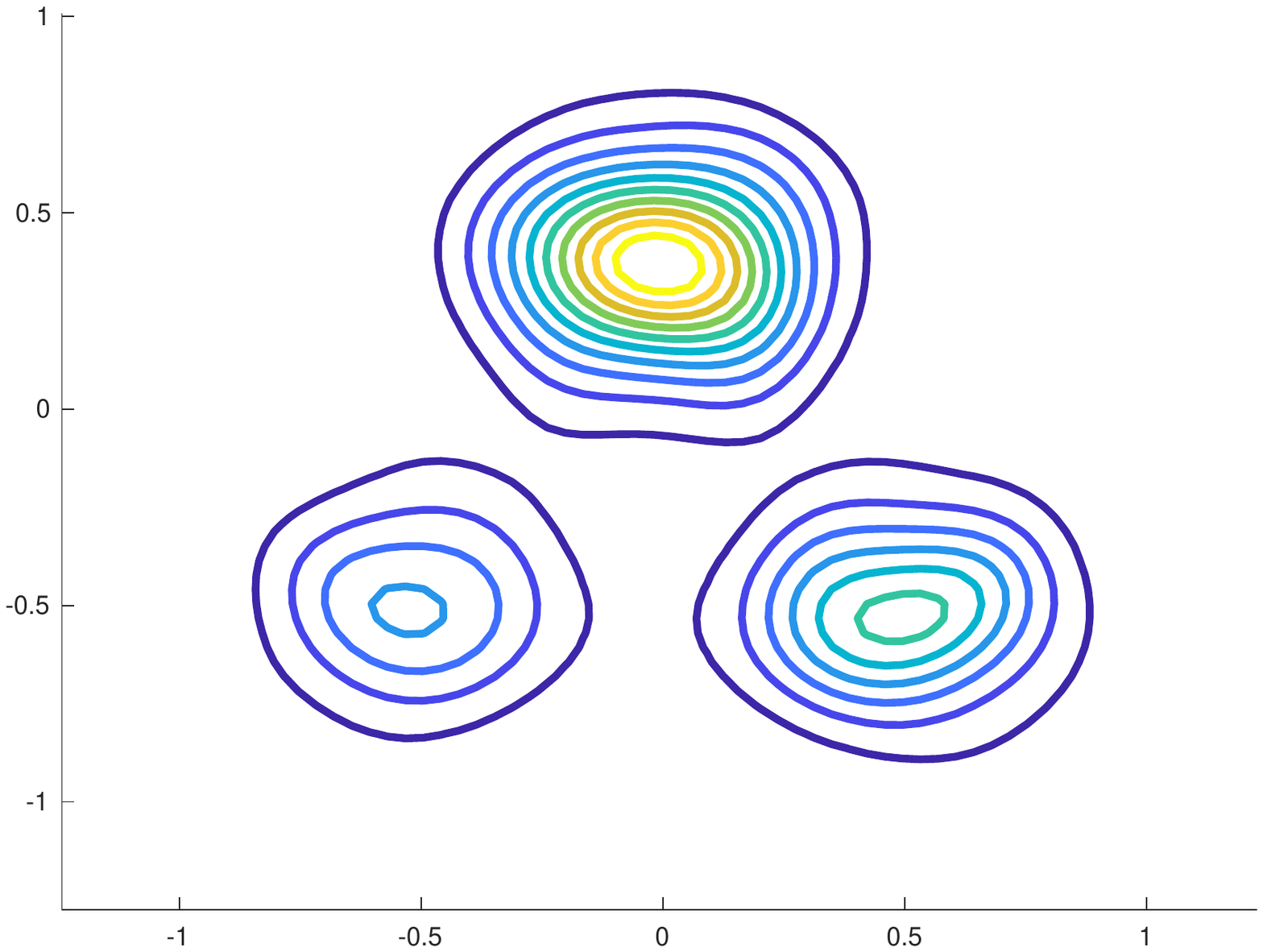} 
\\ 
\qquad (a) Truth & \qquad (b) $n=1500$ & \qquad (c) $n=3000$ & \qquad (d) $n=5000$
\end{tabular}
\caption{(a) Level curves of the normal mixture density $f(x)=g(S^*(x))|\det(JS^*(x))|$; (b)-(d) Level curves of the density estimates $f_n(x)=g(S^n(x))|\det(J S^n(x))|$ trained on $n=1500,3000,5000$ samples, respectively.}
\label{fig:ex}
\end{figure*}

\paragraph{Slow Rates.}

Without combining both a tail condition (e.g., common compact support) and a smooth regularity condition (e.g., uniformly bounded derivatives) on the function class $\mathcal{F}$ of the target density $f$, we show that convergence of any estimator $T^n$ of the direct map $T$ from $g$ to $f$ can occur at an arbitrarily slow rate.

\begin{theorem}
\label{thm-slow-rates}
Let $\mathcal{F}$ denote the class of infinitely continuously differentiable Lebesgue densities supported on the $d$-dimensional hypercube $[0,1]^d$ and uniformly bounded by 2, i.e.,
 $\sup_{f\in\mathcal{F}} \Vert f \Vert_\infty\le 2$. Let $g$ be any Lebesgue density on $\mathbb{R}^d$. 
 
 For any $n\in\mathbb{N}$, the minimax risk in terms of KL divergence is bounded below as
\[
\inf_{T^n}\sup_{f\in\mathcal{F}} \E_f[\text{KL}(f|f_n))] \ge 1/2,
\] 
where $T^n:\mathbb{R}^d\to[0,1]^d$ is any estimate of the KR map from $g$ to $f$ based on an iid sample of size $n$ from $f$, and $f_n = T^n\# g$ is the density estimate of $f$.
\label{thm:slow}
\end{theorem}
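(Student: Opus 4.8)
The plan is to bound the minimax risk below by a Bayes risk for a well-chosen prior on $\mathcal{F}$ and to show that this Bayes risk exceeds $1/2$ for \emph{every} $n$, which reflects the fact that a sample of any fixed size cannot reveal where in $[0,1]^d$ a density from $\mathcal{F}$ places its mass.

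\textbf{Reduction to a Bayes risk.} Since $T^n$ is an arbitrary measurable map, the family $\{T^n\#g\}$ is contained in the set of all data-driven Borel probability measures on $[0,1]^d$, so it suffices to lower bound $\inf_{\widehat\mu}\sup_{f\in\mathcal{F}}\E_f[\text{KL}(f|\widehat\mu)]$ over that larger set; enlarging the class of estimators only decreases the infimum. Introducing any prior $\pi$ supported on $\mathcal{F}$ gives the further lower bound $\inf_{\widehat\mu}\E_{f\sim\pi}\E_f[\text{KL}(f|\widehat\mu)]$. For a fixed value $x$ of the sample, minimizing $\E_{f\sim\pi(\cdot\mid x)}[\text{KL}(f|\widehat\mu)]$ over probability measures $\widehat\mu$ is a standard computation: writing $h(p):=-\int p\log p$ for the differential entropy and $\bar f_x:=\E_{f\sim\pi(\cdot\mid x)}[f]$ for the posterior-mean density (which is again a Lebesgue density on $[0,1]^d$), one has $\E_{f\sim\pi(\cdot\mid x)}[\text{KL}(f|\widehat\mu)] = -\E_{f\sim\pi(\cdot\mid x)}[h(f)] - \int \bar f_x\log\widehat\mu$, which by Gibbs' inequality is minimized at $\widehat\mu=\bar f_x$. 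Taking expectation over the sample, whose marginal law is the $\pi$-mixture of $f^{\otimes n}$, yields the exact identity
\[
\inf_{\widehat\mu}\E_{f\sim\pi}\E_f[\text{KL}(f|\widehat\mu)] \;=\; \E_{X^{1:n}}\big[h(\bar f_{X^{1:n}})\big] \;-\; \E_{f\sim\pi}\big[h(f)\big].
\]

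\textbf{The prior.} Fix $n$. Partition $[0,1]^d$ into $N=K^d$ congruent subcubes $C_1,\dots,C_N$ with $K$ large, to be chosen. For each $A\subseteq\{1,\dots,N\}$ with $|A|=N/2$, let $f_A\in\mathcal{F}$ be a $C^\infty$ density supported in $[0,1]^d$, bounded by $2$, equal (up to thin mollification layers) to $2$ on $\bigcup_{i\in A}C_i$ and to $0$ elsewhere; then $h(f_A)=-\log 2 + \varepsilon_K$ with $\varepsilon_K\to 0$ uniformly in $A$ as the mollification scale shrinks. Let $\pi$ be uniform over all such $A$, so that $\E_{f\sim\pi}[h(f)] = -\log 2 + \varepsilon_K$. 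For the other term, each observation falls in at most one subcube, so at most $n$ subcubes are ``observed''; given the observed cells $S$ (necessarily $S\subseteq A$, with $s:=|S|\le n$), the posterior is uniform over $\{A : S\subseteq A,\ |A|=N/2\}$, whence $\bar f_{X^{1:n}}$ equals $2$ on the observed cells and $2p_{N,s}$ on the remaining cells, with $p_{N,s}=(N/2-s)/(N-s)$. A one-line computation in the sharp case gives $h(\bar f_{X^{1:n}}) = -\tfrac{2s}{N}\log 2 - \big(1-\tfrac{2s}{N}\big)\log(2p_{N,s})$, which tends to $0$ uniformly over $s\le n$ as $K\to\infty$ with $n$ fixed. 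Combining, the Bayes risk is at least $\log 2 - \varepsilon_K - o_K(1)$; since $\log 2 > 1/2$, choosing $K$ large enough for the given $n$ makes this exceed $1/2$. As $n$ was arbitrary the theorem follows, and the construction is manifestly dimension-free.

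\textbf{Main obstacle.} The only substantive point is forcing $\E[h(\bar f_{X^{1:n}})]$ to be close to $0$, and this is exactly where the hypothesis on $\mathcal{F}$ (only $\|f\|_\infty\le 2$, with no lower bound and no control of derivatives) is used: it permits a prior that hides which half of the $N$ cells carries the mass, and when $N\gg n$ a sample of size $n$ identifies only a vanishing fraction of them, so the Bayes estimator is pulled back toward the uniform density and cannot exploit smoothness at all. The remaining steps --- choosing the mollification so that $f_A\in\mathcal{F}$ while keeping $h(f_A)$ near $-\log 2$, handling the boundary and transition layers and the exact normalization of $\bar f_{X^{1:n}}$, and tracking the $o_K(1)$ terms --- are routine.
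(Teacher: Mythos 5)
Your argument is correct in substance, but it takes a genuinely different route from the paper. The paper follows Birg\'e's scheme: it builds $2^r$ smooth sign-perturbations $1+\sum_i\delta_i h_i$ of the uniform density (oscillating only in $x_1$, so normalization is automatic), lower-bounds the Bayes \emph{total variation} risk under the uniform prior on signs via testing affinities between pairs differing in a single block, and only at the end converts to KL through Pinsker plus Jensen, which is exactly where the constant $2\cdot(1/2)^2=1/2$ comes from. You instead work natively in KL: you use the exact Bayes identity that the KL-optimal predictive density is the posterior mean $\bar f_x$ and that the Bayes risk equals $\E[h(\bar f_{X^{1:n}})]-\E_{f\sim\pi}[h(f)]$, and you choose a ``hide which half of the $N\gg n$ cells carries the mass'' prior so that the first term is forced near $0$ while the second is near $-\log 2$; your computation of $\bar f_{X^{1:n}}$ and its entropy in the idealized case checks out, as does the uniformity over $s\le n$ and the reduction from maps $T^n$ to arbitrary data-driven measures (enlarging the estimator class can only shrink the infimum, and non-absolutely-continuous estimates only increase KL, so the posterior-mean lower bound still applies). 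What your approach buys is a cleaner information-theoretic mechanism and a slightly stronger constant ($\log 2>1/2$) without any Pinsker loss; what it costs is the bookkeeping you defer as routine, which is real but manageable: the mollified cell densities cannot be exactly $2$ on half the cube and integrate to one, so you must perturb the plateau heights (or add a small floor) to renormalize, which slightly changes $h(f_A)$, allows observations to land outside $A$-cells or in transition layers, and hence perturbs the posterior-mean formula; all of these effects are $o_K(1)$ uniformly for fixed $n$, but they should be tracked explicitly, whereas the paper's perturbative construction sidesteps normalization entirely.
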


Theorem~\ref{thm-slow-rates} underscores the importance of going beyond universal approximation results to study the sample complexity and statistical performance of KR estimation.
The proof of this ``no free lunch'' theorem follows an idea of Birg\'{e}~\cite{birge}; see also~\cite{birge2,lugosi,devroye3,devroye2,devroye,gyorfi2002distribution}. We construct a family of densities in $\mathcal{F}$ built from rapidly oscillating perturbations of the uniform distribution on $[0,1]^d$.
Such densities are intuitively difficult to estimate. As is evident from the construction, however, a suitable uniform bound on the derivatives of the functions in $\mathcal{F}$ would preclude the existence of such pathological examples. As such, in what proceeds we aim to derive convergence rate bounds under the assumption that the target and source densities $f$ and $g$, respectively, are compactly supported and sufficiently regular, in the sense that they lie in a Sobolev space of functions with continuous and uniformly bounded partial derivatives up to order $s$ for some $s\ge 1$. For simplicity, our theoretical treatment assumes that $f$ and $g$ are fixed, but our convergence rate bounds as stated also bound the worst-case KL risk over any $f$ and $g$ lying in the $L^\infty$ Sobolev ball $\mathcal{F} = \left\{h:\sum_{|\mathbf{\alpha}|\le s}\|D^\mathbf{\alpha} h\|_\infty \le B\right\}$ for any fixed $B>0$.

Theorem \ref{thm-slow-rates} provides a lower bound on the minimax KL risk of KR estimation over the hypothesis class of target densities $\mathcal{F}$. 
The following stronger result, based on work of Devroye \cite{devroye3,devroye2,devroye}, demonstrates that convergence can still occur arbitrarily slowly for the task of estimating a single target density.

\begin{theorem}
Let $g$ be any Lebesgue density on $\mathbb{R}^d$ and $\{a_n\}_{n=1}^\infty$ any sequence converging to zero with $1/512\ge a_1\ge a_2\ge\cdots\ge 0.$ For every sequence of 
KR map estimates $T^n:\mathbb{R}^d\to\mathbb{R}^d$ based on a random sample of size $n$, there exists a target distribution $f$ on $\mathbb{R}^d$ such that
\[
\mathbb{E}_f[\text{KL}(f|f_n)] 
\ge a_n,
\]
where $f_n = T^n\# g$ is the density estimate of $f$.
\label{thm:slow2}
\end{theorem}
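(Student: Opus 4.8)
The plan is to reduce the statement to a classical pointwise lower bound for $L^1$ density estimation and then invoke a quantitative form of Devroye's ``no universal rate'' theorem. First, by Pinsker's inequality, for any Lebesgue densities $p,q$ on $\mathbb{R}^d$ one has $\mathrm{KL}(p|q)\ge \tfrac12\|p-q\|_1^2$, with the convention $\mathrm{KL}(p|q)=+\infty$ when $p\not\ll q$ (in which case the desired bound is trivial). Applying this with $p=f$ and $q=f_n=T^n\#g$, and then Jensen's inequality over the sample $X^1,\dots,X^n$ drawn i.i.d.\ from $f$, gives
\[
\E_f[\mathrm{KL}(f|f_n)]\ \ge\ \tfrac12\,\E_f\big[\|f-f_n\|_1^2\big]\ \ge\ \tfrac12\big(\E_f\|f-f_n\|_1\big)^2 .
\]
Hence it suffices to exhibit a target density $f$ with $\E_f\|f-f_n\|_1\ge\sqrt{2a_n}$ for every $n$; since $a_1\le 1/512$ we have $\sqrt{2a_1}\le 1/16$, which sits below the absolute constant threshold appearing in Devroye's construction (this is precisely why the hypothesis $a_1\le 1/512$ is imposed).

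Second, I would observe that the map $n\mapsto f_n$ is merely an arbitrary sequence of data-dependent density estimates: each $T^n$ is a measurable function of $X^1,\dots,X^n$, so $f_n=T^n\#g$ is a (random) Lebesgue density, and as such it is a legitimate input to Devroye's theorem. Moreover, the triangular-flow structure imposes no restriction on which densities are achievable as $f_n$, since for any prescribed density $h$ the Kn\"othe--Rosenblatt construction of Section~\ref{sec:krmap} applied between $g$ and $h$ furnishes a monotone triangular map $T$ with $T\#g=h$. Thus the estimation problem here is exactly as hard as unconstrained $L^1$ density estimation. I would then invoke Devroye's theorem on arbitrarily slow global convergence \cite{devroye3,devroye2,devroye}: for every sequence $\{f_n\}$ of density estimates and every non-increasing $\{b_n\}$ with $b_n\to0$ and $b_1$ below the relevant universal constant, there exists a density $f$ on $\mathbb{R}^d$ with $\E_f\|f-f_n\|_1\ge b_n$ for all $n$. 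Taking $b_n=\sqrt{2a_n}$ and combining with the display above yields $\E_f[\mathrm{KL}(f|f_n)]\ge a_n$ for all $n$, which is the claim.

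If a self-contained argument is preferred, one reconstructs Devroye's mechanism directly: for each $n$ build a large finite family of densities that are mutually $2\sqrt{2a_n}$-separated in $L^1$, obtained as rapidly oscillating perturbations of a fixed reference density (e.g.\ the uniform law on $[0,1]^d$, or a Gaussian if strict positivity is wanted) at a dyadic resolution $k_n$ chosen so fine that $n$ samples cannot reliably identify the active hypothesis (an Assouad/Fano-type argument, in the same spirit as the Birg\'e construction used for Theorem~\ref{thm-slow-rates}); one then nests these families across $n$ and passes to a limit to obtain a single $f$ that remains hard at every scale simultaneously. The genuinely delicate step is this diagonalization --- ensuring the nested perturbations do not cancel and that the ``hard at resolution $k_n$'' property is preserved under the limit so that the bound holds for \emph{all} $n$ (not merely infinitely often) and for a \emph{single} $f$ rather than in a minimax sense --- and it is exactly the content imported from Devroye's work; the remaining ingredients (the Pinsker reduction, the constant bookkeeping $1/512\mapsto 1/16$, and the observation that the KR constraint is vacuous here) are routine.
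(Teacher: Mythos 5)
Your proposal follows the same overall strategy as the paper: reduce the KL risk to an expected $L^1$/total-variation risk via Pinsker plus Jensen, then import Devroye's ``individual density, arbitrarily slow rate'' theorem, with exactly the paper's constant bookkeeping ($a_n=b_n^2/2$, so $a_1\le 1/512$ corresponds to $b_1\le 1/16$). The one substantive difference is how the dimension $d$ is handled. You invoke Devroye's theorem directly for densities on $\mathbb{R}^d$; the form the paper actually cites (Problem 7.5 in Devroye et al.\ 1996) is a statement about densities on $\mathbb{R}$, and the paper bridges the gap with a short extra argument: it takes $h_n=\pi_1\# f_n$, the first marginal of the estimate, as a univariate density estimate, obtains from Devroye a hard univariate density $h$ with $2\E[\mathrm{TV}(h,h_n)]\ge b_n$, applies Pinsker/Jensen in one dimension, and then sets the target to be the product $f=h^{\otimes d}$, concluding with the chain rule (data-processing) of relative entropy, $\mathrm{KL}(f|f_n)\ge\mathrm{KL}(h|h_n)$. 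So if the multivariate version of Devroye's theorem you assert is indeed available in the cited sources, your argument is complete and slightly more direct; if only the univariate statement is citable, the missing half-page is precisely this marginalization-plus-product-target step, and your fallback sketch (rebuilding Devroye's nested/diagonalized construction in $\mathbb{R}^d$) is considerably heavier than necessary. Two minor remarks: your observation that the triangular structure is ``vacuous'' is not needed for a lower bound (it suffices that $f_n=T^n\#g$ is \emph{some} density estimate, with $\mathrm{KL}=+\infty$ when $T^n\#g$ fails to be absolutely continuous), and your Pinsker constant and the threshold $\sqrt{2a_1}\le 1/16$ check out.
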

\section{Statistical Consistency}
\label{sec:consistency}
Setting the stage for the theoretical results, let us introduce the main assumptions.
\begin{assumption}
The Lebesgue densities $f$ and $g$ have convex compact supports $\mathcal{X},\mathcal{Y}\subset\mathbb{R}^d$, respectively.
\label{assumption:compact}
\end{assumption}

\begin{assumption}
The densities $f,g$ are bounded away from 0, i.e., $\inf_{x\in\mathcal{X},y\in\mathcal{Y}}\{f(x),g(y)\}>0$
\label{assumption:convex}
\end{assumption}

\begin{assumption}
Let $s\ge 1$ be a positive integer. The densities $f$ and $g$ are $s$-smooth on their supports, in the sense that
\[
D^\alpha f(x) := \frac{\partial^{|\alpha|}}{\partial x_1^{\alpha_1}\cdots\partial x_d^{\alpha_d}}f(x)
\]
is continuous for every multi-index $\mathbf{\alpha}=(\alpha_1,\ldots,\alpha_d)\in\mathbb{Z}^d_+$ satisfying
$|\mathbf{\alpha}| := \sum_{k=1}^d \alpha_k \le s$
and similarly for $g$. 
\label{assumption:smooth}
\end{assumption}

It is well known that the KR map $S^*$ from $f$ to $g$ is as smooth as the densities $f$ and $g$, but not smoother \cite{otam}. As such, under Assumptions \ref{assumption:compact}-\ref{assumption:smooth} we can restrict our attention from $\mathcal{T}$, the set of monotone non-decreasing upper triangular maps, to the smaller function class of monotone upper triangular maps that are $s$-smooth, of which the KR map $S^*$ is an element. That is, we can limit our search for an estimator $S^n$ solving (\ref{min:sample}) to a space of functions with more structure. This restriction is crucial to establishing a rate of convergence of the estimator $S^n$, as we can quantitatively bound the complexity of spaces of smooth maps. We discuss these developments in further detail below. Proofs of all results are included in the Supplement.

\subsection{Upper bounds on metric entropy}
\label{sec:entropy}
We first derive useful estimates of the metric entropy of function classes previously introduced. Assumptions \ref{assumption:compact}-\ref{assumption:smooth} allow us focus on smooth subsets of the class of monotone upper triangular maps $\mathcal{T}$.

\begin{definition} 
Let $M>0$. For $\mathfrak{s}\in\mathbb{Z}^d_+$ and $k\in[d]$, let $\tilde{\mathfrak{s}}_k=(s_k+1,s_{k+1},\ldots,s_d)$. Define $\mathcal{T}(\mathfrak{s},d,M)\subset\mathcal{T}$ as the convex subset of strictly increasing upper triangular maps $S:\mathcal{X}\to\mathcal{Y}$ satisfying:
\begin{enumerate}
    \item $\inf_{k\in[d],x\in\mathcal{X}}D_kS_k(x)\ge 1/M$, 
    \label{LB}
    \item $\|D^\alpha S_k\|_\infty\le M$ for all $k\in[d]$ and $\alpha_{k:d}\preceq\tilde{\mathfrak{s}}_k$.
    \label{UB}
\end{enumerate}
For $s\in\mathbb{N}$, we also define the homogeneous smoothness class $\mathcal{T}(s,d,M)=\mathcal{T}((s,\ldots,s),d,M)$.
\label{def:tsdm}
\end{definition}

Condition \ref{LB} guarantees that the Jacobian term in (\ref{min:sample}) is bounded, and condition \ref{UB} guarantees smoothness of the objective. In Section \ref{sec:anisotropic} below we consider densities with anisotropic smoothness, in which case the number of continuous derivatives $s_k$ varies with the coordinate $x_k$. For simplicity and clarity of exposition, we first focus on the case when $f$ and $g$ are smooth in a homogeneous sense, as in Assumption \ref{assumption:smooth}, and we work in the space $\mathcal{T}(s,d,M)$. As remarked above, the KR map $S^*$ from $f$ to $g$ lies in $\mathcal{T}(s,d,M^*)$ 
under Assumptions \ref{assumption:compact}-\ref{assumption:smooth}
when $M^*$ is sufficiently large. 
The same is true of the direct map $T^*$ from $g$ to $f$. In fact, all of the results stated here for the sampling map $S^*$ also hold for the direct map $T^*$, possibly with minor changes (although the proofs are generally more involved). For brevity, we mainly discuss $S^*$ and direct the interested reader to the Supplement.
Henceforth, we consider estimators $S^n$ lying in $\mathcal{T}(s,d,M^*)$ that minimize the objective in (\ref{min:sample}). We leave the issue of model selection, i.e., determining a sufficiently large $M^*$ such that $\mathcal{T}(s,d,M^*)$ contains the true KR map $S^*$, for future work. In the Supplement, we calculate explicit quantitative bounds on the complexity of this space as measured by the metric entropy in the $d$-dimensional $L^\infty$ norm 
$\|S\|_{\infty,d} := \max_{k\in[d]}\|S_k\|_\infty$.
The compactness of $(\overline{\mathcal{T}(s,d,M^*)},\|\cdot\|_{\infty,d})$ derived as a corollary of this result is required to establish the convergence of a sequence of estimators $S^n$ to $S^*$, and the entropy bound on the corresponding class of Kullback-Leibler loss functions over $\mathcal{T}(s,d,M^*)$ in Proposition \ref{prop:entbound} below allows us to go further by deriving bounds on the rate of convergence in KL divergence. This result builds off known entropy estimates for function spaces of Besov type \cite{birge2,birge,nickl}.

\begin{definition}
For a map
$S\in\mathcal{T}$,
define the loss function $\psi_S:\mathcal{X}\to\mathbb{R}$ by
\begin{align*}
\psi_S(x) 
&= \log f(x) - \log g(S(x)) - \log|\det(JS(x))|\\
&=\log f(x)-\log g(S(x)) - \sum_{k=1}^d \log D_kS_k(x).
\end{align*}
We also define the class $\Psi(s,d,M^*)$ of loss functions over $\mathcal{T}(s,d,M^*)$ as
\[
\Psi(s,d,M^*) := \{\psi_S:S\in\mathcal{T}(s,d,M^*)\}.
\]
\label{def:psi}
\end{definition}

By (\ref{min:pop}) we have $\E[\psi_S(X)] = \text{KL}(S\#f|g)$, where $X\sim f$. Similarly, the sample average of $\psi_S$ is the objective in (\ref{min:sample}). Hence, to derive finite sample bounds on the expected KL loss, we must study the sample complexity of the class $\Psi(s,d,M^*)$.
Define $N(\epsilon,\Psi(s,d,M^*),\|\cdot\|_{\infty})$ as the $\epsilon$-covering number of $\Psi(s,d,M^*)$ with respect to the uniform norm $\|\cdot\|_{\infty}$, and the metric entropy
\[
H(\epsilon,\Psi(s,d,M^*),\|\cdot\|_{\infty}) = \log N(\epsilon,\Psi(s,d,M^*),\|\cdot\|_{\infty}).
\]

\begin{prop}
Under Assumptions \ref{assumption:compact}-\ref{assumption:smooth}, 
the metric entropy of $\Psi(s,d,M^*)$ in $L^\infty(\mathcal{X})$ is bounded as
\[
H(\epsilon, \Psi(s,d,M^*),\|\cdot\|_\infty) 
\lesssim \epsilon^{-d/s}.
\]
Consequently, $\Psi(s,d,M^*)$ is totally bounded and therefore precompact in $L^\infty(\mathcal{X})$.
\label{prop:entbound}
\end{prop}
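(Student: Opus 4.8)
The plan is to reduce the covering problem for the loss class $\Psi(s,d,M^*)$ to covering problems for $\mathcal{T}(s,d,M^*)$ in the norm $\|\cdot\|_{\infty,d}$ and for the associated classes of diagonal partial derivatives, and then to invoke classical (Kolmogorov--Tikhomirov type) metric entropy estimates for Hölder-type balls on bounded domains \cite{birge,birge2,nickl}.

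\emph{Step 1: a Lipschitz estimate for $S\mapsto\psi_S$.} The term $\log f(x)$ appears in every $\psi_S$ and does not depend on $S$, so it can be ignored when computing covering numbers. By Assumptions \ref{assumption:compact}--\ref{assumption:convex}, $g$ is bounded away from $0$ on the compact set $\mathcal{Y}$, and by Assumption \ref{assumption:smooth} (with $s\ge1$) it is $C^1$ there; hence $\log g$ is Lipschitz on $\mathcal{Y}$, say with constant $L_g<\infty$. Moreover every $S\in\mathcal{T}(s,d,M^*)$ obeys $1/M^*\le D_kS_k(x)\le M^*$ for all $k$ and $x$ by conditions \ref{LB}--\ref{UB} of Definition \ref{def:tsdm}, and $t\mapsto\log t$ is $M^*$-Lipschitz on $[1/M^*,M^*]$. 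Therefore, if $u\in\mathcal{T}(s,d,M^*)$ and, for each $j\in[d]$, $w^{(j)}\in\mathcal{H}_j:=\{D_jS_j:S\in\mathcal{T}(s,d,M^*)\}$, then the surrogate function $\phi(x):=\log f(x)-\log g(u(x))-\sum_{j}\log w^{(j)}(x)$ satisfies
\[
\|\psi_S-\phi\|_\infty\ \le\ L_g\sqrt{d}\,\|S-u\|_{\infty,d}\ +\ M^*\sum_{k=1}^d\|D_kS_k-w^{(k)}\|_\infty .
\]
Note $\log g(u(x))$ and $\log w^{(j)}(x)$ are well defined since $u$ maps into $\mathcal{Y}$ and $w^{(j)}$ takes values in $[1/M^*,M^*]$. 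Hence an $\epsilon$-cover of $\Psi(s,d,M^*)$ in $\|\cdot\|_\infty$ can be assembled, by taking products of nets, from an $O(\epsilon)$-cover of $\mathcal{T}(s,d,M^*)$ in $\|\cdot\|_{\infty,d}$ together with an $O(\epsilon)$-cover in $\|\cdot\|_\infty$ of each $\mathcal{H}_j$; the surrogate elements need not lie in $\Psi(s,d,M^*)$.

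\emph{Step 2: entropy of the building blocks.} Fix $k\in[d]$. Each $k$th component $S_k$ of a map in $\mathcal{T}(s,d,M^*)$ is a function of the $m_k:=d-k+1$ variables $(x_k,\dots,x_d)$ on (the projection of) the bounded convex set $\mathcal{X}$. By condition \ref{UB}, $\|D^\alpha S_k\|_\infty\le M^*$ whenever $\alpha_{k:d}\preceq\tilde{\mathfrak{s}}_k=(s+1,s,\dots,s)$; in particular this holds for every $\alpha$ with $|\alpha|\le s$, and the boundedness of the order-$s$ derivatives makes the order-$(s-1)$ derivatives Lipschitz, so $\mathcal{S}_k:=\{S_k:S\in\mathcal{T}(s,d,M^*)\}$ lies in a bounded subset of a smoothness-$s$ Hölder ball over a bounded domain in $\mathbb{R}^{m_k}$. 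Writing $h_k=D_kS_k$, we likewise have $\|D^\alpha h_k\|_\infty=\|D^{\alpha+e_k}S_k\|_\infty\le M^*$ whenever $\alpha_{k:d}\preceq(s,\dots,s)$ --- this is exactly the point of the extra derivative $s_k+1$ in $\tilde{\mathfrak{s}}_k$ --- so $\mathcal{H}_k$ too lies in a bounded subset of a smoothness-$s$ ball in $\mathbb{R}^{m_k}$. The classical entropy estimates then give, for a constant $C=C(s,d,M^*,\mathcal{X})$,
\[
H(\epsilon,\mathcal{S}_k,\|\cdot\|_\infty)\le C\,\epsilon^{-m_k/s},\qquad H(\epsilon,\mathcal{H}_k,\|\cdot\|_\infty)\le C\,\epsilon^{-m_k/s}.
\]
Since $\|S-u\|_{\infty,d}=\max_k\|S_k-u_k\|_\infty$, a product over the $d$ components yields $H(\epsilon,\mathcal{T}(s,d,M^*),\|\cdot\|_{\infty,d})\le\sum_{k=1}^d C\,\epsilon^{-m_k/s}$.

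\emph{Step 3: assembly.} Combining Steps 1 and 2 and using $m_k=d-k+1\le d$,
\[
H(\epsilon,\Psi(s,d,M^*),\|\cdot\|_\infty)\ \lesssim\ \sum_{k=1}^d\epsilon^{-m_k/s}\ \lesssim\ d\,\epsilon^{-d/s}\ \lesssim\ \epsilon^{-d/s}.
\]
Since the right-hand side is finite for every $\epsilon>0$, $\Psi(s,d,M^*)$ is totally bounded, and precompactness in $L^\infty(\mathcal{X})$ follows by completeness. The delicate points, where I would spend the most effort, are in Step 2: one must verify that the diagonal derivatives $D_kS_k$ genuinely lie in a Hölder ball of the \emph{same} order $s$ (which is why Definition \ref{def:tsdm} carries the extra $s_k+1$ in the $k$th slot of $\tilde{\mathfrak{s}}_k$, and why condition \ref{LB} is needed so that $\log D_kS_k$ is Lipschitz in $D_kS_k$), and one must invoke the \emph{sharp}, logarithm-free form of the Hölder-ball entropy bound --- a crude coefficient-discretization argument would lose a factor $\log(1/\epsilon)$ and only yield $H\lesssim\epsilon^{-d/s}\log(1/\epsilon)$. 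The Lipschitz estimate of Step 1 and the product bound over the $d$ coordinates are routine.
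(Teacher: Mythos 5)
Your proposal is correct, but it takes a genuinely different route from the paper's own proof. The paper argues in one step: using Definition \ref{def:tsdm} (the lower bound $D_kS_k\ge 1/M^*$ and the uniform bounds on the derivatives appearing in condition \ref{UB}) together with Assumptions \ref{assumption:compact}--\ref{assumption:smooth}, it observes that each $\psi_S$ is itself $s$-smooth with derivatives up to order $s$ uniformly bounded over $\mathcal{T}(s,d,M^*)$ (smoothness of $\log$ away from $0$ plus the chain rule), so that $\Psi(s,d,M^*)$ is a bounded subset of $C_s(\mathcal{X})$, and then applies the Nickl--P\"otscher bound (Proposition \ref{prop:nickl}) directly to $\Psi(s,d,M^*)$ in dimension $d$. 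You never differentiate $\psi_S$: you exploit only the first-order (Lipschitz) dependence of $\psi_S$ on $(S,D_1S_1,\ldots,D_dS_d)$ --- which is exactly where condition \ref{LB} and the lower bound on $g$ enter --- and transfer covers of the component classes $\mathcal{S}_k$ and $\mathcal{H}_k=\{D_kS_k\}$ through that Lipschitz map; the entropy of those building blocks is where the extra derivative $s_k+1$ in $\tilde{\mathfrak{s}}_k$ is used, precisely as you flag. Your route buys three things: it avoids Fa\`a di Bruno--type control of higher derivatives of the composite loss, it yields the marginally finer bound $\sum_{k=1}^d\epsilon^{-(d-k+1)/s}$ (same order $\epsilon^{-d/s}$, dominated by $k=1$), and it parallels the component-wise covering argument the paper itself uses for $\mathcal{T}(\mathfrak{s},d,M)$ in Proposition \ref{prop:KRmultientbound} and the anisotropic decomposition behind Lemma \ref{lemma:anisorate}. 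The paper's route buys brevity: once the uniform $H^{s,\infty}(\mathcal{X})$ bound on $\Psi(s,d,M^*)$ is checked, the entropy bound is immediate with no net-assembly bookkeeping. Two cosmetic caveats only: your surrogate centers give an external cover, so pass to $\epsilon/2$ if internal centers are required, and the Lipschitz constant of $\log g$ uses compactness and convexity of $\mathcal{Y}$, both guaranteed by Assumption \ref{assumption:compact}.
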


Here, for functions $a(\epsilon),b(\epsilon)$ (or sequences $a_n,b_n$) we write $a(\epsilon)\lesssim b(\epsilon)$ (resp. $a_n\lesssim b_n$) if $a(\epsilon)\le Cb(\epsilon)$ (resp. $a_n\le Cb_n$) for all $\epsilon$ (resp. $n$) for some constant $C>0$.
For brevity, in this result and those that follow, we suppress scalar prefactors that do not depend on the sample size $n$. As our calculations in the Supplement demonstrate, the constant prefactors in this and subsequent bounds are polynomial in the $\|\cdot\|_\infty$ radius $M^*$ and exponential in the dimension $d$. This dependence is similar to others
in the literature concerning sample complexity of transport map estimators \cite{rigollet}.

\subsection{Statistical consistency}
\label{sec:klconv}

For the sake of concision, we introduce empirical process notation~\cite{dudley67,dudley68,vdv-jaw,hds}. Let $\mathcal{H}$ be a collection of functions from $\mathcal{X}\subseteq\mathbb{R}^d\to\mathbb{R}$ measurable and square integrable with respect to $P$, a Borel probability measure on $\mathbb{R}^d$. Let $P_n$ denote the empirical distribution of an iid random sample $X_1,\ldots,X_n$ drawn from $P$. For a function $h\in\mathcal{H}$ we write $Ph := \E[h(X)]$, $P_nh := \frac{1}{n}\sum_{i=1}^n h(X_i)$, and $\|P_n-P\|_{\mathcal{H}} := \sup_{h\in\mathcal{H}}|(P_n-P)h|$.
As discussed in \cite{vdv-jaw}, measurability of the supremum process is a delicate subject. For simplicity, we gloss over this technicality in our theoretical treatment, but note that all results are valid as stated with outer expectation $\E^*$ replacing $\E$ where relevant. 

Let $P$ denote the probability measure with density $f$. With these new definitions, the sample average minimization objective in (\ref{min:sample}) can be expressed $P_n\psi_S$,
while the population counterpart in (\ref{min:pop}) reads as $P\psi_S =\text{KL}(S\# f|g)$.
Suppose the estimator $S^n$ of the sampling map $S^*$ is a random element in $\mathcal{T}(s,d,M^*)$ obtained as a near-minimizer of $P_n\psi_S$. Let
\[
R_n=P_n\psi_{S^n} - \inf_{S\in\mathcal{T}(s,d,M)}P_n\psi_S \ge 0
\]
denote the approximation error of our optimization algorithm.
Our goal is to bound the loss $P\psi_{S^n}$. 
Fix $\epsilon > 0$ and let $\tilde{S}$ be any deterministic element of $\mathcal{T}(s,d,M^*)$ that nearly minimizes $P\psi_S$, i.e., suppose
\[
P\psi_{\tilde{S}} \le \inf_{S\in\mathcal{T}(s,d,M^*)}P\psi_S + \epsilon.
\]
It follows that
\begin{align*}
P\psi_{S^n}-\inf_{S\in\mathcal{T}(s,d,M^*)} P\psi_{S} &\le P\psi_{S^n}-P\psi_{\tilde{S}}+\epsilon\\
&= [P\psi_{S^n}-P\psi_{\tilde{S}}]+[P_n\psi_{\tilde{S}}-P_n\psi_{S^n}]+R_n+\epsilon \\
&\le 2\|P_n-P\|_{\Psi(s,d,M^*)}+R_n+\epsilon.
\end{align*}
As $\epsilon > 0$ was arbitrary, we conclude that
\begin{equation}
P\psi_{S^n}-\inf_{S\in\mathcal{T}(s,d,M^*)} P\psi_{S} \le 2\|P_n-P\|_{\Psi(s,d,M^*)}+ R_n.
\label{eq:lossbound}
\end{equation}
Controlling the deviations of the empirical process $\|P_n-P\|_{\Psi(s,d,M^*)}$ as in Lemma \ref{lemma:EPrate} allows us to bound the loss of the estimator $S^n$ and establish consistency and a rate of convergence in KL divergence.

\begin{lemma}
Under Assumptions \ref{assumption:compact}-\ref{assumption:smooth}, we have
\[
\E\|P_n-P\|_{\Psi(s,d,M^*)} \lesssim \begin{cases}
n^{-1/2}, & d < 2s, \\
n^{-1/2}\log n, &d=2s, \\
n^{-s/d}, &d>2s.
\end{cases}
\]
\label{lemma:EPrate}
\end{lemma}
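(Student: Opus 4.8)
The plan is to pass from the empirical process to a symmetrized (Rademacher) process, bound the latter by a truncated chaining (Dudley-type) argument, feed in the metric-entropy estimate of Proposition~\ref{prop:entbound}, and finally optimize the truncation level separately in each of the three regimes. The first ingredient is a finite uniform envelope for $\Psi(s,d,M^*)$: under Assumptions~\ref{assumption:compact}--\ref{assumption:smooth} the densities $f,g$ are continuous on their compact supports and, by Assumption~\ref{assumption:convex}, bounded away from $0$ (and from $\infty$ by continuity); moreover every $S\in\mathcal{T}(s,d,M^*)$ maps $\mathcal{X}$ into $\mathcal{Y}$ with $1/M^*\le D_kS_k\le M^*$, so each of $\log f$, $\log g(S(\cdot))$ and $\log D_kS_k$ is bounded in absolute value by a constant depending only on $M^*,d,f,g$. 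Hence $\sup_{\psi\in\Psi(s,d,M^*)}\|\psi\|_\infty\le B<\infty$.

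Next, by the standard symmetrization inequality $\E\|P_n-P\|_{\Psi(s,d,M^*)}\le 2\,\E\|R_n\|_{\Psi(s,d,M^*)}$ with $R_n h=\frac1n\sum_{i=1}^n\varepsilon_ih(X_i)$. Conditionally on $X_1,\dots,X_n$, the map $h\mapsto R_nh$ is sub-Gaussian with respect to the random $L_2(P_n)$ metric, and any $\epsilon$-net in $\|\cdot\|_\infty$ is also an $\epsilon$-net in $L_2(P_n)$, so Proposition~\ref{prop:entbound} gives $\log N(\epsilon,\Psi(s,d,M^*),L_2(P_n))\le H(\epsilon,\Psi(s,d,M^*),\|\cdot\|_\infty)\lesssim\epsilon^{-d/s}$ uniformly in the sample. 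Carrying out the chaining down to a cutoff $\delta\in(0,B]$ — matching successive $2^{-j}$-nets, applying the maximal inequality for Rademacher sums at each link, and bounding the residual below scale $\delta$ by the uniform estimate $|R_n(\psi-\pi_\delta\psi)|\le\|\psi-\pi_\delta\psi\|_\infty\le\delta$ — yields
\[
\E\|P_n-P\|_{\Psi(s,d,M^*)}\;\lesssim\;\delta\;+\;\frac{1}{\sqrt n}\int_{\delta}^{B}\epsilon^{-d/(2s)}\,d\epsilon .
\]
(Equivalently, since an $L^\infty$ cover of size $N$ furnishes $N$ brackets of width $2\epsilon$, one could invoke the bracketing maximal inequality with the same entropy input.)

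It then remains to optimize over $\delta$. When $d<2s$ the exponent $d/(2s)<1$, the integral converges as $\delta\downarrow 0$, and the bound is $O(n^{-1/2})$. When $d=2s$ the integral equals $\log(B/\delta)$, and taking $\delta\asymp n^{-1/2}$ gives $O(n^{-1/2}\log n)$. When $d>2s$ the integral is of order $\delta^{1-d/(2s)}$ (dominated by its lower limit), and balancing $\delta\asymp n^{-1/2}\delta^{1-d/(2s)}$ forces $\delta\asymp n^{-s/d}$, giving the rate $n^{-s/d}$.

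I expect the delicate point to be the $d>2s$ regime. A one-step discretization at a single scale $\delta$ (net plus Hoeffding) would only give $\delta+\delta^{-d/(2s)}n^{-1/2}$, optimized at the \emph{suboptimal} rate $n^{-s/(2s+d)}$; it is precisely the full chained sum — which replaces $\delta^{-d/(2s)}$ by the integral $\int_\delta^B\epsilon^{-d/(2s)}d\epsilon\asymp\delta\cdot\delta^{-d/(2s)}$, a gain of a factor $\delta$ in the entropy term — that is needed to reach $n^{-s/d}$ (and, likewise, to obtain the full $n^{-1/2}$ when $d<2s$). Some care is also needed to confirm that the sub-cutoff residual contributes only at order $\delta$, that the constant prefactors are absorbed in $\lesssim$ per the stated convention, and that the suprema are handled as discussed in the text (outer expectations where relevant).
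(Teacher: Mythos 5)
Your proposal is correct and follows essentially the same route as the paper: the paper also feeds the $\epsilon^{-d/s}$ entropy bound of Proposition~\ref{prop:entbound} into uniform-entropy maximal inequalities, citing Theorem 2.14.2 of van der Vaart--Wellner for $d<2s$ and Dudley's truncated entropy-integral bound (with the same $\delta$-optimization, $\delta\asymp n^{-1/2}$ at $d=2s$ and $\delta\asymp n^{-s/d}$ for $d>2s$) rather than re-deriving the chaining via symmetrization as you do. Your bookkeeping of the envelope, the sub-cutoff residual, and the regime-by-regime optimization all match the paper's computation.
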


\begin{rmq}
Consider the case where $2s>d$, for example when  both $f,~g$ are the densities of the standard normal distribution, then we have by the central limit theorem that for any $S\in \mathcal{T}(s,d,M^*)$, $\sqrt{n} \left[P_n\psi_S-P\psi_S\right]$ converge in law towards a centered Gaussian distribution. Therefore we have that for any $S\in \mathcal{T}(s,d,M^*)$
\begin{align*}
  \E\|P_n-P\|_{\Psi(s,d,M^*)}\geq \sup\limits_{S\in\mathcal{T}(s,d,M^*)}\E\left[|P_n\psi_S-P\psi_S|\right]\gtrsim n^{-1/2}
\end{align*}
which shows that our results are tight at least in the smooth regime.
\end{rmq}

The proof of Lemma \ref{lemma:EPrate} relies on metric entropy integral bounds established by Dudley in \cite{dudley67} and van der Vaart and Wellner in \cite{vdv-jaw}.
Although we have phrased the sample complexity bounds in Lemma \ref{lemma:EPrate} in terms of the expectation of the empirical process $\|P_n-P\|_{\Psi(s,d,M)}$, high probability bounds can be obtained similarly~\cite{hds}. 
Hence, the following KL consistency theorem is obtained as a direct result of Lemma \ref{lemma:EPrate} and the risk decomposition (\ref{eq:lossbound}). 
\begin{theorem}
Suppose Assumptions \ref{assumption:compact}-\ref{assumption:smooth} hold. 
Let $S^n$ be a near-optimizer of the functional $S\mapsto P_n\psi_S$ on $\mathcal{T}(s,d,M^*)$ with remainder $R_n$ given by
\[
R_n = P_n\psi_{S^n} - \inf_{S\in\mathcal{T}(s,d,M^*)} P_n\psi_S = o_P(1).
\]
Then $P\psi_{S^n} \stackrel{p}{\to}P\psi_{S^*}=0$, i.e., $S^n$ is a consistent estimator of $S^*$ with respect to KL divergence. 

Moreover, if $R_n$ is
bounded in expectation as 
\[
\E[R_n] \lesssim \E\|P_n-P\|_{\Psi(s,d,M^*)},
\]
then the expected KL divergence of $S^n$ is bounded as
\[
\E[P\psi_{S^n}] \lesssim \begin{cases}
n^{-1/2}, & d < 2s, \\
n^{-1/2}\log n, &d=2s, \\
n^{-s/d}, &d>2s.
\end{cases}
\]
\label{thm:KLconv}
\end{theorem}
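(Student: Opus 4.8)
The plan is to specialize the deterministic risk decomposition (\ref{eq:lossbound}) to the restricted class $\mathcal{T}(s,d,M^*)$, after first identifying the population minimum, and then feed in the empirical process bound of Lemma \ref{lemma:EPrate}.

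First I would pin down $\inf_{S\in\mathcal{T}(s,d,M^*)}P\psi_S$. By the identity $P\psi_S=\text{KL}(S\#f\,|\,g)$ recorded after Definition \ref{def:psi} (which follows from (\ref{min:pop})), we have $P\psi_S\ge 0$ for every $S\in\mathcal{T}$, with equality exactly at $S=S^*$ by uniqueness of the KR map in $\mathcal{T}$. Under Assumptions \ref{assumption:compact}--\ref{assumption:smooth} the map $S^*$ is $s$-smooth with partial derivatives bounded above and with $D_kS^*_k$ bounded below away from $0$ (the quantitative regularity of the KR map recorded earlier, citing \cite{otam}), so $S^*\in\mathcal{T}(s,d,M^*)$ once $M^*$ is taken large enough. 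Hence $\inf_{S\in\mathcal{T}(s,d,M^*)}P\psi_S=P\psi_{S^*}=0$. Since $S^n\in\mathcal{T}(s,d,M^*)\subset\mathcal{T}$ as well, $P\psi_{S^n}=\text{KL}(S^n\#f\,|\,g)\ge 0$, and (\ref{eq:lossbound}) collapses to the sandwich
\[
0\;\le\;P\psi_{S^n}\;\le\;2\|P_n-P\|_{\Psi(s,d,M^*)}+R_n .
\]

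For the consistency claim I would control the right-hand side in probability. Lemma \ref{lemma:EPrate} gives $\E\|P_n-P\|_{\Psi(s,d,M^*)}\to 0$ in every regime, so $\|P_n-P\|_{\Psi(s,d,M^*)}=o_P(1)$ by Markov's inequality (alternatively, total boundedness of $\Psi(s,d,M^*)$ in $L^\infty$ from Proposition \ref{prop:entbound} already makes the class Glivenko--Cantelli, which suffices for this step), and $R_n=o_P(1)$ by hypothesis. The displayed sandwich then forces $P\psi_{S^n}\stackrel{p}{\to}0=P\psi_{S^*}$. For the rate, I would take expectations in the same inequality, use the hypothesis $\E[R_n]\lesssim\E\|P_n-P\|_{\Psi(s,d,M^*)}$ to get $\E[P\psi_{S^n}]\lesssim\E\|P_n-P\|_{\Psi(s,d,M^*)}$, and substitute the three-regime bound of Lemma \ref{lemma:EPrate}. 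Measurability of $P\psi_{S^n}$ is handled, as elsewhere in the paper, by passing to outer expectations.

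The genuinely substantive inputs --- the entropy bound (Proposition \ref{prop:entbound}) and the resulting empirical process rate (Lemma \ref{lemma:EPrate}) --- are already established, so the theorem itself is essentially bookkeeping on top of (\ref{eq:lossbound}). The one point that deserves care, and the only place where an external fact is invoked, is the membership $S^*\in\mathcal{T}(s,d,M^*)$ that yields the identification $\inf_{S\in\mathcal{T}(s,d,M^*)}P\psi_S=0$: this rests on the quantitative smoothness and non-degeneracy of the KR map under Assumptions \ref{assumption:compact}--\ref{assumption:smooth}, which must be cited (or proved in the Supplement) rather than taken for granted. Everything downstream is a one-line consequence of the sandwich inequality and Lemma \ref{lemma:EPrate}.
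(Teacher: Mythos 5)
Your proposal is correct and follows essentially the same route as the paper: identify $\inf_{S\in\mathcal{T}(s,d,M^*)}P\psi_S=P\psi_{S^*}=0$ via the membership $S^*\in\mathcal{T}(s,d,M^*)$ (established in the Supplement as Lemma \ref{lemma:KRmultismooth}, exactly the point you flag as needing citation), plug into the risk decomposition (\ref{eq:lossbound}), and invoke Lemma \ref{lemma:EPrate}. If anything, your handling of the consistency claim is slightly cleaner than the paper's, which deduces $P\psi_{S^n}\stackrel{p}{\to}0$ by applying Markov's inequality to the bound on $\E[P\psi_{S^n}]$ (thus implicitly leaning on the stronger expectation hypothesis on $R_n$), whereas your sandwich argument with $\|P_n-P\|_{\Psi(s,d,M^*)}=o_P(1)$ and $R_n=o_P(1)$ yields consistency under the weaker hypothesis exactly as stated.
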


\begin{rmq}
Note that, as we work on a compact set, we also obtain the rates of convergence of $S^n$ with respect to the Wasserstein metric:
\[
\E[W(S^n\# f,g)] \lesssim \begin{cases}
n^{-1/4}, & d < 2s, \\
n^{-1/4}\log n, &d=2s, \\
n^{-s/2d}, &d>2s.
\end{cases}
\]
where for any probability measures $\mu$ and $\nu$ on $\mathbb{R}^d$ with finite first moments, we denote:
\begin{align*}
    W(\mu,\nu):=\sup\left\{\left| \int hd\mu - \int h d\nu\right|~:|h(x)-h(y)|\leq \Vert x-y\Vert\right\}.
\end{align*}
\begin{proof}
Let $\mu$ and $\nu$ be two probability measures on $\mathcal{Z}:=\mathcal{X}\cup\mathcal{Y}\subset\mathbb{R}^d$. Then as both $\mathcal{X}$ and $\mathcal{Y}$ are compact, we have the following well-known inequalities \cite{villani}:
\begin{align*}
    W(\mu,\nu)\leq \text{diam}(\mathcal{Z})\text{TV}(\mu,\nu)\leq \frac{\text{diam}(\mathcal{Z})}{\sqrt{2}}\sqrt{\text{KL}(\mu|\nu)}\; ,
\end{align*}
then applying Jensen's inequality to our results yields control in the Wasserstein sense.
\end{proof}
\end{rmq}

\paragraph{Uniform convergence.}
Although Theorem \ref{thm:KLconv} only establishes a weak form of consistency in terms of the KL divergence, we leverage this result to prove strong consistency, in the sense of uniform convergence of $S^n$ to $S^*$ in probability, in Theorem \ref{thm:uniformconv}. The proof requires understanding the regularity of the KL divergence with respect to the topology induced by the $\|\cdot\|_{\infty,d}$ norm. 
In the Supplement, we establish that KL is lower semicontinuous in $\|\cdot\|_{\infty,d}$. Our result relies on the weak lower semicontinuity of KL proved by Donsker and Varadhan using their well known dual representation in Lemma 2.1 of \cite{donsker-varadhan}.

\begin{theorem}
Suppose Assumptions \ref{assumption:compact}-\ref{assumption:smooth} hold.
Let $S^n$ be any near-optimizer of the functional $S\mapsto P_n\psi_S$ on $\mathcal{T}(s,d,M^*)$, i.e., suppose
\[
P_n\psi_{S^n} = \inf_{S\in\mathcal{T}(s,d,M^*)} P_n\psi_S+o_P(1).
\]
Then $\|S^n-S^*\|_{\infty,d}\stackrel{p}{\to}0$, i.e, $S^n$ is a consistent estimator of $S^*$ with respect to the uniform norm $\|\cdot\|_{\infty,d}$ .
\label{thm:uniformconv}
\end{theorem}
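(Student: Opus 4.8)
The plan is to run the classical argmin-consistency argument for $M$-estimators, in the version that needs only lower semicontinuity (not continuity) of the population objective. Four ingredients are required, three of which are already in hand: (i) $P\psi_{S^n}\stackrel{p}{\to}0$, which is precisely the first conclusion of Theorem \ref{thm:KLconv}; (ii) the compactness of $(\overline{\mathcal{T}(s,d,M^*)},\|\cdot\|_{\infty,d})$ recorded in Section \ref{sec:entropy}; and (iii) the fact that $S^*$ is the unique element of $\mathcal{T}$ with $\text{KL}(S\#f|g)=0$, hence the unique minimizer of $S\mapsto P\psi_S$ over $\mathcal{T}$, with minimal value $0$. The fourth ingredient is the lower semicontinuity of $S\mapsto P\psi_S=\text{KL}(S\#f|g)$ for the $\|\cdot\|_{\infty,d}$ topology; I would obtain it, as in the Supplement, from the weak lower semicontinuity of KL (the Donsker--Varadhan dual representation, Lemma 2.1 of \cite{donsker-varadhan}) together with continuity of the push-forward $S\mapsto S\#f$ under uniform convergence on the compact support $\mathcal{X}$. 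As a preliminary, I would also check via an Arzel\`{a}--Ascoli argument, using the uniform derivative bounds of Definition \ref{def:tsdm}, that every $\|\cdot\|_{\infty,d}$-limit of maps in $\mathcal{T}(s,d,M^*)$ is still a strictly increasing upper triangular map, so that $\overline{\mathcal{T}(s,d,M^*)}\subseteq\mathcal{T}$ and $P\psi_S=\text{KL}(S\#f|g)$ is well defined and nonnegative there.

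\textbf{Key step (well-separation of $S^*$).} Fix $\epsilon>0$ and set $K_\epsilon:=\{S\in\overline{\mathcal{T}(s,d,M^*)}:\|S-S^*\|_{\infty,d}\ge\epsilon\}$. Being a closed subset of a compact space, $K_\epsilon$ is compact, and a lower semicontinuous function attains its infimum on a compact set, so $S\mapsto P\psi_S$ attains $m(\epsilon):=\inf_{S\in K_\epsilon}P\psi_S$ at some $S_\epsilon\in K_\epsilon$. Since $\|S_\epsilon-S^*\|_{\infty,d}\ge\epsilon>0$ forces $S_\epsilon\neq S^*$, and $S^*$ is the unique zero of $\text{KL}(\,\cdot\,\#f|g)$ on $\mathcal{T}$, I conclude $m(\epsilon)=P\psi_{S_\epsilon}=\text{KL}(S_\epsilon\#f|g)>0$.

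\textbf{Conclusion.} Since $S^n\in\mathcal{T}(s,d,M^*)\subseteq\overline{\mathcal{T}(s,d,M^*)}$, on the event $\{\|S^n-S^*\|_{\infty,d}\ge\epsilon\}$ we have $S^n\in K_\epsilon$, hence $P\psi_{S^n}\ge m(\epsilon)$. Therefore
\[
\mathbb{P}\!\left(\|S^n-S^*\|_{\infty,d}\ge\epsilon\right)\le\mathbb{P}\!\left(P\psi_{S^n}\ge m(\epsilon)\right)\longrightarrow 0 \quad\text{as }n\to\infty,
\]
using $P\psi_{S^n}\stackrel{p}{\to}0$ and $m(\epsilon)>0$. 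Since $\epsilon>0$ was arbitrary, $\|S^n-S^*\|_{\infty,d}\stackrel{p}{\to}0$.

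\textbf{Main obstacle.} The genuinely delicate part is entirely deterministic: the lower semicontinuity of $S\mapsto\text{KL}(S\#f|g)$ in $\|\cdot\|_{\infty,d}$. A direct attack through the explicit formula for $\psi_S$ is awkward, because the Jacobian term $\sum_k\log D_kS_k$ need not be continuous under mere uniform convergence of the components $S_k$; the way around this is to discard the explicit formula and use the variational (Donsker--Varadhan) characterization of KL, which is lower semicontinuous for weak convergence, together with the fact that $\|\cdot\|_{\infty,d}$-convergence on the compact set $\mathcal{X}$ forces weak convergence of the push-forwards. Once this, and the routine closure and compactness bookkeeping for $\overline{\mathcal{T}(s,d,M^*)}$, are in place, the probabilistic argument above is immediate.
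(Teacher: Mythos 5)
Your proposal is correct and follows essentially the same route as the paper's proof: KL consistency from Theorem \ref{thm:KLconv}, compactness of $\overline{\mathcal{T}(s,d,M^*)}$ in $\|\cdot\|_{\infty,d}$, lower semicontinuity of $S\mapsto P\psi_S$ via the Donsker--Varadhan representation, and uniqueness of $S^*$ combine to give well-separation, after which the event inclusion $\{\|S^n-S^*\|_{\infty,d}\ge\epsilon\}\subseteq\{P\psi_{S^n}\ge m(\epsilon)\}$ finishes the argument. The only (harmless) difference is that you establish well-separation by the direct method (a lower semicontinuous function attains its infimum on the compact set $K_\epsilon$), whereas the paper argues by extracting a convergent subsequence and deriving a contradiction; your explicit check that $\overline{\mathcal{T}(s,d,M^*)}\subseteq\mathcal{T}$ is a welcome extra bit of care.
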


\begin{proof}
Recall that $\overline{\mathcal{T}(s,d,M^*)}$ is compact with respect to $\|\cdot\|_{\infty,d}$. 
Together, lower semicontinuity of KL in $\|\cdot\|_{\infty,d}$ 
and compactness guarantee that the KR map $S^*$, which is the unique minimizer of $S\mapsto P\psi_S$ over $\overline{\mathcal{T}(s,d,M^*)}$, is well-separated in $\mathcal{T}(s,d,M^*)$. 
In other words, for any $\epsilon>0$,
\[
P\psi_{S^*} < \inf_{S\in\mathcal{T}(s,d,M^*):\|S-S^*\|_{\infty,d}\ge\epsilon} P\psi_S.
\]
Indeed, suppose to the contrary that we can find a deterministic sequence $\tilde{S}^n\in \mathcal{T}(s,d,M^*)$ satisfying $\|\tilde{S}^n-S^*\|_{\infty,d}\ge\epsilon$ such that $P\psi_{\tilde{S}^n}\to P\psi_{S^*}$. Since $\mathcal{T}(s,d,M^*)$ is precompact with respect to $\|\cdot\|_{\infty,d}$, we can extract a subsequence $\tilde{S}^{n_k}$ converging to some $\tilde{S}^*\in\overline{\mathcal{T}(s,d,M^*)}$, which necessarily satisfies $\|\tilde{S}^*-S^*\|_{\infty,d}\ge\epsilon$. By lower semicontinuity of $S\mapsto P\psi_S$ with respect to $\|\cdot\|_{\infty,d}$, it follows that
\[
P\psi_{\tilde{S}^*} \le \liminf_{k\to\infty}P\psi_{\tilde{S}^{n_k}} = P\psi_{S^*}.
\]
We have a contradiction, since the KR map $S^*$ is the unique minimizer of $S\mapsto P\psi_S$. This proves the claim that $S^*$ is a well-separated minimizer.

Now fix $\epsilon>0$ and define
\[
\delta = \inf_{S\in\mathcal{T}(s,d,M^*):\|S-S^*\|_{\infty,d}\ge\epsilon}P\psi_S-P\psi_{S^*}>0.
\]
It follows that 
\[
\{\|S^n-S^*\|_{\infty,d}\ge\epsilon\}\subseteq\{P\psi_{S^n}-P\psi_{S^*}\ge\delta\}.
\]
We have shown $P\psi_{S^n}\stackrel{p}{\to}P\psi_{S^*}$ in Theorem \ref{thm:KLconv}. As a consequence,
\[
P(\|S^n-S^*\|_{\infty,d}\ge\epsilon) \le P(P\psi_{S^n}-P\psi_{S^*}\ge\delta)\to 0.
\]
As $\epsilon>0$ was arbitrary, we have $\|S^n-S^*\|_{\infty,d}\stackrel{p}{\to}0$.
\end{proof}

\paragraph{Inverse consistency.}
\label{sec:invconv}
We have proved consistency of the estimator $S^n$ of the sampling map $S^*$, pushing forward the target $f$ to the source $g$.
We can also get the consistency and an identical rate of convergence of $T^n=(S^n)^{-1}$ estimating $T^*=(S^*)^{-1}$, although the proof of the analog to Theorem \ref{thm:uniformconv} establishing uniform consistency of $T^n$ is much more involved. We defer to the Supplement for details.

\section{Log-Concavity, Dimension Ordering, Jacobian Flows}
\label{sec:fast}
\paragraph{Sobolev-type rates under log-concavity.} Suppose the source density $g$ is log-concave. Then $\min_{S\in\mathcal{T}(s,d,M)} P_n\psi_S$
is a convex problem; moreover if  $g$ is strongly log-concave, strong convexity follows. The user can choose a convenient $g$, such as a multivariate Gaussian with support truncated to a compact convex set. In this case, we can establish a bound on the rate of convergence of $S^n$ to $S^*$ in the $L^2$ Sobolev-type norm
\[
\|S\|^2_{H_f^{1,2}(\mathcal{X})} := \sum_{k=1}^d\left\{\|S_k\|^2_{L^2_f(\mathcal{X})}+\|D_kS_k\|^2_{L^2_f(\mathcal{X})}\right\}.
\]
Here $\|\cdot\|_{L^2_f(\mathcal{X})}$ denotes the usual $L^2$ norm integrating against the target density $f$.
\begin{theorem}
Suppose Assumptions \ref{assumption:compact}-\ref{assumption:smooth} hold. Assume further that $g$ is
$m$-strongly log-concave. 
Let $S^n$ be a near-optimizer of the functional $S\mapsto P_n\psi_S$ on $\mathcal{T}(s,d,M^*)$ with remainder $R_n$ satisfying
\[
\E[R_n] 
= \E\left\{P_n\psi_{S^n} - \inf_{S\in\mathcal{T}(s,d,M^*)} P_n\psi_S\right\} 
\lesssim \E\|P_n-P\|_{\Psi(s,d,M^*)}.
\]
Then $S^n$ converges to the true sampling map $S^*$ with respect to the norm $H^{1,2}_f(\mathcal{X})$ norm with rate
\[
\E\|S^n-S^*\|_{H^{1,2}_f(\mathcal{X})}^2 \lesssim \begin{cases}
n^{-1/2}, & d < 2s, \\
n^{-1/2}\log n, &d=2s, \\
n^{-s/d}, &d>2s,
\end{cases}
\]
\label{thm:KRrate}
\end{theorem}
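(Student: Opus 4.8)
The plan is to upgrade the KL bound of Theorem \ref{thm:KLconv} to a Sobolev-type bound by exploiting strong convexity of the objective along the line segment joining $S^n$ and $S^*$ in the convex set $\mathcal{T}(s,d,M^*)$. The key observation is that $P\psi_{S^n} - P\psi_{S^*} = \mathrm{KL}(S^n\#f\,|\,g) = \E_f[\psi_{S^n}(X)] - \E_f[\psi_{S^*}(X)]$ is already controlled at the stated rate by (\ref{eq:lossbound}) together with Lemma \ref{lemma:EPrate} and the assumption on $\E[R_n]$; so it suffices to prove a quadratic lower bound of the form
\[
P\psi_{S^n} - P\psi_{S^*} \gtrsim \|S^n - S^*\|^2_{H^{1,2}_f(\mathcal{X})}.
\]
First I would fix a realization and write $S_t = (1-t)S^* + tS^n \in \mathcal{T}(s,d,M^*)$ for $t\in[0,1]$, which lies in the class by convexity (Definition \ref{def:tsdm}). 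Define $\phi(t) = P\psi_{S_t}$. Since $S^*$ minimizes $P\psi_S$ over the convex set, $\phi'(0)\ge 0$, and by Taylor's theorem $\phi(1) - \phi(0) \ge \tfrac12 \inf_{t\in[0,1]}\phi''(t)$. So the crux is a uniform lower bound on $\phi''(t)$ of the form $\phi''(t) \gtrsim \|S^n - S^*\|^2_{H^{1,2}_f(\mathcal{X})}$.

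To compute $\phi''(t)$, I would differentiate $\psi_{S_t}(x) = \log f(x) - \log g(S_t(x)) - \sum_k \log D_k (S_t)_k(x)$ twice in $t$, using $\tfrac{d}{dt}S_t = S^n - S^* =: V$ (a fixed upper triangular perturbation with $V(x)\in\mathbb{R}^d$). The $-\log g$ term contributes $V(x)^\top (-\nabla^2 \log g(S_t(x))) V(x) \ge m\|V(x)\|^2$ by $m$-strong log-concavity of $g$, giving, after integrating against $f$, a term $\gtrsim m\sum_k \|V_k\|^2_{L^2_f(\mathcal{X})}$. The $-\sum_k \log D_k (S_t)_k$ term contributes $\sum_k \big(D_k V_k(x) / D_k (S_t)_k(x)\big)^2 \ge \sum_k (D_k V_k(x))^2 / M^2$ pointwise, using condition \ref{UB} of Definition \ref{def:tsdm} to upper bound the denominator (actually condition \ref{LB} is the lower bound; here I need $D_k(S_t)_k \le M$ which follows from condition \ref{UB} with $\alpha = e_k$), giving a term $\gtrsim \sum_k \|D_k V_k\|^2_{L^2_f(\mathcal{X})}$. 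Adding these and recalling the definition of $\|\cdot\|_{H^{1,2}_f(\mathcal{X})}$ yields $\phi''(t) \gtrsim \|V\|^2_{H^{1,2}_f(\mathcal{X})}$ uniformly in $t$, with a constant depending on $m$ and $M^*$ but not on $n$. Combining with $\phi(1)-\phi(0) \ge \tfrac12 \phi''(\xi)$ gives the desired quadratic lower bound, and then taking expectations and invoking Theorem \ref{thm:KLconv} (which applies verbatim since its hypothesis on $\E[R_n]$ is assumed here) finishes the proof.

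The main obstacle I anticipate is making the computation of $\phi''(t)$ fully rigorous despite the fact that $D_k(S_t)_k$ is only defined Lebesgue-a.e. and the maps $S_t$ take values in $\mathcal{Y}$ where $g>0$ but $\nabla^2\log g$ need only be bounded in a measurable sense under Assumptions \ref{assumption:compact}--\ref{assumption:smooth}; one must check that differentiation under the integral sign is justified (dominated convergence, using the uniform bounds in Definition \ref{def:tsdm} and Assumptions \ref{assumption:convex}--\ref{assumption:smooth} to produce integrable majorants on the compact support $\mathcal{X}$), and that $S^n - S^*$ is a legitimate direction, i.e., $S_t\in\mathcal{T}(s,d,M^*)$ for all $t$, which is exactly convexity of the class. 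A secondary point is that $g$ is only assumed $m$-strongly log-concave on $\mathcal{Y}$; since all arguments $S_t(x)$ lie in $\mathcal{Y}$ (as each $S_t$ maps $\mathcal{X}\to\mathcal{Y}$ by Definition \ref{def:tsdm}), the Hessian bound $-\nabla^2\log g \succeq mI$ is available exactly where it is needed. Everything else is the routine chain rule and the already-established rate for the KL gap.
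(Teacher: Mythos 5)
Your proposal is correct and follows essentially the same route as the paper: the paper's Lemma \ref{lemma:strongcvx} establishes $\min\{m,(M^*)^{-2}\}$-strong convexity of $S\mapsto P\psi_S$ in $\|\cdot\|_{H^{1,2}_f(\mathcal{X})}$ using exactly your two ingredients (strong log-concavity of $g$ for the $\|V\|^2_{L^2_f}$ term and the bound $D_kS_k\le M^*$ from Definition \ref{def:tsdm} for the $\|D_kV_k\|^2_{L^2_f}$ term), and the theorem then follows by the strong-convexity inequality at the minimizer $S^*$ combined with the KL rate of Theorem \ref{thm:KLconv}, just as you do. The only difference is cosmetic: the paper checks the first-order (gradient-monotonicity) form of strong convexity via the G\^ateaux derivative rather than your second-derivative-along-the-segment computation, which conveniently avoids the issue you flagged about $\nabla^2\log g$ possibly not existing pointwise when $s=1$.
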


With more work, we can establish Sobolev convergence rates of the same order in $n$ for $T^n=(S^n)^{-1}$ to $T^*=(S^*)^{-1}$, yet now in the appropriate norm $\|\cdot\|_{H^{1,2}_g(\mathcal{Y})}$; see details in the Supplement.
\begin{rmq}
Note that here we obtain the rates with respect to the $H^{1,2}_f(\mathcal{X})$ norm from which we deduce immediately similar rates for the $H^{1,p}_f(\mathcal{X})$ norm with $p\geq 2$. In addition, for fixed $p$, under Assumptions \ref{assumption:compact}-\ref{assumption:smooth}, the $H^{1,p}_f(\mathcal{X})$ norm is equivalent to $H^{1,p}(\mathcal{X})$ norm with the Lebesgue measure as the reference measure. Finally, here we show the rates of the strong consistency of our estimator with respect $H^{1,\cdot}_f(\mathcal{X})$ norm as we do not require any additional assumptions on the higher order derivatives of the maps living in $\mathcal{T}(s,d,M^*)$. Additional assumptions on these derivatives may lead to the convergence rates of higher order derivatives of our estimate with respect to the $H^{k,p}_f(\mathcal{X}),k>1$ norm which is beyond the scope of the present paper.
\end{rmq}

\paragraph{Dimension ordering.}
\label{sec:anisotropic}
Suppose now that the smoothness of the target density $f$ is anisotropic. That is, assume $f(x_1,\ldots,x_d)$ is $s_k$-smooth in $x_k$ for each $k\in[d]$. As there are $d!$ possible ways to order the coordinates, the question arises: how should we arrange $(x_1,\ldots,x_d)$ such that the estimator $S^n$ converges to the true KR map $S^*$ at the fastest possible rate? 
Papamakarios et al. (2017) provide a discussion of this issue in the context of autoregressive flows in Section 2.1 of \cite{maf}; they construct a simple 2D example in which the model fails to learn the target density if the wrong order of the variables is chosen.

This relates to choices made in neural architectures for normalizing flows on images and texts. Our results suggest here that one would rather start with the coordinates (i.e. data parts) that are the least smooth and make their way through the triangular construction to the most smooth ones.

We formalize the anisotropic smoothness of the KR map as follows.

\begin{assumption}
Let $\mathfrak{s}=(s_1,\ldots,s_d)\in \mathbb{Z}^d_+$ be a multi-index with $s_k\ge 1$ for all $k\in[d]$. The density $f$ is $\mathfrak{s}$-smooth on its support, in the sense that 
\[
D^\alpha f(x) := \frac{\partial^{|\alpha|}}{\partial x_1^{\alpha_1}\cdots\partial x_d^{\alpha_d}}f(x)
\]
exists and is continuous for every multi-index $\alpha=(\alpha_1,\ldots,\alpha_d)\in\mathbb{Z}^d_+$ satisfying $\alpha\preceq \mathfrak{s}$, i.e., $\alpha_k\le s_k$ for every $k\in[d]$. Furthermore, we assume that $g(y)$ is 
$(\|\mathfrak{s}\|_\infty,\ldots,\|\mathfrak{s}\|_\infty)$-smooth
with respect to $y=(y_1,\ldots,y_d)$ on $\mathcal{Y}$.
\label{assumption:multismooth}
\end{assumption}

As the source density $g$ is a degree of freedom in the problem, we are free to impose this assumption on $g$. Note that $(\|\mathfrak{s}\|_\infty,\ldots,\|\mathfrak{s}\|_\infty)$-smoothness of $g$ as defined in Assumption \ref{assumption:multismooth} is equivalent to $\|\mathfrak{s}\|_\infty$-smoothness of $g$ as defined in Assumption \ref{assumption:smooth}. The results that follow are slight variations on those in Sections \ref{sec:entropy} and \ref{sec:klconv} adapted to the anisotropic smoothness of the densities posited in Assumption \ref{assumption:multismooth}. 

Under Assumptions \ref{assumption:compact}, \ref{assumption:convex}, and \ref{assumption:multismooth}, there exists some $M^*>0$ such that the KR map $S^*$ from $f$ to $g$ lies in $\mathcal{T}(\mathfrak{s},d,M^*)$; see the Supplement for a proof.
We also define the class of loss functions
\[
\Psi(\mathfrak{s},d,M^*) = \{\psi_S:S\in\mathcal{T}(\mathfrak{s},d,M^*)\},
\]
which appear in the objective $P_n\psi_S$.
Hence, we can proceed as above to bound the metric entropy and obtain uniform convergence and Sobolev-type rates for estimators in the function class $\mathcal{T}(\mathfrak{s},d,M^*)$. Appealing to metric entropy bounds for anisotropic smoothness classes \cite[Proposition 2.2]{birge}, we have the following analog of Lemma \ref{lemma:EPrate}.

\begin{lemma}
For $k\in[d]$, let $d_k = d-k+1$ and $\sigma_k = d_k\left(\sum_{j=k}^d s_j^{-1}\right)^{-1}$.
Under Assumptions \ref{assumption:compact}, \ref{assumption:convex}, and \ref{assumption:multismooth}, 
we have
\[
\E\|P_n-P\|_{\Psi(\mathfrak{s},d,M^*)} \lesssim \sum_{k=1}^d c_{n,k}.
\]
where we define
\[
c_{n,k}= 
\begin{cases}
n^{-1/2}, & d_k < 2\sigma_k, \\
n^{-1/2}\log n, &d_k=2\sigma_k, \\
n^{-\sigma_k/d_k}, &d_k>2\sigma_k.
\end{cases}
\]
\label{lemma:anisorate}
\end{lemma}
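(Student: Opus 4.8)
The plan is to follow the template of Lemma~\ref{lemma:EPrate}: establish a metric entropy bound for the loss class $\Psi(\mathfrak{s},d,M^*)$ in $L^\infty(\mathcal{X})$, and then feed it into the Dudley-type chaining bound for the expected supremum of an empirical process indexed by a uniformly bounded class. The only genuinely new ingredient is the anisotropic entropy estimate, which replaces Proposition~\ref{prop:entbound} and draws on the anisotropic Besov-ball entropy bound of \cite[Proposition~2.2]{birge}; everything downstream is a term-by-term adaptation of the homogeneous argument.

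First I would prove the anisotropic analogue of Proposition~\ref{prop:entbound}, namely that under Assumptions~\ref{assumption:compact}, \ref{assumption:convex}, and \ref{assumption:multismooth},
\[
H(\epsilon,\Psi(\mathfrak{s},d,M^*),\|\cdot\|_\infty)\ \lesssim\ \sum_{k=1}^d \epsilon^{-d_k/\sigma_k},
\]
using the identity $\sum_{j=k}^d s_j^{-1}=d_k/\sigma_k$. Write $\psi_S(x)=\log f(x)-\log g(S(x))-\sum_{k=1}^d\log D_kS_k(x)$. The term $\log f$ is fixed and contributes nothing to the covering. For $S\in\mathcal{T}(\mathfrak{s},d,M^*)$, condition~\ref{UB} of Definition~\ref{def:tsdm} makes $S_k$ a $(s_k+1,s_{k+1},\ldots,s_d)$-smooth function of $(x_k,\ldots,x_d)$, so $D_kS_k$ is $(s_k,s_{k+1},\ldots,s_d)$-smooth in $d_k=d-k+1$ variables; since $1/M\le D_kS_k\le M$ by condition~\ref{LB}, $\log D_kS_k$ stays in the same anisotropic ball, and \cite[Proposition~2.2]{birge} bounds the metric entropy of the resulting class by $\epsilon^{-\sum_{j=k}^d s_j^{-1}}=\epsilon^{-d_k/\sigma_k}$. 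For $\log g(S(x))$: each $S_j$ with $j\le k$ is at least $s_k$-smooth in the direction $x_k$, and $g$ is $\|\mathfrak{s}\|_\infty$-smooth with $\|\mathfrak{s}\|_\infty\ge s_k$, so by the chain rule $x\mapsto g(S(x))$---and hence $\log g(S(x))$, since $g$ is bounded away from $0$---lies in an anisotropic class with smoothness at least $s_k$ in each direction $x_k$, whose entropy is therefore $\lesssim\epsilon^{-\sum_{k=1}^d s_k^{-1}}=\epsilon^{-d_1/\sigma_1}$, absorbed into the $k=1$ term. Since covering numbers of a sum of classes multiply, adding log-covering numbers gives the displayed bound, and in particular $\Psi(\mathfrak{s},d,M^*)$ is totally bounded in $L^\infty(\mathcal{X})$.

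Second, I would insert this bound into the chaining argument behind Lemma~\ref{lemma:EPrate}. The class $\Psi(\mathfrak{s},d,M^*)$ has a uniform envelope bounded by a constant depending only on $M^*$, $d$, and the densities (via Assumptions~\ref{assumption:compact}--\ref{assumption:convex} and condition~\ref{LB}), so Dudley's entropy integral applies, with the standard truncation-at-scale-$\delta$ refinement when the integral diverges (the residual below scale $\delta$ is deterministically at most $2\delta$, since the covering is in $\|\cdot\|_\infty$). By subadditivity of the square root, $\sqrt{\sum_{k=1}^d\epsilon^{-d_k/\sigma_k}}\le\sum_{k=1}^d\epsilon^{-d_k/(2\sigma_k)}$, so the entropy integral splits into $d$ pieces, the $k$th coinciding with the integral analyzed in Lemma~\ref{lemma:EPrate} with $(s,d)$ replaced by $(\sigma_k,d_k)$. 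That analysis yields the trichotomy defining $c_{n,k}$: the integral converges and gives $n^{-1/2}$ when $d_k<2\sigma_k$; it yields a logarithmic factor, hence $n^{-1/2}\log n$, when $d_k=2\sigma_k$; and when $d_k>2\sigma_k$, optimizing the truncation level at $\delta\asymp n^{-\sigma_k/d_k}$ gives $n^{-\sigma_k/d_k}$. Summing the $d$ contributions produces $\E\|P_n-P\|_{\Psi(\mathfrak{s},d,M^*)}\lesssim\sum_{k=1}^d c_{n,k}$.

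The main obstacle is the first step: carefully verifying that the anisotropic smoothness of the components $S_k$---each a function of a different number of variables with its own smoothness profile, and entering $\psi_S$ both through $\log D_kS_k$ and through the composition $g\circ S$---propagates to the claimed sum-of-powers entropy bound for $\psi_S$ on $\mathcal{X}$. This requires a bookkeeping argument that, for each coordinate direction $x_k$, tracks the minimal order of smoothness retained by each summand of $\psi_S$ after composition and differentiation, and then invokes \cite[Proposition~2.2]{birge} direction by direction; with this in hand the remaining steps are routine adaptations of the homogeneous case already carried out in Proposition~\ref{prop:entbound} and Lemma~\ref{lemma:EPrate}.
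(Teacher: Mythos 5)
Your proposal is correct in outline and reaches the stated bound, but it takes a genuinely different route from the paper. The paper's key step is a chain-rule decomposition of the relative entropy: using $f(x)=\prod_k f_k(x_k|x_{(k+1):d})$ and likewise for $g$, it writes $\psi_S=\sum_{k=1}^d\psi_S^k$ with $\psi_S^k(x)=\log f_k(x_k|x_{(k+1):d})-\log g_k(S_k(x)|S_{(k+1):d}(x))+\log D_kS_k(x)$, so that \emph{every} piece (including the $g$-dependent one) is a function of only the $d_k$ trailing variables with anisotropic smoothness $(s_k,\ldots,s_d)$; it then bounds $H(\epsilon,\Psi_k(\mathfrak{s},d,M),\|\cdot\|_\infty)\lesssim\epsilon^{-d_k/\sigma_k}$ for each $k$ (the analogue of Proposition \ref{prop:entbound}), applies the Dudley trichotomy to each class $\Psi_k$ separately to get $\E\|P_n-P\|_{\Psi_k}\lesssim c_{n,k}$, and sums by the triangle inequality. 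You instead keep the algebraic decomposition $\log f-\log g\circ S-\sum_k\log D_kS_k$, cover the full class $\Psi(\mathfrak{s},d,M^*)$ at once with entropy $\lesssim\sum_k\epsilon^{-d_k/\sigma_k}$, and run a single chaining argument with $\sqrt{\sum_k}\le\sum_k\sqrt{\phantom{x}}$; the composition term $g\circ S$ depends on all $d$ variables but contributes only at the exponent $\sum_{j=1}^d s_j^{-1}=d_1/\sigma_1$, i.e.\ at the $k=1$ rate $c_{n,1}$, which is already the dominant term of $\sum_k c_{n,k}$, so the final bound is unchanged. What each approach buys: the paper's conditional-density decomposition makes the role of triangularity explicit and keeps every summand low-dimensional (it is also the decomposition underlying the ordering discussion), at the cost of invoking smoothness of the conditionals $g_k(\cdot|\cdot)$ via Lemma \ref{lemma:multismooth}; your route avoids conditionals entirely but must verify, via Fa\`a di Bruno bookkeeping, that $\{g\circ S:S\in\mathcal{T}(\mathfrak{s},d,M^*)\}$ is bounded in the anisotropic norm with index $\mathfrak{s}$ before applying \cite[Prop.~2.2]{birge} --- note that mixed partials $D^\alpha(g\circ S)$ with $\alpha\preceq\mathfrak{s}$ can involve derivatives of $g$ of order up to $|\mathfrak{s}|$ in a single coordinate, so this step leans on $g$ having ample smoothness (the paper's own composition with $g_k(\cdot|\cdot)$ glosses over the same point, so this is a shared caveat rather than a gap specific to you). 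Two minor points to tighten: when you split Dudley's integral, a single truncation level must serve all $d$ pieces, but choosing $\delta$ as the largest of the per-piece optima works since $\delta\le\sum_k\delta_k^*\lesssim\sum_k c_{n,k}$ and enlarging $\delta$ only shrinks each integral; and the claim that covering the summands at scale $\epsilon/(d+1)$ covers $\Psi$ at scale $\epsilon$ (after composing with the Lipschitz maps $\log$ on $[1/M^*,M^*]$ and $\log g$ on its range) should be stated to justify adding the log-covering numbers.
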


Lemma \ref{lemma:anisorate} is proved via a chain rule decomposition of relative entropy which relies upon the triangularity of the hypothesis maps $\mathcal{T}(\mathfrak{s},d,M^*)$. From here we can repeat the analysis in Section \ref{sec:klconv} to obtain consistency and bounds on the rate of convergence of the estimators $S^n$ and $T^n=(S^n)^{-1}$ of the sampling map $S^*$ and the direct map $T^*=(S^*)^{-1}$, respectively, in the anisotropic smoothness setting of Assumption \ref{assumption:multismooth}. All the results in these Sections are true with $\sum_{k}c_{n,k}$ replacing the rate under isotropic smoothness.

In order to minimize this bound to obtain an optimal rate of convergence, we should order the coordinates $(x_1,\ldots,x_d)$ such that $\sigma_k$ is as large as possible for each $k\in[d]$. Inspecting the definition of $\sigma_k$ in Lemma \ref{lemma:anisorate}, we see that this occurs when
$
s_1\le\cdots\le s_d.
$
\begin{theorem}
The bound on the rate of convergence $\sum_k c_{n,k}$
is minimized when $s_1\le\cdots\le s_d$, i.e., when the smoothness of the target density $f$ in the direction $x_k$ increases with $1\le k\le d$.
\label{thm:order}
\end{theorem}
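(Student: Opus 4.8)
The plan is to show that for each fixed index $k\in[d]$, the quantity $\sigma_k$ is maximized, simultaneously over all admissible orderings, by arranging the smoothness exponents so that $s_1\le\cdots\le s_d$, and that this joint maximization is consistent across all $k$ (i.e.\ no index-$k$ optimum pulls against any index-$j$ optimum). Since $\sum_k c_{n,k}$ is a sum of terms $c_{n,k}$ each of which is a nonincreasing function of $\sigma_k$ (inspect the three cases in Lemma~\ref{lemma:anisorate}: when $d_k\ge 2\sigma_k$ the exponent $\sigma_k/d_k$ is increasing in $\sigma_k$, so $n^{-\sigma_k/d_k}$ decreases as $\sigma_k$ grows, and once $\sigma_k$ crosses $d_k/2$ the term saturates at the parametric rate $n^{-1/2}$, up to the log at equality), maximizing every $\sigma_k$ minimizes the bound. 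So the entire content reduces to a combinatorial-analytic claim about the $\sigma_k$.

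Recall $\sigma_k = d_k\bigl(\sum_{j=k}^d s_j^{-1}\bigr)^{-1}$ with $d_k = d-k+1$; that is, $\sigma_k$ is the harmonic mean of $s_k,s_{k+1},\ldots,s_d$. The first key observation is that $\sigma_k$ depends only on the \emph{set} $\{s_k,\ldots,s_d\}$ of smoothness values assigned to coordinates $k$ through $d$, not on their internal order. The second observation is that, since the harmonic mean of a fixed multiset of positive numbers increases if we replace any member by a larger one, for a fixed ordering the tail harmonic mean $\sigma_k$ is as large as possible precisely when the tail block $\{s_k,\ldots,s_d\}$ consists of the $d_k$ largest values among all $d$ smoothness exponents. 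The third observation is the compatibility step: the ordering $s_1\le\cdots\le s_d$ has the property that \emph{for every} $k$ simultaneously the tail $\{s_k,\ldots,s_d\}$ equals the top $d_k$ values; hence a single permutation achieves the per-index optimum for all $k$ at once, so it minimizes the sum.

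First I would formalize the monotonicity of $c_{n,k}$ in $\sigma_k$ as a short lemma (a one-line case check on the definition in Lemma~\ref{lemma:anisorate}). Next I would prove the harmonic-mean monotonicity: if $(\tau_k,\ldots,\tau_d)$ is obtained from $(s_k,\ldots,s_d)$ by a coordinatewise increase, then the harmonic mean weakly increases — immediate since $t\mapsto 1/t$ is decreasing and the harmonic mean is a decreasing function of $\sum 1/s_j$. Then, given any permutation $\pi$ of the exponents, I would compare $\sigma_k(\pi)$ to $\sigma_k$ under the sorted arrangement: the sorted tail $\{s_{(d_k+1-\text{th smallest})},\ldots,s_{(\text{largest})}\}$ majorizes (elementwise, after sorting both tails) the tail under $\pi$, because any size-$d_k$ subset of the full multiset is coordinatewise $\le$ the top $d_k$ values when both are sorted decreasingly; applying the harmonic-mean monotonicity gives $\sigma_k(\pi)\le\sigma_k(\text{sorted})$. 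Summing the resulting inequalities $c_{n,k}(\pi)\ge c_{n,k}(\text{sorted})$ over $k$ yields the theorem. The main obstacle — really the only subtlety — is making the "top $d_k$ values" argument airtight when there are ties among the $s_j$ and when $\pi$ is an arbitrary permutation: one must phrase it as a statement about sorted sequences (if $a_1\ge\cdots\ge a_m$ is the full list sorted decreasingly and $b_1\ge\cdots\ge b_{d_k}$ is any $d_k$-subset sorted decreasingly, then $b_i\le a_i$ for all $i$), which is elementary but deserves to be stated cleanly so the harmonic-mean monotonicity can be applied coordinate by coordinate.
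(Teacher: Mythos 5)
Your proposal is correct and follows essentially the same route as the paper, which simply observes that each $\sigma_k$ is the harmonic mean of the tail $s_k,\ldots,s_d$ and is simultaneously maximized for every $k$ by the increasing arrangement; your supporting steps (monotonicity of $c_{n,k}$ in $\sigma_k$ for fixed $d_k$, harmonic-mean monotonicity under coordinatewise increase, and the observation that under the sorted order every tail $\{s_k,\ldots,s_d\}$ is exactly the top $d_k$ values) are precisely the details the paper leaves to inspection. The only cosmetic caveat is the $\log n$ factor at the boundary case $d_k=2\sigma_k$, which makes $\sigma_k\mapsto c_{n,k}$ nonincreasing only in the asymptotic-rate sense (large $n$), the sense in which the theorem is intended.
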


Our result on the optimal ordering of coordinates complements the following theorem of Carlier, Galichon, and Santambrogio adapted to our setup.

\begin{theorem}[Theorem 2.1; Carlier, Galichon, and Santambrogio (2008) \cite{santambrogio}]
Let $f$ and $g$ be compactly supported Lebesgue densities on $\mathbb{R}^d$. Let $\epsilon > 0$ and let $\gamma^\epsilon$ be an optimal transport plan between $f$ and $g$ for the cost 
\[
c_\epsilon (x,y)=\sum_{k=1}^d \lambda_k(\epsilon)(x_k-y_k)^2,
\]
for some weights $\lambda_k(\epsilon)>0$. Suppose that for all $k\in\{1,\ldots,d-1\}$, $\lambda_k(\epsilon)/\lambda_{k+1}(\epsilon)\to 0$ as $\epsilon\to 0$. Let $S^*$ be the Kn\"othe-Rosenblatt map between $f$ and $g$ and $\gamma^* = (\text{id}\times S^*)\# f$ the associated transport plan. Then $\gamma^\epsilon\rightsquigarrow \gamma^*$ as $\epsilon \to 0$. Moreover, should the plans $\gamma^\epsilon$ be induced by transport maps $S^\epsilon$, then these maps would converge to $S^*$ in $L^2(f)$ as $\epsilon\to 0$. 
\label{thm:santambrogio}
\end{theorem}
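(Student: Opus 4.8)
The plan is to follow the strategy of Carlier, Galichon, and Santambrogio: extract weak subsequential limits of the optimal plans $\gamma^\epsilon$ using compactness, show that the only possible limit is the Kn\"othe plan $\gamma^*$ by a downward induction on the coordinates, and then, when the $\gamma^\epsilon$ are induced by maps, upgrade plan convergence to $L^2(f)$ convergence. First I would note that since $f$ and $g$ are compactly supported, every coupling with marginals $f$ and $g$ is supported in the fixed compact set $\mathrm{supp}(f)\times\mathrm{supp}(g)$; the set $\Pi(f,g)$ of such couplings is therefore tight, and it is weakly closed because marginal constraints survive weak limits, hence weakly compact by Prokhorov. Thus $\{\gamma^\epsilon\}$ has subsequential weak limits, all in $\Pi(f,g)$, and it suffices to prove that every such limit equals $\gamma^*$ --- then $\gamma^\epsilon\rightsquigarrow\gamma^*$ as $\epsilon\to0$. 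For the ``moreover'' assertion, if $\gamma^\epsilon=(\mathrm{id}\times S^\epsilon)\#f$ then the $S^\epsilon$ take values in $\mathrm{supp}(g)$, so they are uniformly bounded and, by the Vitali convergence theorem, it is enough to prove $S^\epsilon\to S^*$ in $f$-measure. This I would get from $\gamma^\epsilon\rightsquigarrow\gamma^*$ and the fact that $\gamma^*$ is concentrated on the graph of $S^*$: if $S^\epsilon\not\to S^*$ in measure, then $\gamma^\epsilon(\{\,|y-S^*(x)|\ge\delta\,\})\ge\eta$ along a subsequence for some $\delta,\eta>0$; restricting to a compact set on which $S^*$ is continuous (Lusin) and applying the Portmanteau inequality for closed sets would force $\gamma^*(\{\,|y-S^*(x)|\ge\delta\,\})>0$, contradicting $\gamma^*=(\mathrm{id}\times S^*)\#f$.

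The heart of the proof is identifying the limit. Fix a subsequential limit $\gamma$ and write $C_k(\mu)=\int(x_k-y_k)^2\,d\mu$; each $C_k$ is weakly continuous on couplings supported in the ambient compact set. I would show by downward induction on $k=d,d-1,\dots,1$ that $\gamma$ is concentrated on $\{y_k=S^*_k(x_k,\dots,x_d)\}$. For $k=d$: optimality of $\gamma^\epsilon$ gives $\sum_j\lambda_j(\epsilon)C_j(\gamma^\epsilon)\le\sum_j\lambda_j(\epsilon)C_j(\gamma^*)$; dividing by $\lambda_d(\epsilon)$, using $\lambda_j(\epsilon)/\lambda_d(\epsilon)\to0$ for $j<d$ and the uniform boundedness of the $C_j$, and letting $\epsilon\to0$ along the subsequence yields $C_d(\gamma)\le C_d(\gamma^*)$; since the $(x_d,y_d)$-marginals of $\gamma$ and of $\gamma^*$ are both couplings of the one-dimensional densities $f_d,g_d$ and $\gamma^*$ realizes the unique monotone (hence quadratic-cost optimal) coupling, $\gamma$ must realize it too, i.e. $y_d=S^*_d(x_d)$ $\gamma$-a.s. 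For the inductive step, assuming $\gamma$ is concentrated on $\{y_j=S^*_j(x_{j:d})\}$ for every $j>k$, I would divide the optimality inequality by $\lambda_k(\epsilon)$ and pass to the limit to conclude that $\gamma$ minimizes $C_k$ over the couplings in $\Pi(f,g)$ concentrated on $\{y_j=S^*_j(x_{j:d}):j>k\}$. Disintegrating such couplings over $x_{(k+1):d}$ --- equivalently over $(x_{(k+1):d},y_{(k+1):d})$, which is a deterministic function of $x_{(k+1):d}$ on this class --- and using that $f$ and $g$ split into their $x_{(k+1):d}$- (resp. $y_{(k+1):d}$-) marginals times their one-dimensional conditionals, the minimization of $C_k$ decouples across slices. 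On almost every slice the conditional $(x_k,y_k)$-law is then forced to be the monotone coupling of the atomless conditional densities $f_k(\cdot\mid x_{(k+1):d})$ and $g_k(\cdot\mid y_{(k+1):d})$, namely $y_k=G_k^{-1}(F_k(x_k\mid x_{(k+1):d})\mid S^*_{(k+1):d}(x_{(k+1):d}))=S^*_k(x_{k:d})$ $\gamma$-a.s. Taking $k=1$ shows $\gamma$ is carried by the graph of $S^*$, so $\gamma=\gamma^*$.

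The step I expect to be the main obstacle is the limit passage in the inductive step: after dividing by $\lambda_k(\epsilon)$, the coordinate-$j$ terms with $j>k$ carry coefficients $\lambda_j(\epsilon)/\lambda_k(\epsilon)\to\infty$, so one cannot simply test against the fixed plan $\gamma^*$ and send $\epsilon\to0$. The fix is to compare $\gamma^\epsilon$ with a competitor built from $\gamma^\epsilon$ itself by leaving the coordinates $k+1,\dots,d$ untouched and re-coupling only the coordinates $1,\dots,k$ --- monotonically in coordinate $k$ conditionally on the higher ones, while keeping the correct conditional marginals on coordinates $1,\dots,k-1$ --- so that the $j>k$ cost terms cancel exactly between $\gamma^\epsilon$ and the competitor, leaving only the harmless ratios $\lambda_j(\epsilon)/\lambda_k(\epsilon)\to0$ with $j<k$. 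Setting this up rigorously requires the disintegration theorem and a measurable selection of the family of conditional monotone couplings, which is legitimate because $f$ and $g$ are Lebesgue densities: for a.e. conditioning value their conditionals are genuine atomless probability densities, and the one-dimensional quadratic optimal coupling is then unique and depends measurably on the conditioning variable. This disintegration-plus-competitor bookkeeping is the technical core of the Carlier--Galichon--Santambrogio argument; the compactness and one-dimensional-uniqueness ingredients that frame it are routine.
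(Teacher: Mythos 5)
This statement is not proved in the paper at all: it is quoted verbatim (``adapted to our setup'') from Carlier, Galichon, and Santambrogio \cite{santambrogio}, and no proof appears in the Supplement, so there is no in-paper argument to compare against. Your outline is essentially the original Carlier--Galichon--Santambrogio strategy: weak compactness of $\Pi(f,g)$ on the fixed compact support, identification of every subsequential limit with the Kn\"othe plan by downward induction on the coordinate index, and the upgrade from plan convergence to $L^2(f)$ convergence of the maps via convergence in $f$-measure plus uniform boundedness. Those framing steps are correct as you state them (for $k=d$, uniqueness of the monotone coupling uses that $f_d$ is atomless, which holds since $f$ is a Lebesgue density; the Lusin/portmanteau argument for convergence in measure is the standard one).

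What you have not actually carried out is the step you yourself flag as the technical core, and it is genuinely the hard part rather than bookkeeping: after dividing by $\lambda_k(\epsilon)$ you compare $\gamma^\epsilon$ with a competitor $\eta^\epsilon$ that copies the $(x_{(k+1):d},y_{(k+1):d})$-marginal of $\gamma^\epsilon$ and re-couples the lower coordinates conditionally monotonically, so the $j>k$ terms cancel; but to conclude you must then pass to the limit in $C_k(\eta^\epsilon)$, i.e.\ show that the conditionally optimal cost
$\int W_2^2\bigl(f_k(\cdot\mid x_{(k+1):d}),\,g_k(\cdot\mid y_{(k+1):d})\bigr)\,d\gamma^\epsilon_{(k+1):d}$
converges to the corresponding quantity for the limit plan, whose higher-coordinate marginal is deterministic. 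This requires measurability and sufficient continuity (or an approximation argument) for the map from the conditioning variables to the one-dimensional monotone-coupling cost, which does not follow automatically for merely Lebesgue conditional densities; it is exactly where the original proof expends its effort. So the proposal is a correct plan in the spirit of the cited proof, but as written it is an outline whose decisive inductive limit passage is described rather than established; completing it would essentially reproduce the argument of \cite{santambrogio}.
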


With this theorem in mind, the KR map $S^*$ can be viewed as a limit of optimal transport maps $S^\epsilon$ for which transport in the $d$th direction is more costly than in the $(d-1)$th, and so on. The anisotropic cost function $c_{\epsilon}(x,y)$ inherently promotes increasing regularity of $S^\epsilon$ in $x_k$ for larger $k\in[d]$. Theorem \ref{thm:order} establishes the same heuristic for learning triangular flows based on Kn\"othe-Rosenblatt rearrangement to build generative models.

In particular, our result suggests that we should order the coordinates such that $f$ is smoothest in the $x_d$ direction. This makes sense intuitively because the estimates of the component maps $S^*_k,k\in[d]$ all depend on the $d$th coordinate of the data. As such, we should leverage smoothness in $x_d$ to stabilize all of our estimates as much as possible. If not, we risk propagating error throughout the estimation. In comparison, only the estimate of the first component $S^*_1$ depends on $x_1$. Since our estimator depends comparatively little on $x_1$, we should order the coordinates so that $f$ is least smooth in the $x_1$ direction.

\paragraph{Jacobian Flows.}

Suppose now we solve
\[
\inf_{S}\text{KL}(S\# f|g) = \inf_S P\psi_S
\]
where the candidate map $S$ is a composition of smooth monotone increasing upper triangular maps $U^j$ and orthogonal linear transformations $\Sigma^j$ for $j\in[m]$, i.e.,
\begin{equation}
S(x) = U^m \circ \Sigma^m \circ\cdots\circ U^1\circ \Sigma^1 (x).
\label{eq:flow}
\end{equation}
We call $S$ a \textit{Jacobian flow} of order $m$.
This model captures many popular normalizing autoregressive flows \cite{kobyzev}, such as Real NVP \cite{nvp}, in which the $\Sigma^j$ are ``masking'' permutation matrices.

For simplicity, in this section we assume that $f$ and $g$ are supported on the unit ball $B_d(0,1)\subset\mathbb{R}^d$ centered at the origin. Since $\Sigma^j$ are orthogonal, we have $\Sigma^j(B_d(0,1))=B_d(0,1)$. Hence, to accommodate the setup of the preceding sections, we can guarantee that $S$ maps from $\mathcal{X}=B_d(0,1)$ to $\mathcal{Y}=B_d(0,1)$ by requiring $\sup_{x\in B_d(0,1)}\|U^j(x)\|_2\le 1$ for $j\in[m]$.

\begin{definition}
Define the class $\mathcal{J}_m(\Sigma,s,M)$ of $s$-smooth Jacobian flows of order $m$ to consist of those maps $S$ of the form (\ref{eq:flow}) such that
\begin{enumerate}
    \item $\Sigma=(\Sigma^1,\ldots,\Sigma^m)$ are fixed orthogonal matrices, 
    \item $\sup_{x\in B_d(0,1)}\|U^j(x)\|_2\le 1$ for $j\in[m]$,
    \item $U^j\in\mathcal{T}(s,d,M)$ for $j\in[m]$.
\end{enumerate}
We also define $\Psi_m(\Sigma,s,M) = \{\psi_S:S\in\mathcal{J}_m(\Sigma,s,M)\}$.
\label{def:flow}
\end{definition}

By expanding our search to $\mathcal{J}_m(\Sigma,s,M)$, we are not targeting the KR map $S^*$ and we are no longer guaranteed uniqueness of a minimizer of the KL. Nevertheless, we can study the performance of estimators $S^n\in\mathcal{J}_m(\Sigma,s,M)$ as compared to minimizers of KL in $\overline{\mathcal{J}_m(\Sigma,s,M)}$, which are guaranteed to exist by compactness of $\overline{\mathcal{J}_m(\Sigma,s,M)}$ and lower semicontinuity of KL in $\|\cdot\|_{\infty,d}$.

Since the $\Sigma^j$ are orthogonal, we have $|\det(J\Sigma^j(x))| = |\det(\Sigma^j)|=1$, and therefore
\begin{align*}
\psi_S(x) 
= & \log [f(x)/g(S(x))] - \sum_{j=1}^m \sum_{k=1}^d \log D_kU_k^j(x^j),
\end{align*}
where we define
\[
x^j = \Sigma^j \circ U^{j-1} \circ \Sigma^{j-1}\circ\cdots\circ U^1\circ \Sigma^1(x), \quad j\in[m].
\]

Hence, with a loss decomposition mirroring that in Definition \ref{def:psi}, we can apply the methods of the preceding sections to establish quantitative limits on the loss incurred by the estimates $S^n$ in finite samples. See the Supplement for details.

\begin{theorem}
Suppose $f,g$ are $s$-smooth and supported on $B_d(0,1)$. 
Let $S^n$ be a near-optimizer of the functional $S\mapsto P_n\psi_S$ on $\mathcal{J}_m(\Sigma,s,M)$ with 
\[
R_n = P_n\psi_{S^n} - \inf_{S\in\mathcal{J}_m(\Sigma,s,M)} P_n\psi_S = o_P(1).
\]

Further, let $S^0$ be any minimizer of $S\mapsto P\psi_S$ on $\overline{\mathcal{J}_m(\Sigma,s,M)}$. It follows that
\[
P\psi_{S^n} \stackrel{p}{\to} P\psi_{S^0}.
\]
Moreover, if $R_n$ is
bounded in expectation as 
\[
\E[R_n] \lesssim \E\|P_n-P\|_{\Psi_m(\Sigma,s,M)},
\]
then the expected KL divergence of $S^n$ is bounded as
\[
\E[P\psi_{S^n}] - P\psi_{S^0}\lesssim \begin{cases}
n^{-1/2}, & d < 2s, \\
n^{-1/2}\log n, &d=2s, \\
n^{-s/d}, &d>2s.
\end{cases}
\]
\label{thm:flow}
\end{theorem}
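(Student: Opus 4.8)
The plan is to replay the argument of Sections~\ref{sec:entropy}--\ref{sec:klconv} for the composite class $\mathcal{J}_m(\Sigma,s,M)$; the only genuinely new ingredient is a metric-entropy bound for the loss class $\Psi_m(\Sigma,s,M)$, after which consistency and the rate follow exactly as in Theorem~\ref{thm:KLconv}.

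\emph{Entropy bound.} Two Lipschitz estimates do the work. First, every $U^j\in\mathcal{T}(s,d,M)$ has all first partials bounded by $M$ (Condition~\ref{UB} of Definition~\ref{def:tsdm}), hence is $O(dM)$-Lipschitz, while each $\Sigma^j$ is an isometry; telescoping layer by layer gives $\|S-\tilde S\|_{\infty,d}\le C_{m,d,M}\max_j\|U^j-\tilde U^j\|_{\infty,d}$, and the same bound for the intermediate points, $\max_x\|x^j-\tilde x^j\|\le C_{m,d,M}\max_j\|U^j-\tilde U^j\|_{\infty,d}$. Second, writing $\psi_S(x)=\log[f(x)/g(S(x))]-\sum_{j=1}^m\sum_{k=1}^d\log D_kU_k^j(x^j)$ as in the excerpt, the term $\log g(S(x))-\log g(\tilde S(x))$ is controlled by the Lipschitz constant of $\log g$ (finite since $g$ is smooth and bounded away from zero) times $\|S-\tilde S\|_{\infty,d}$, and each of the $md$ differences $\log D_kU_k^j(x^j)-\log D_k\tilde U_k^j(\tilde x^j)$ splits as $[\log D_kU_k^j(x^j)-\log D_kU_k^j(\tilde x^j)]+[\log D_kU_k^j(\tilde x^j)-\log D_k\tilde U_k^j(\tilde x^j)]$; the first bracket is small because $D_kU_k^j$ is itself Lipschitz (Condition~\ref{UB} supplies the extra derivative encoded in $\tilde{\mathfrak{s}}_k$) and $\log$ is $M$-Lipschitz on $[1/M,M]$, the second is at most $M\|D_kU_k^j-D_k\tilde U_k^j\|_\infty$. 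Consequently a joint $\epsilon$-net of the $m$ factors $U^j$ in the norm $\|U\|_{\infty,d}+\max_k\|D_kU_k\|_\infty$ induces an $O(\epsilon)$-net of $\Psi_m(\Sigma,s,M)$ in $\|\cdot\|_\infty$, and the calculation underlying Proposition~\ref{prop:entbound} bounds the log-cardinality of such a net of $\mathcal{T}(s,d,M)$ by $O(\epsilon^{-d/s})$. Taking the product over $j\in[m]$,
\[
H(\epsilon,\Psi_m(\Sigma,s,M),\|\cdot\|_\infty)\lesssim m\,\epsilon^{-d/s}\lesssim \epsilon^{-d/s}.
\]

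\emph{Empirical process and risk decomposition.} With this entropy bound, the proof of Lemma~\ref{lemma:EPrate} --- Dudley's entropy integral when $d<2s$, and the refined chaining/truncation bound of van der Vaart and Wellner when $d\ge 2s$ --- transfers verbatim to give $\E\|P_n-P\|_{\Psi_m(\Sigma,s,M)}\lesssim n^{-1/2}$ for $d<2s$, $n^{-1/2}\log n$ for $d=2s$, and $n^{-s/d}$ for $d>2s$. The decomposition preceding (\ref{eq:lossbound}) applies with $\mathcal{T}(s,d,M^*)$ replaced by $\mathcal{J}_m(\Sigma,s,M)$: for the near-minimizer $S^n$,
\[
P\psi_{S^n}-\inf_{S\in\mathcal{J}_m(\Sigma,s,M)}P\psi_S\le 2\|P_n-P\|_{\Psi_m(\Sigma,s,M)}+R_n.
\]

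\emph{Identifying the infimum and concluding.} It remains to match $\inf_{S\in\mathcal{J}_m(\Sigma,s,M)}P\psi_S$ with $P\psi_{S^0}=\min_{\overline{\mathcal{J}_m(\Sigma,s,M)}}P\psi_S$. One inequality is immediate from $\mathcal{J}_m\subseteq\overline{\mathcal{J}_m}$ together with lower semicontinuity of KL in $\|\cdot\|_{\infty,d}$; for the reverse, lower semicontinuity does not suffice, so I would approximate a minimizer $S^0\in\overline{\mathcal{J}_m}$ by genuine $s$-smooth flows --- mollifying each limiting triangular factor and slightly rescaling it to restore the strict-monotonicity margin $1/M$ --- along which $P\psi_\cdot$ converges; this is the one step demanding care. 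Given this, consistency $P\psi_{S^n}\stackrel{p}{\to}P\psi_{S^0}$ follows from $\|P_n-P\|_{\Psi_m}\stackrel{p}{\to}0$ (the class is $P$-Glivenko--Cantelli, being totally bounded in $\|\cdot\|_\infty$ by the entropy bound) together with $R_n=o_P(1)$, and the stated rate follows by taking expectations in the last display and using $\E[R_n]\lesssim\E\|P_n-P\|_{\Psi_m}$. I expect the main obstacle to be the entropy bound --- specifically, propagating a perturbation of an inner factor $U^1$ through the composition into the evaluation points $x^j$ of the outer Jacobian terms $\log D_kU_k^j(x^j)$ --- which is precisely what the two Lipschitz estimates above are designed to absorb.
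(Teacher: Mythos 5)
Your proposal is essentially correct and reaches the same endpoint as the paper, but it takes a more labor-intensive route to the key entropy bound. The paper's own proof is a one-paragraph reduction: by the chain rule and the defining bounds of $\mathcal{J}_m(\Sigma,s,M)$ (each $U^j\in\mathcal{T}(s,d,M)$, each $\Sigma^j$ orthogonal, $D_kU^j_k\in[1/M,M]$), every composite loss $\psi_S$ with $S\in\mathcal{J}_m(\Sigma,s,M)$ is itself $s$-smooth with uniformly bounded derivatives, so $\Psi_m(\Sigma,s,M)$ is a bounded subset of $C_s$ and Proposition \ref{prop:entbound} (via Proposition \ref{prop:nickl}) applies verbatim; then Lemma \ref{lemma:EPrate} and the risk decomposition (\ref{eq:lossbound}) finish as in Theorem \ref{thm:KLconv}. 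You instead construct an explicit $\epsilon$-net of $\Psi_m$ by covering each factor $U^j$ in the norm $\|U\|_{\infty,d}+\max_k\|D_kU_k\|_\infty$ and propagating perturbations through the layers with Lipschitz estimates. This works: the $\tilde{\mathfrak{s}}_k$ bounds do make $D_kU^j_k$ Lipschitz (so the evaluation-point perturbation $x^j\mapsto\tilde x^j$ is absorbed), $\log$ is Lipschitz on $[1/M,M]$, and the class $\{D_kU_k:U\in\mathcal{T}(s,d,M)\}$ still has $L^\infty$-entropy $\lesssim\epsilon^{-d/s}$ by the same Birg\'e/Nickl--P\"otscher estimates, so your $m\,\epsilon^{-d/s}$ bound is legitimate. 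What the paper's shortcut buys is brevity; what yours buys is an explicit account of how constants propagate through the $m$ layers, which the paper leaves implicit.

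The one place where you go beyond the paper is the identification $\inf_{\mathcal{J}_m}P\psi_S=P\psi_{S^0}=\min_{\overline{\mathcal{J}_m}}P\psi_S$, which is indeed needed for the stated conclusion (the risk decomposition only controls $P\psi_{S^n}-\inf_{\mathcal{J}_m}P\psi_S$) and which the paper passes over in silence; you are right that lower semicontinuity alone gives only the trivial direction. However, your proposed fix by mollification is the shakier option: a limit point $S^0$ need not carry factors with all derivatives up to order $\tilde{\mathfrak{s}}_k$, so mollified factors can violate the higher-derivative bounds defining $\mathcal{T}(s,d,M)$ at scale $\delta^{-|\alpha|+1}$, and rescaling to stay inside $B_d(0,1)$ erodes the margin $D_kU_k\ge 1/M$. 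A cleaner closing argument uses the approximating sequence you already have: for $S^k\in\mathcal{J}_m$ with $\|S^k-S^0\|_{\infty,d}\to0$, the factors $U^{j,k}$ and the derivatives $D_kU^{j,k}_k$ are uniformly bounded and equi-Lipschitz (Condition \ref{UB} with $s\ge1$), so by Arzel\`a--Ascoli a subsequence converges in $C^1$-sense, the limits refactor $S^0$, and $\psi_{S^{k_j}}\to\psi_{S^0}$ uniformly, whence $\inf_{\mathcal{J}_m}P\psi_S\le\lim_j P\psi_{S^{k_j}}=P\psi_{S^0}$. With that substitution your argument is complete and, apart from the entropy step, follows the same skeleton as the paper's.
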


\section{Numerical Illustrations}
\label{sec:experiments}
\begin{figure*}[tp]
\centering
\begin{tabular}{llll}
\hspace{-6mm}
\includegraphics[height=3.65cm,width=3.65cm]{./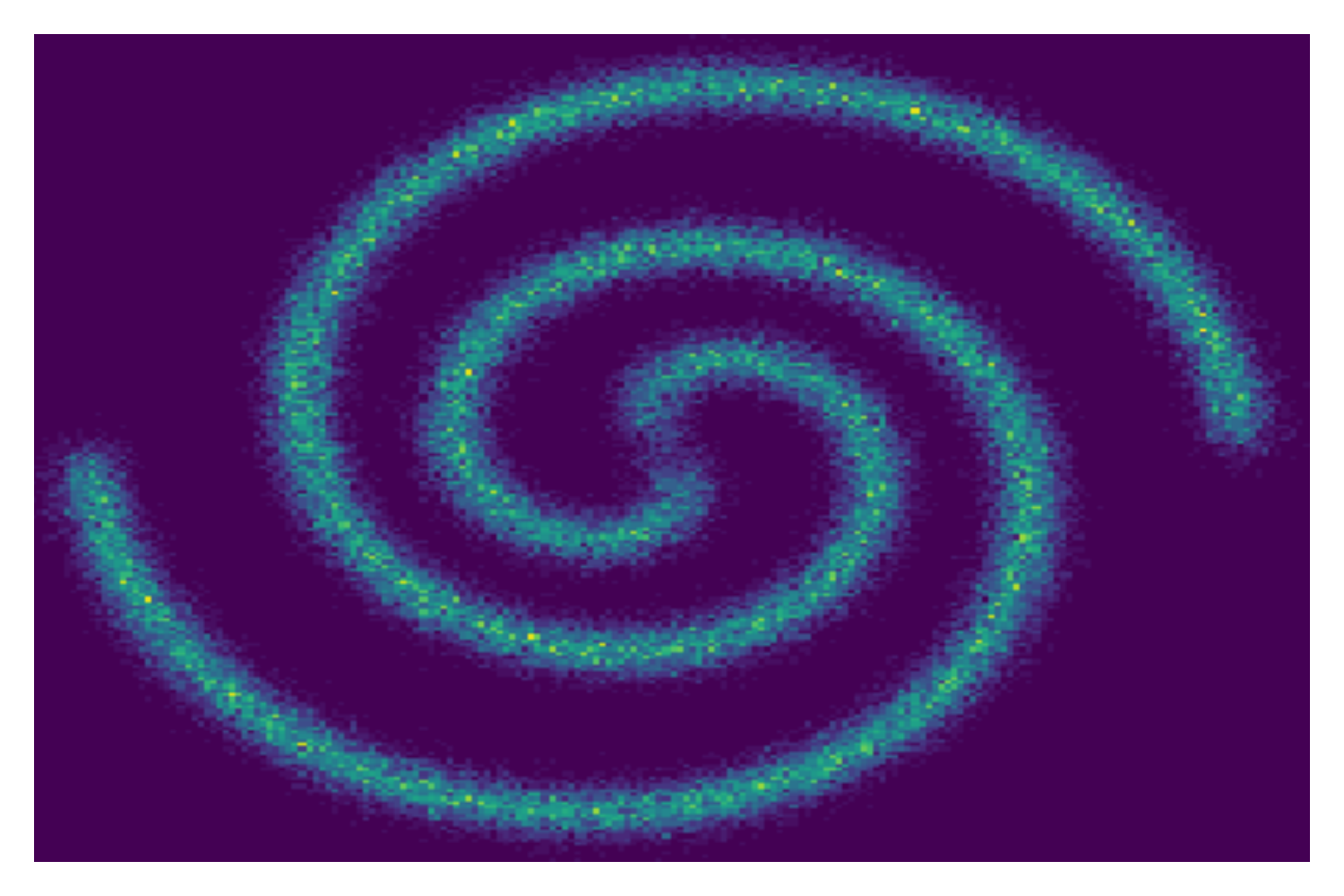}
& \hspace{-6mm}
\includegraphics[height=3.65cm,width=3.65cm]{./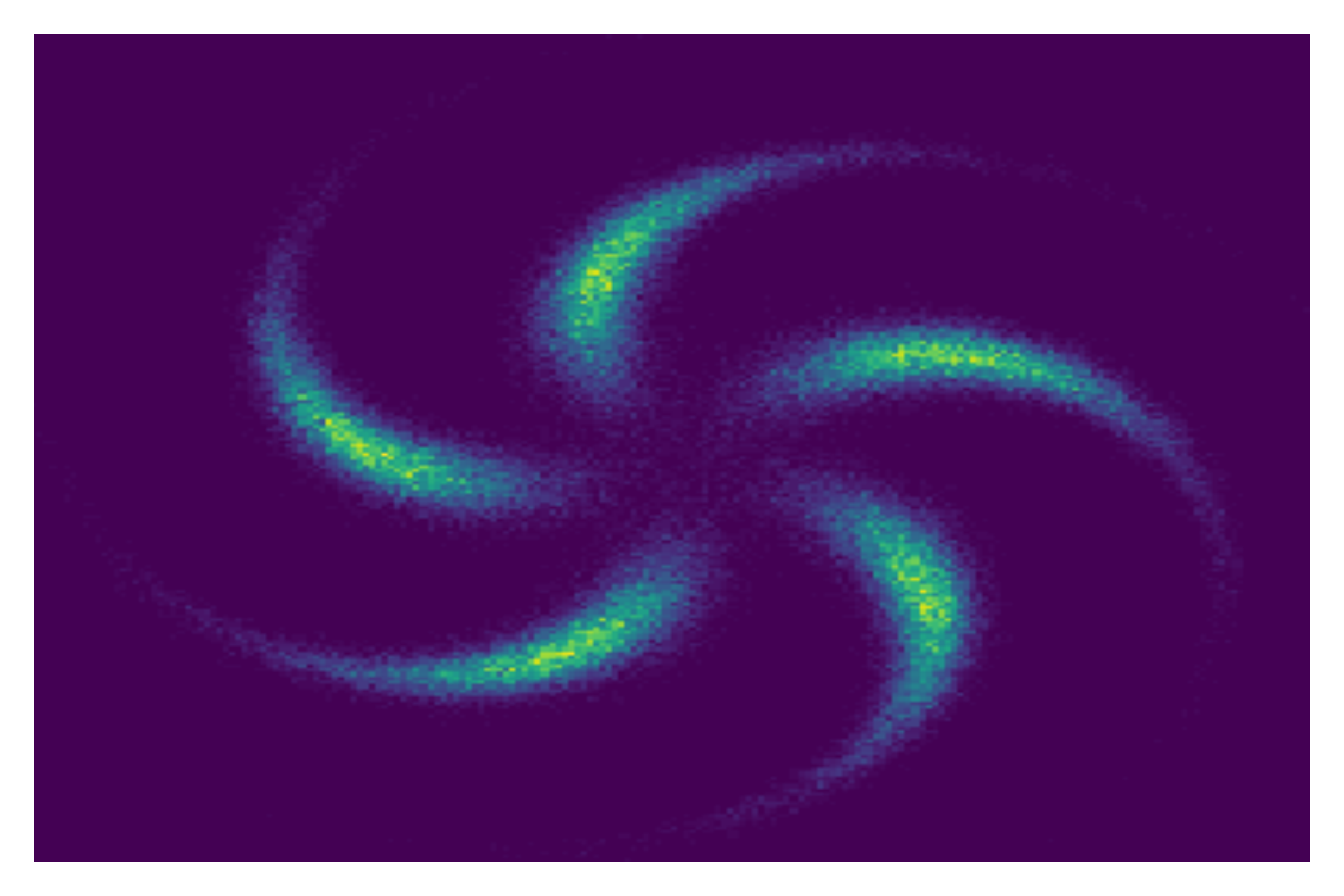}
& \hspace{-6mm}
\includegraphics[height=3.65cm,width=3.65cm]{./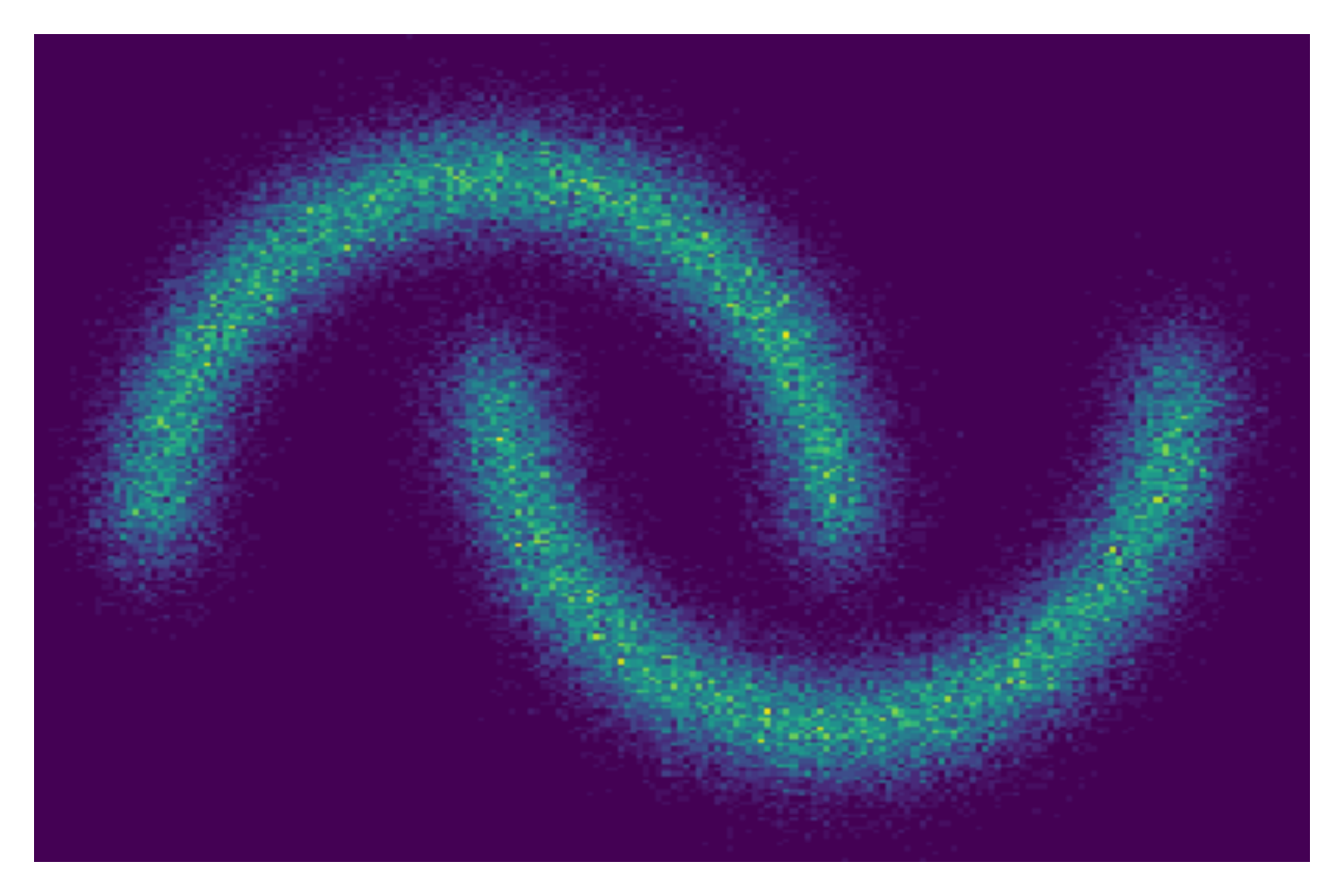}
& \hspace{-6mm}
\includegraphics[height=3.65cm,width=3.65cm]{./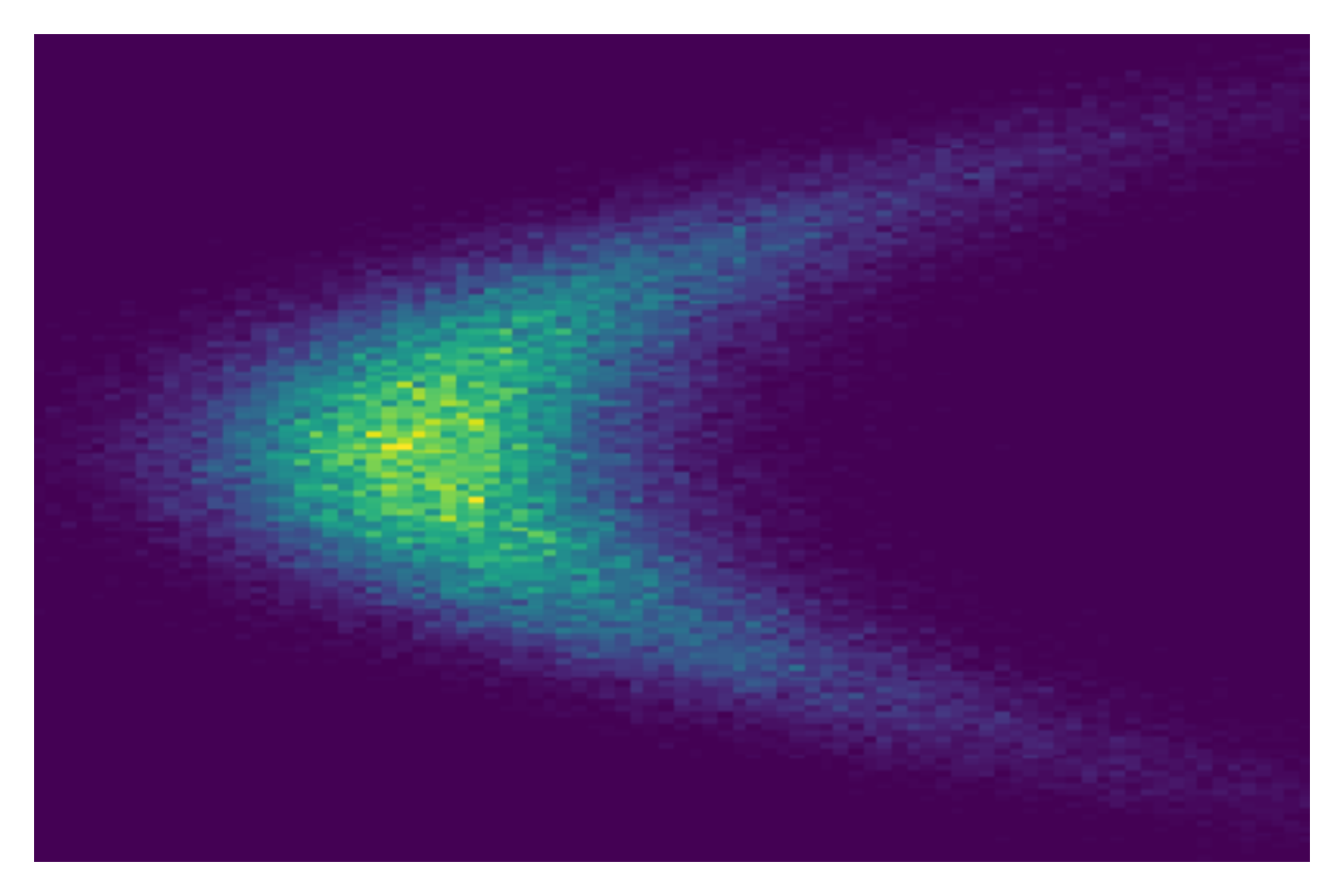} \\
\hspace{-6mm}
\includegraphics[height=4.5cm,width=4.15cm]{./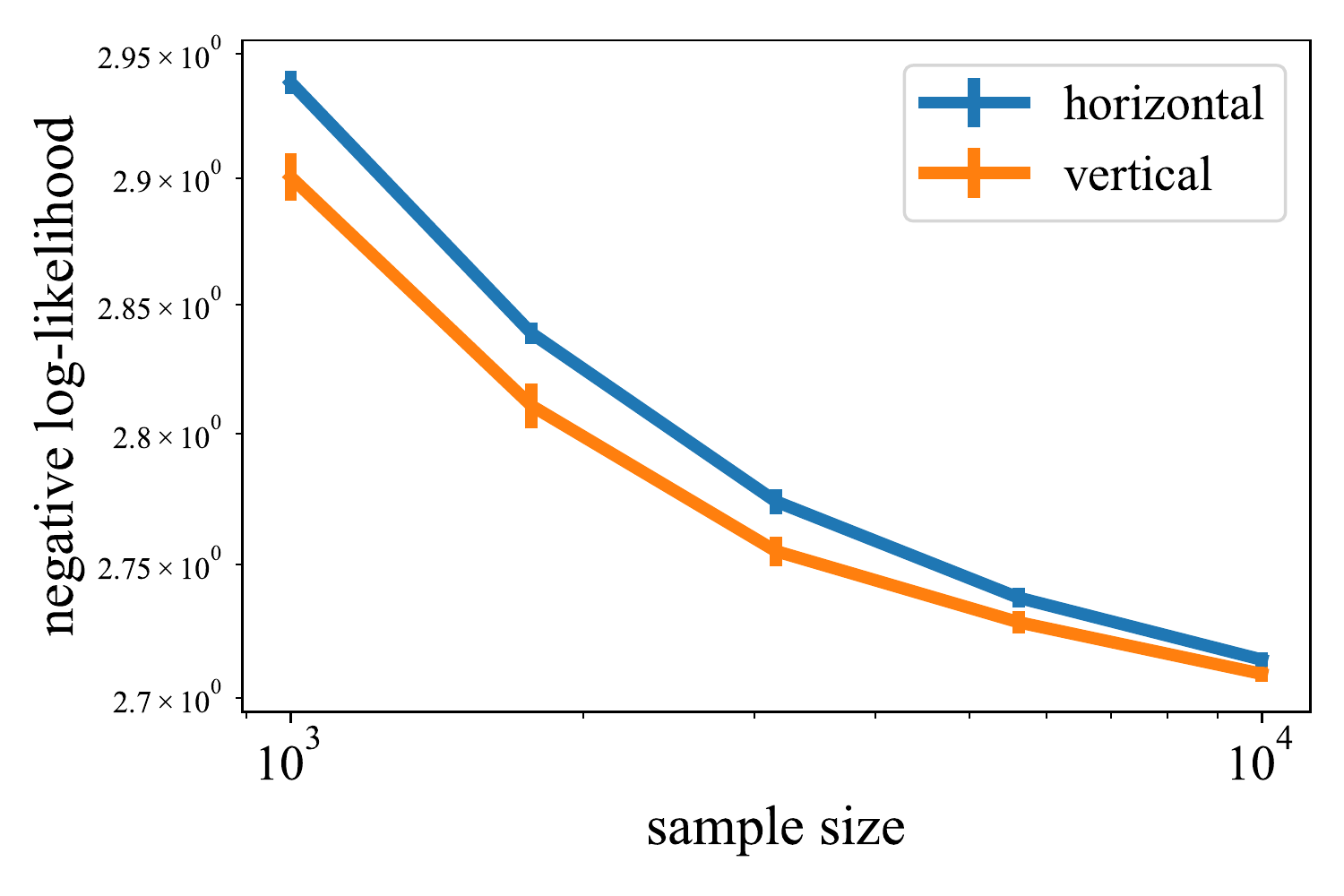}
&\hspace{-6mm}
\includegraphics[height=4.5cm,width=4.15cm]{./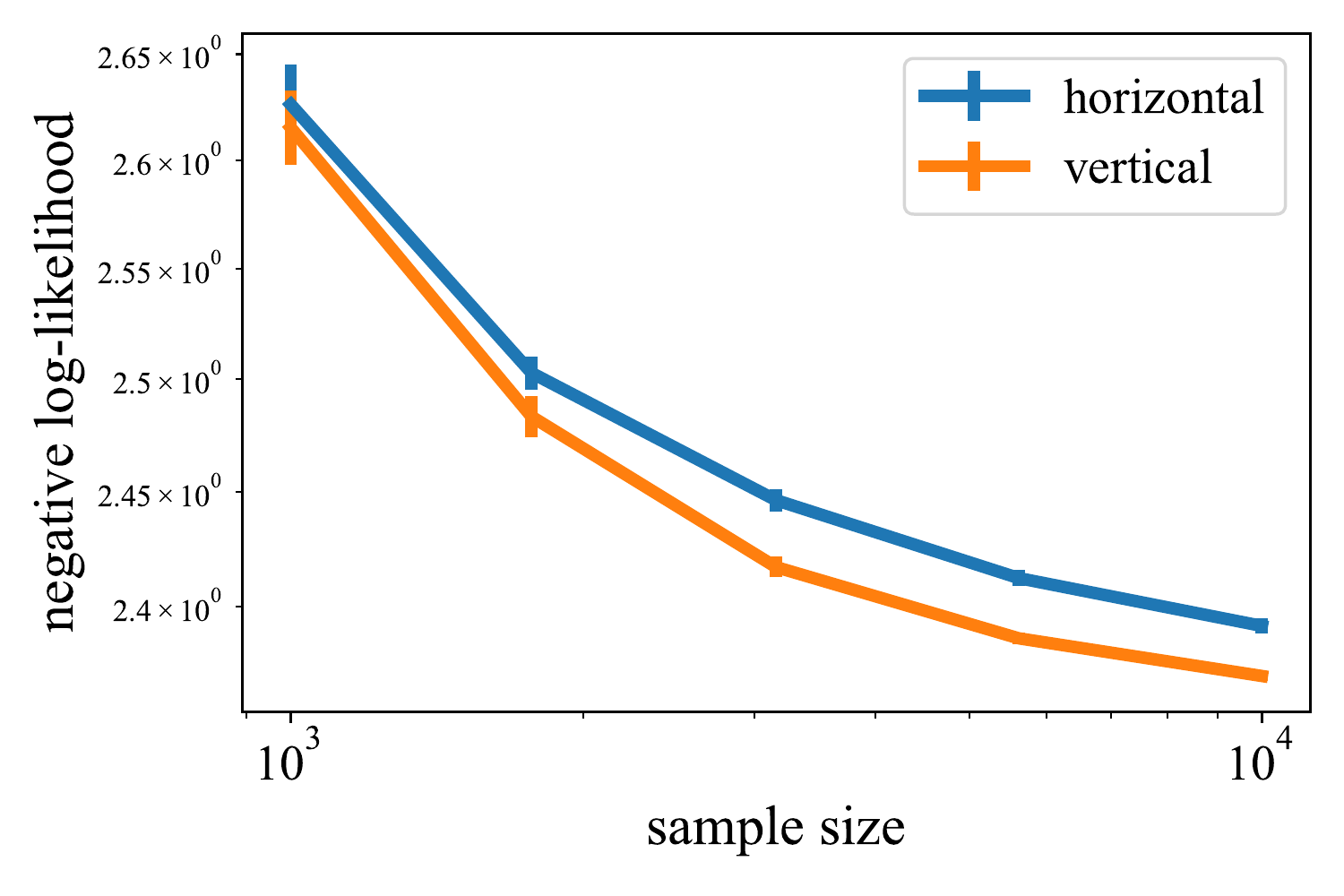}
&\hspace{-6mm}
\includegraphics[height=4.5cm,width=4.15cm]{./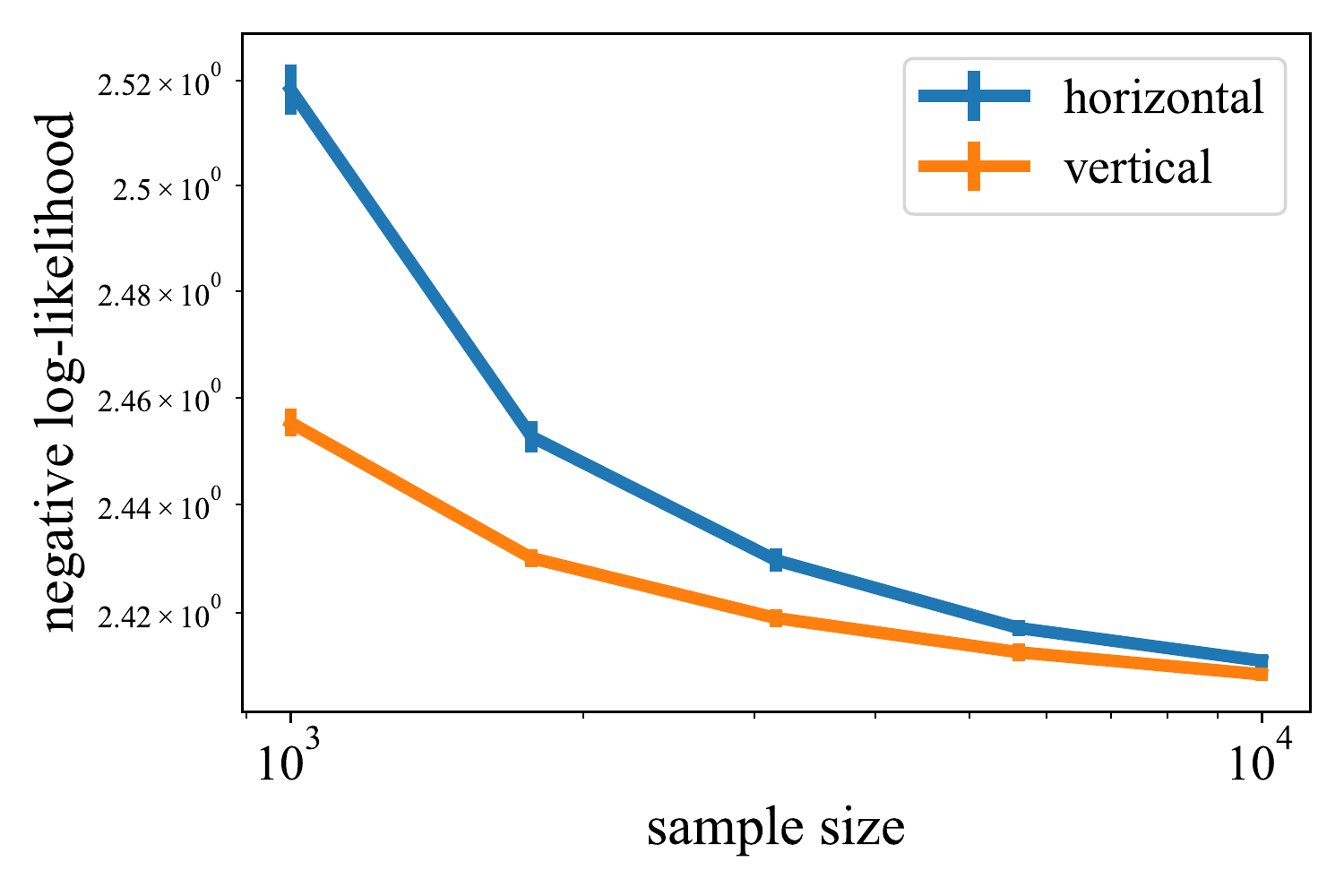}
&\hspace{-6mm}
\includegraphics[height=4.5cm,width=4.15cm]{./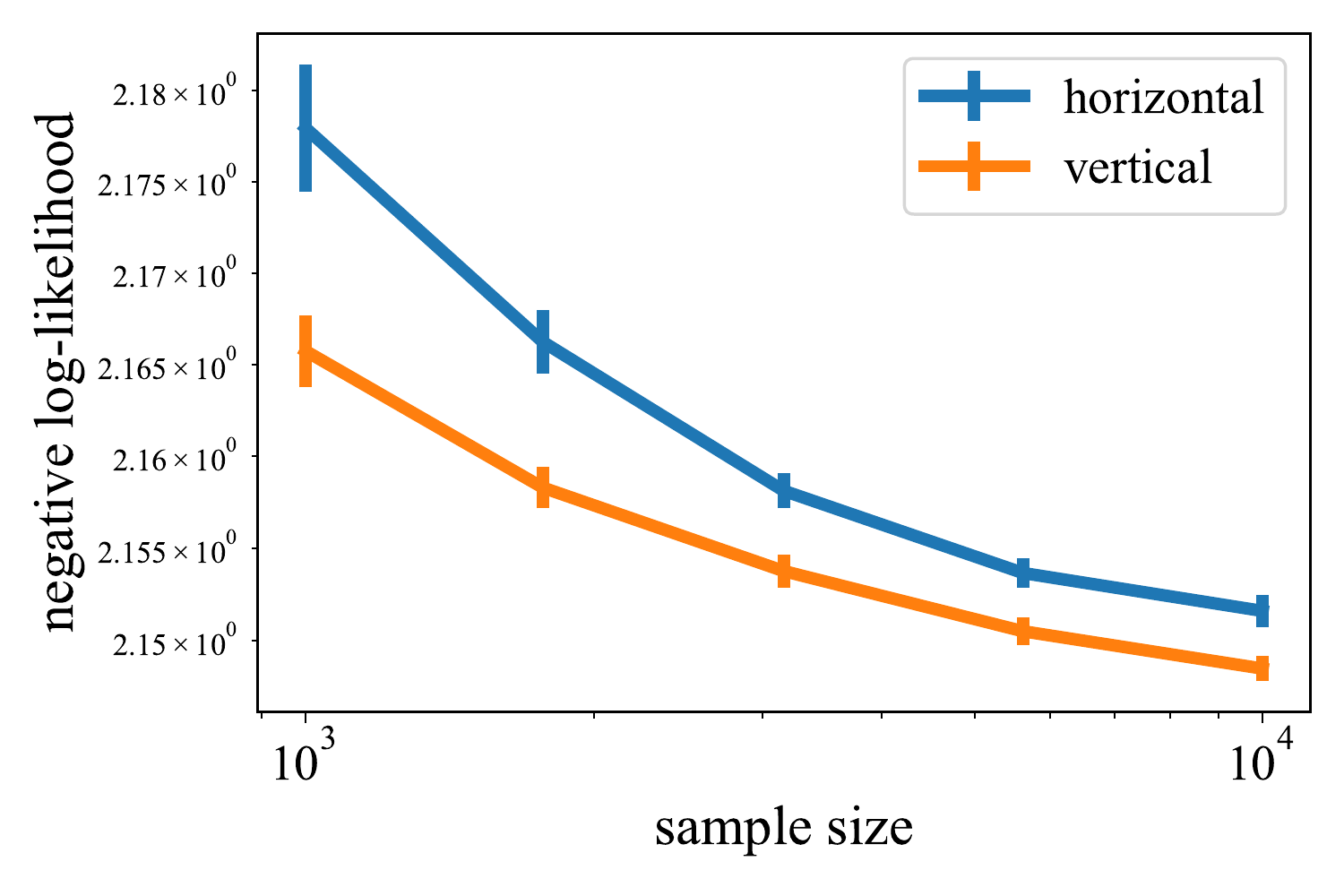} \\
\hspace{-6mm}
\includegraphics[height=3.65cm,width=3.65cm]{./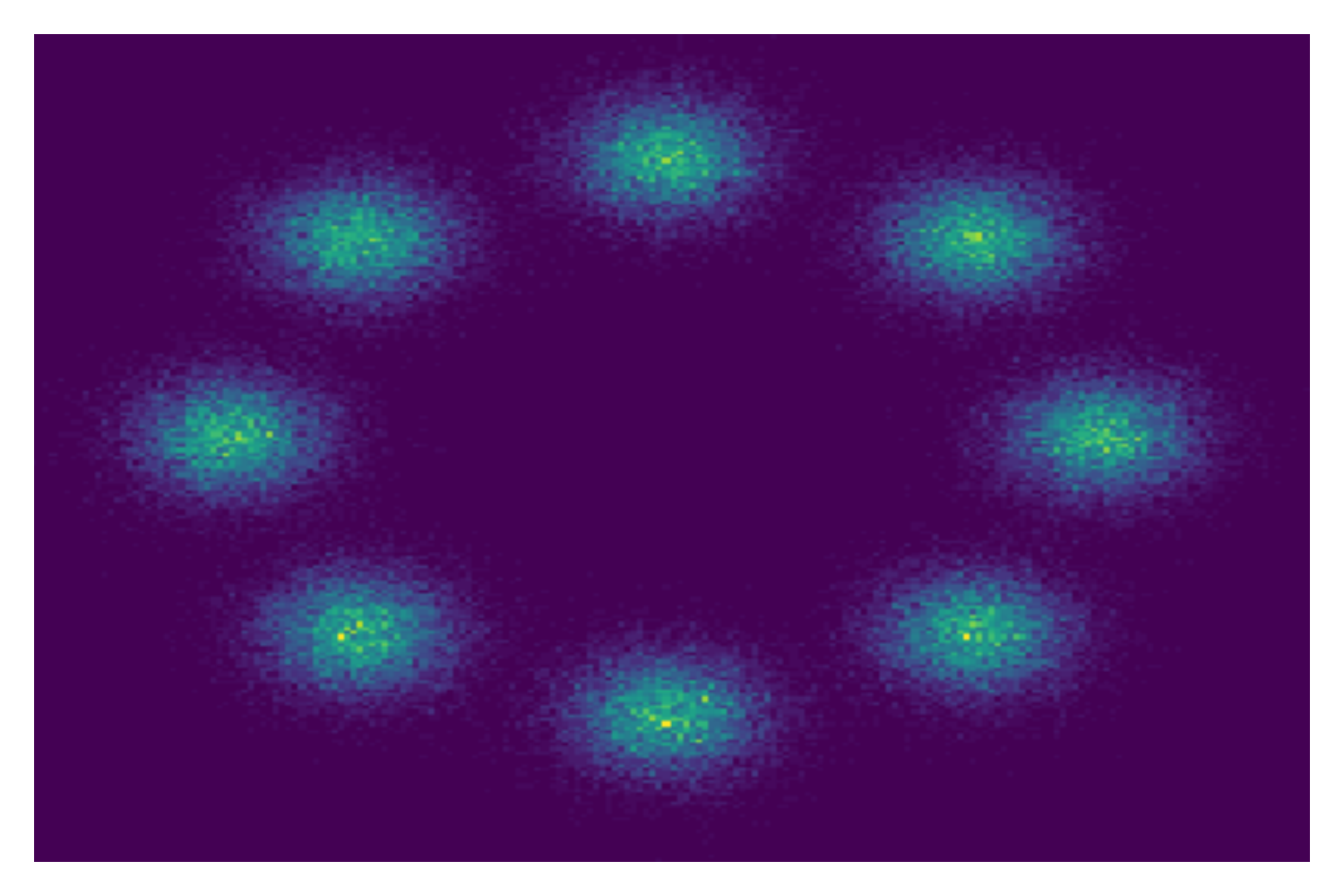}
& \hspace{-6mm}
\includegraphics[height=3.65cm,width=3.65cm]{./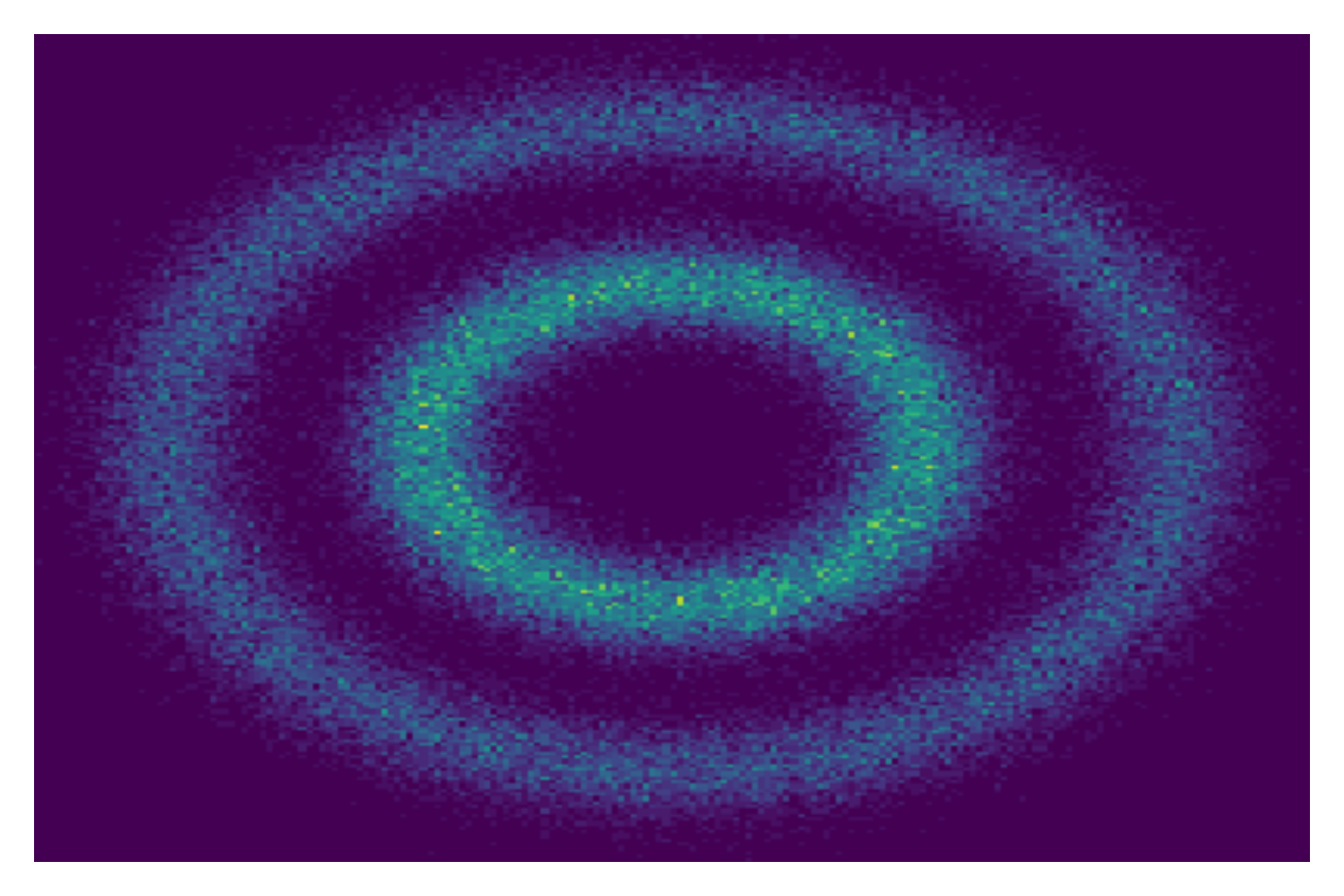}
& \hspace{-6mm}
\includegraphics[height=3.65cm,width=3.65cm]{./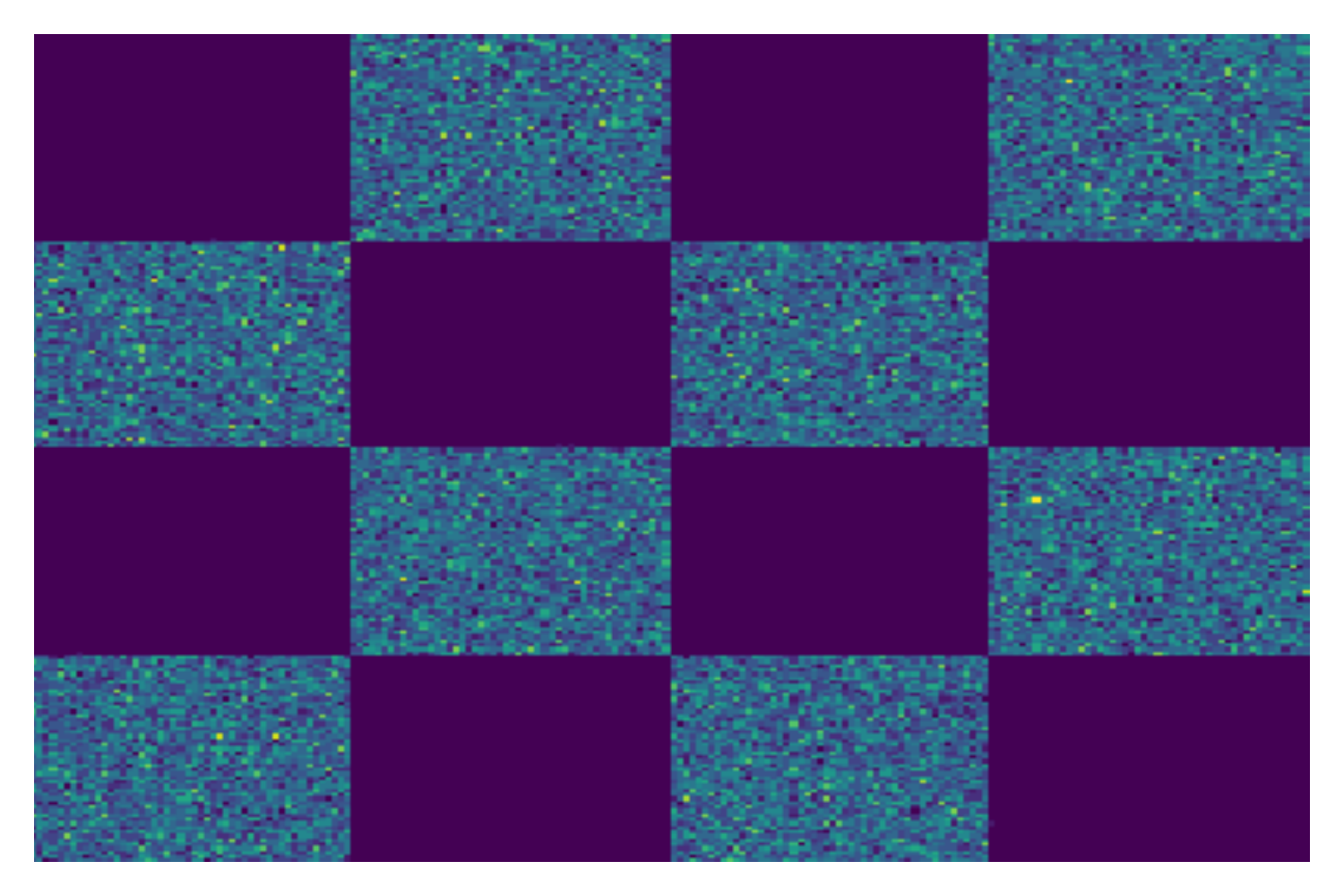}
&\hspace{-6mm}
\includegraphics[height=3.65cm,width=3.65cm]{./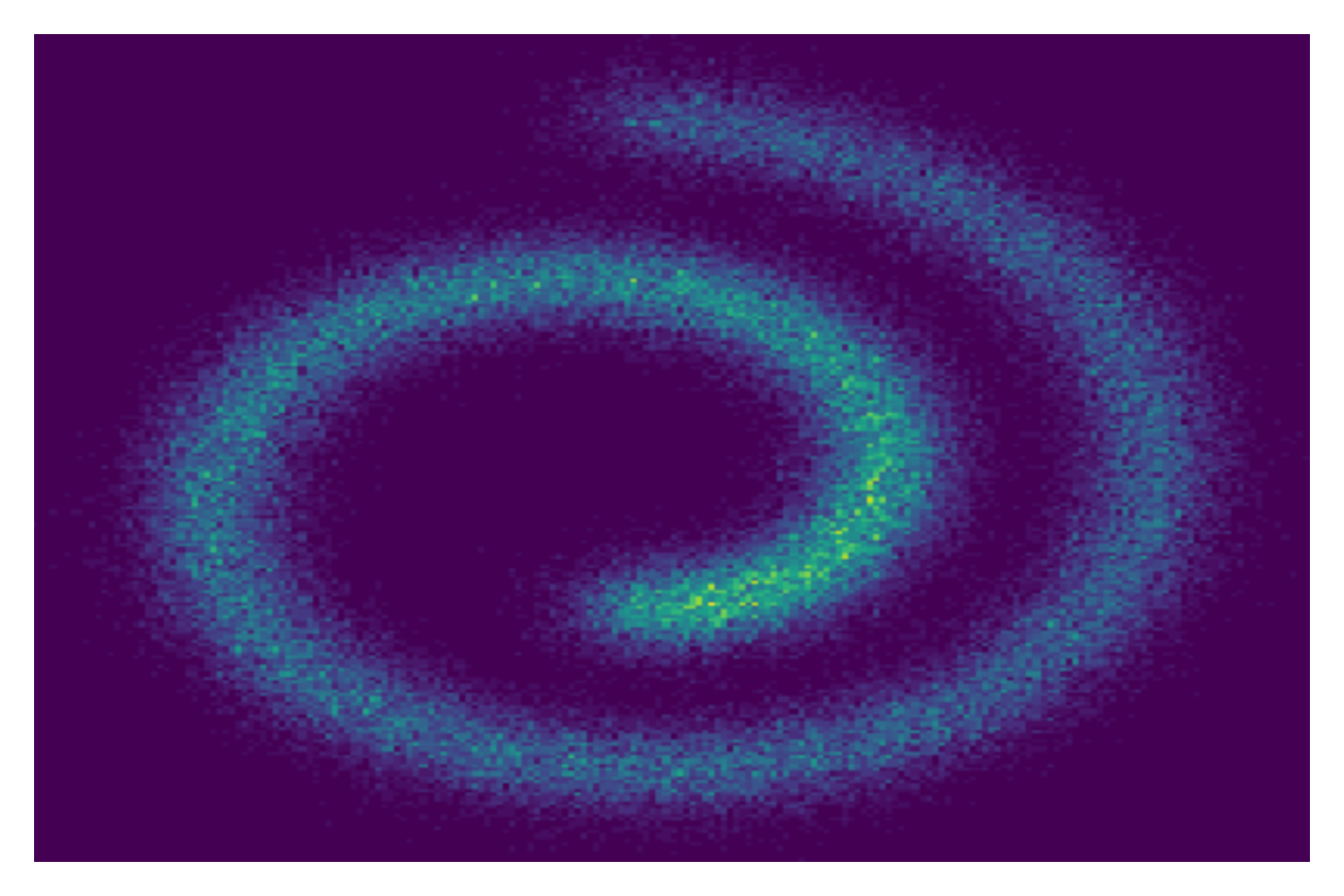} \\
\hspace{-6mm}
\includegraphics[height=4.5cm,width=4.15cm]{./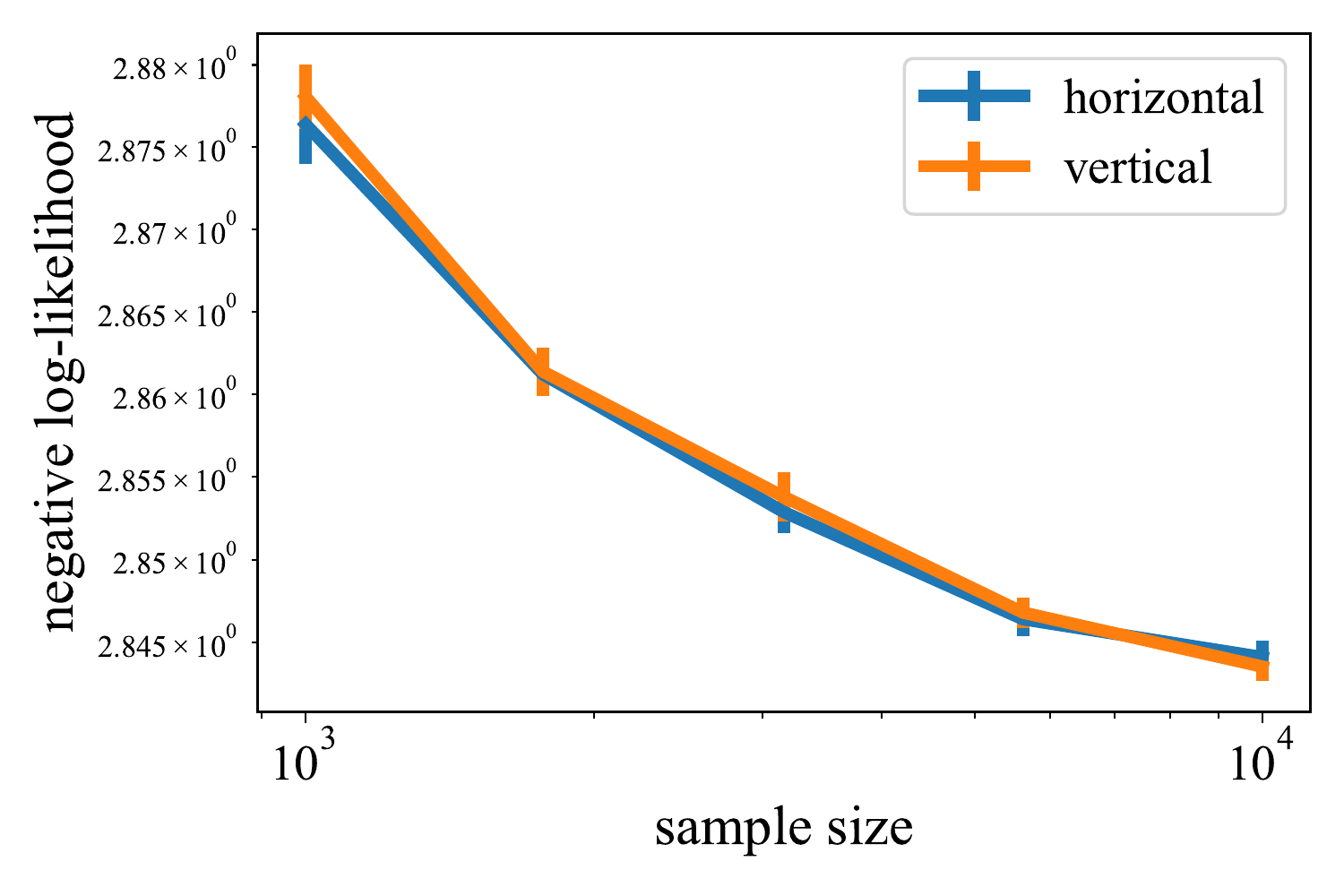}
& \hspace{-6mm}
\includegraphics[height=4.5cm,width=4.15cm]{./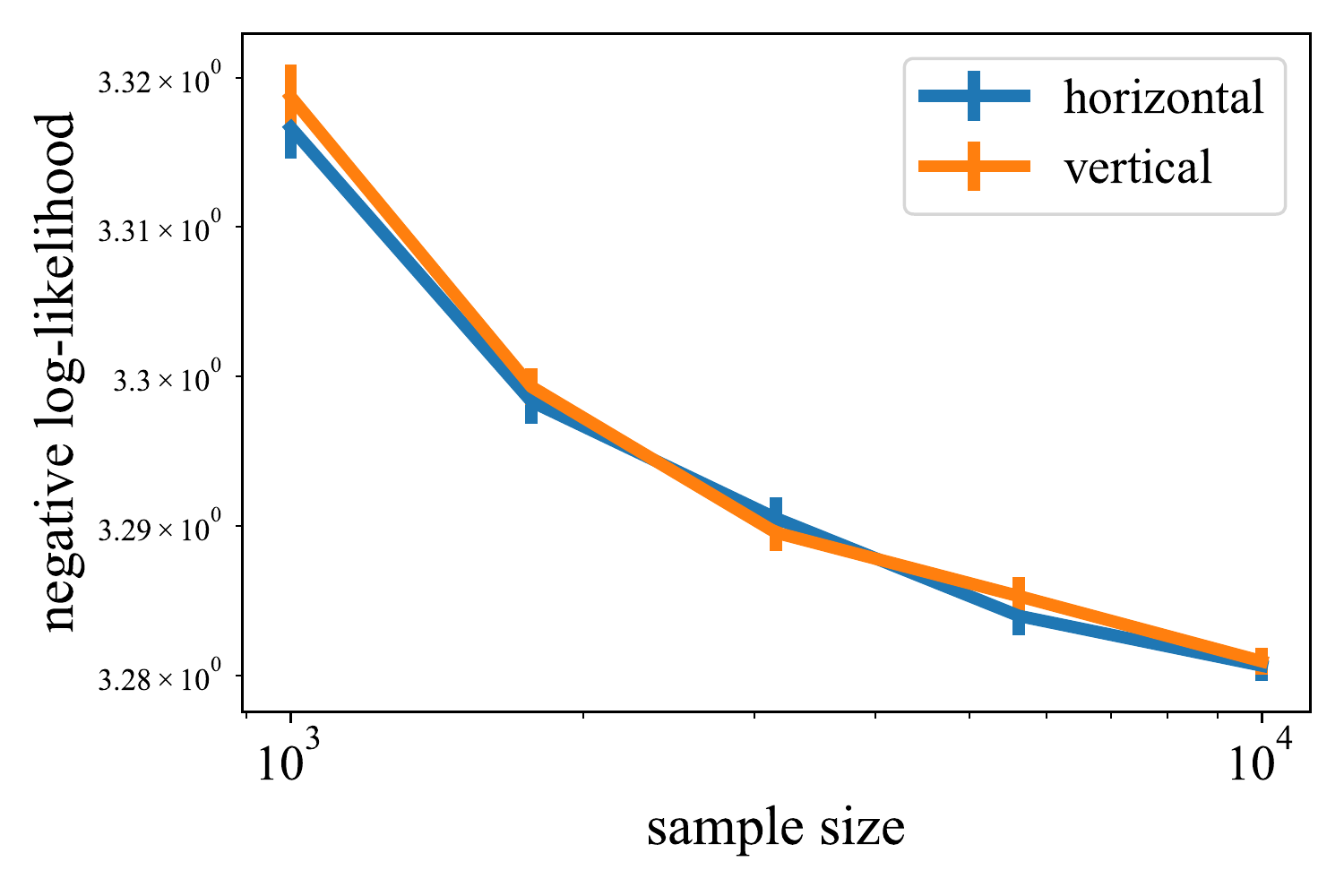}
& \hspace{-6mm}
\includegraphics[height=4.5cm,width=4.15cm]{./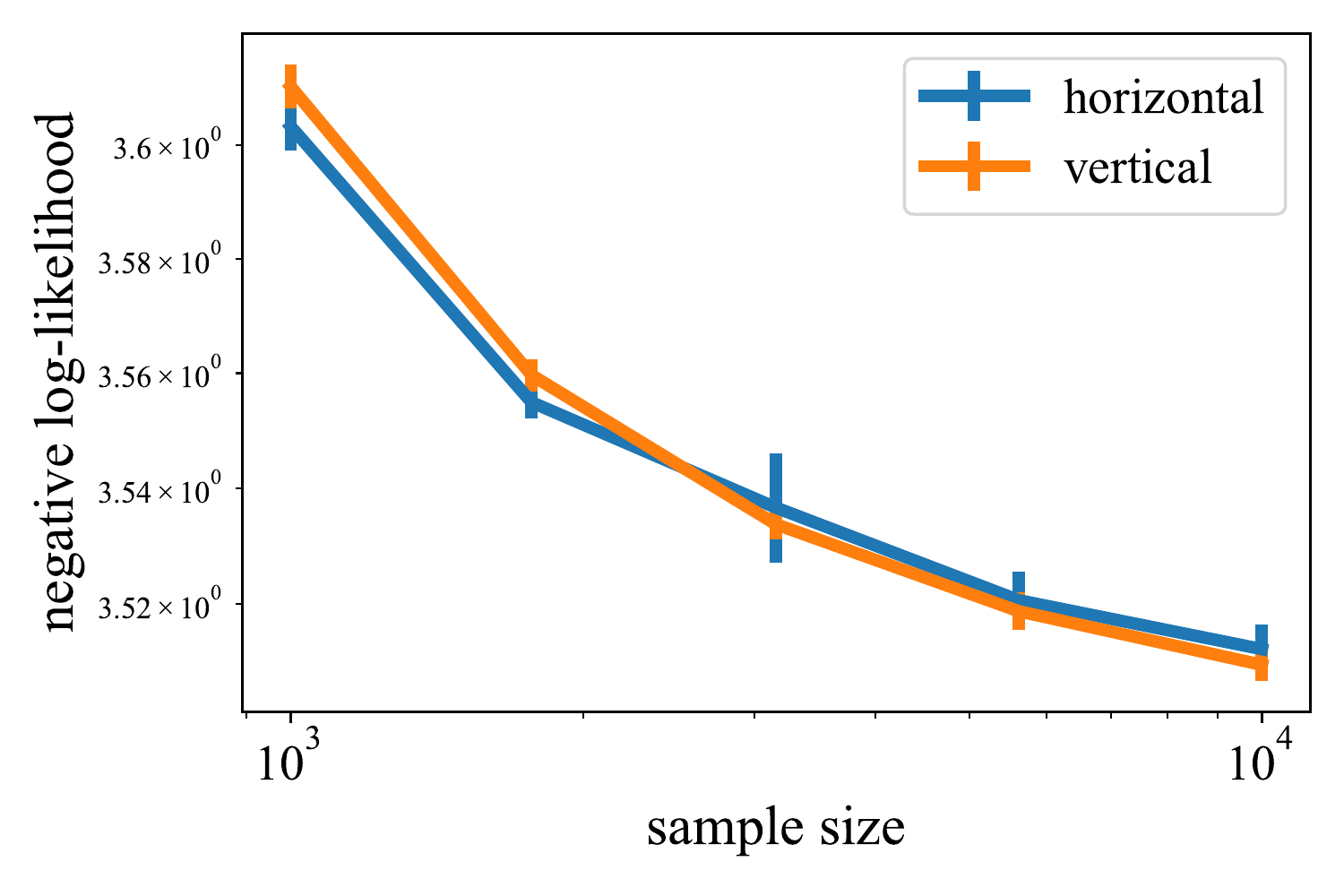}
&\hspace{-6mm}
\includegraphics[height=4.5cm,width=4.15cm]{./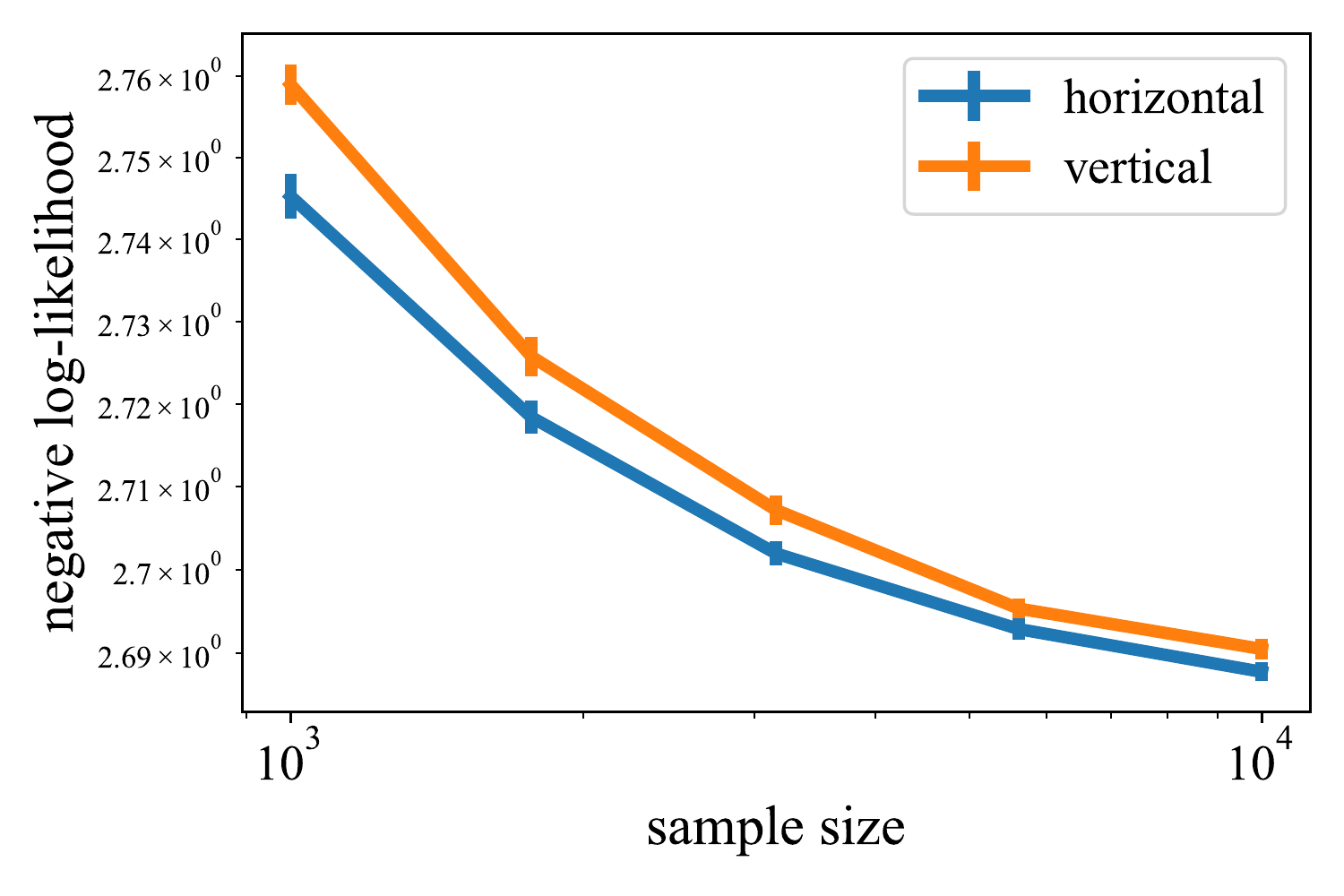}
\end{tabular}
\caption{Experimental comparison of convergence rates on 2D examples (top: density; bottom: rates) for the 8 datasets considered in \cite{mnn,ffjord}.}
\label{fig:umnn}
\end{figure*}

We conducted numerical experiments to illustrate our theoretical results. To estimate the KR map, we used Unconstrained Monotonic Neural Networks Masked Autoregressive Flows (UMNN-MAF), a particular triangular flow introduced in \cite{mnn}, with code to implement the model provided therein. 
UMNN-MAF learns an invertible monotone triangular map targeting the KR rearrangement via KL minimization
\[
S(x;\theta) = \begin{bmatrix}
S_1(x_1,\ldots,x_d;\theta) \\
S_1(x_2,\ldots,x_d;\theta) \\
\vdots \\
S_{d-1}(x_{d-1},x_d;\theta) \\
S_d(x_d;\theta)
\end{bmatrix},
\]
where each component $S_k(x_{k:d};\theta)$ is parametrized as the output of a neural network architecture that can learn arbitrary monotonic functions. Specifically, we have
\begin{equation}
S_k(x_{k:d},\theta) = \int_0^{x_k} f_k(t,h_k(x_{(k+1):d};\phi_k);\psi_k) dt + \beta_k(h_k(x_{(k+1):d};\phi_k);\psi_k), 
\label{eq:mnn}
\end{equation}
where $h_k(\cdot;\phi_k):\mathbb{R}^{d-k-1}\to\mathbb{R}^q$ is a $q$-dimensional neural embedding of $x_{(k+1):d}$ and $\beta_k(\cdot;\psi_k):\mathbb{R}^q\to\mathbb{R}$ is parametrized by a neural network. Each $f_k$ is a strictly positive function parametrized by a neural network, which guarantees that $\int_0^{x_k} f_k$ is increasing in $x_k$. Here the total parameter $\theta$ is defined as $\theta=\{(\psi_k,\phi_k)\}_{k=1}^d$. Further details of the model are provided in \cite{mnn}. We note that UMNN-MAF is captured by the model setup in our theoretical treatment of KR map estimation.

The model was trained via log-likelihood maximization using minibatch gradient descent with the Adam optimizer \cite{adam} with minibatch size 64, learning rate $10^{-4}$, and weight decay $10^{-5}$. The integrand network architecture defined in equation (\ref{eq:mnn}) consisted of 4 hidden layers of width 100. Following Wehenkel and Louppe \cite{mnn}, the architecture of the embedding networks is the best performing MADE network \cite{made} used in NAF \cite{naf}. We used 20 integration steps to numerically approximate the integral in equation (\ref{eq:mnn}). The source density $g$ is a bivariate standard normal distribution. The population negative log-likelihood loss, which differs from the KL objective by a constant factor (namely, the negative entropy of the target density $f$), was approximated by the empirical negative log-likelihood on a large independently generated test set of size $N=10^5$.

Figure \ref{fig:umnn} exhibits our results for UMNN-MAF trained on 8 two-dimensional datasets considered in \cite{mnn} and \cite{ffjord}. Heatmaps of the target densities are displayed in the top rows, while the bottom rows show the log-likelihood convergence rates as the sample size increases from $n=10^3$ to $n=10^4$ on a log-log scale. For each training sample size, we repeated the experiment 100 times. 
We report the mean of the loss over the 100 replicates with 95\% error bars.

These experiments highlight the impact of the ordering of coordinates on the convergence rate, as predicted by Theorem \ref{thm:order}. The blue curves correspond to first estimating the KR map along the horizontal $x_1$ axis, then the vertical axis conditional on the horizontal, $x_2|x_1$. The orange curves show the reverse order, namely estimating the KR map along the vertical $x_2$ axis first. The 5 densities in the top rows (2 spirals, pinwheel, moons, and banana) and the bottom right (swiss roll) are asymmetric in $(x_1,x_2)$ (i.e., $f(x_1,x_2)$ is not exchangeable in $(x_1,x_2)$). These densities exhibit different convergence rates depending on the choice of order. The remaining 3 densities in the bottom rows (8 gaussians, 2 circles, checkerboard) are symmetric in $(x_1,x_2)$ and do not exhibit this behavior, which is to be expected. We also note that a linear trend (on the log-log scale) is visible in the plots for smaller $n$, which aligns with the convergence rates established in Theorem \ref{thm:KLconv}. For larger $n$, however, approximation error dominates and the loss plateaus. 

As an illustrative example, we focus in on the top right panel of Figure \ref{fig:umnn}, which plots the \textit{banana} density $f(x_1,x_2)$ corresponding to the random variables
\begin{align*}
X_2 &\sim N(0,1) \\    
X_1|X_2 &\sim N(X_2^2/2,1/2).
\end{align*}
It follows that $f$ is given by
\begin{align*}
f(x_1,x_2) &= f(x_1|x_2)f(x_2) \\
&\propto \exp\left\{-\left(x_1-x_2^2/2\right)^2\right\}\cdot\exp\left\{-x_2^2/2\right\}.
\end{align*}
Intuitively, estimating the normal conditional $f(x_1|x_2)=N(x_2^2/2,1/2)$ and the standard normal marginal $f(x_2)$ should be easier than estimating $f(x_2|x_1)$ and $f(x_1)$. Indeed, as $x_1$ increases, we see 
that $f(x_2|x_1)$ transitions from a unimodal to a bimodal distribution. As such, 
we expect that estimating the KR map $S^{(21)}$ from $f(x_2,x_1)$ to the source density $g$ 
should be more difficult than estimating the KR map  $S^{(12)}$ from $f(x_1,x_2)$ to $g$. This is because the first component of $S^{(21)}$ targets the conditional distribution $f(x_2|x_1)$ and the second component targets $f(x_1)$, while the first component of $S^{(12)}$ targets $f(x_1|x_2)$ and the second component targets $f(x_2)$. Indeed, this is what we see in the top right panels of Figure \ref{fig:umnn}, which shows the results of fitting UMNN-MAF to estimate $S^{(12)}$ (orange) and $S^{(21)}$ (blue). 
As expected, we see that estimates of $S^{(21)}$ converge more slowly than those of $S^{(12)}$. These results are consistent with the findings of Papamakarios et al. (2017) \cite{maf}, who observed this behavior in estimating the banana density with MADE \cite{made}.

In the first 3 panels of Figure \ref{fig:kong-chaudhuri}, we repeat the experimental setup on the 3 normal mixture densities considered by Kong and Chaudhuri in \cite{pmlr-v108-kong20a}. The conclusions drawn from Figure \ref{fig:umnn} are echoed here. We observe no dependence in the convergence rates on the choice of variable ordering, since the target densities $f(x_1,x_2)$ are exchangeable in $(x_1,x_2)$. Furthermore, we observe a linear convergence rate as predicted by Theorem \ref{thm:KLconv}.

\begin{figure*}[tp]
\centering
\begin{tabular}{llll}
\hspace{-6mm}
\includegraphics[height=3.65cm,width=3.65cm]{./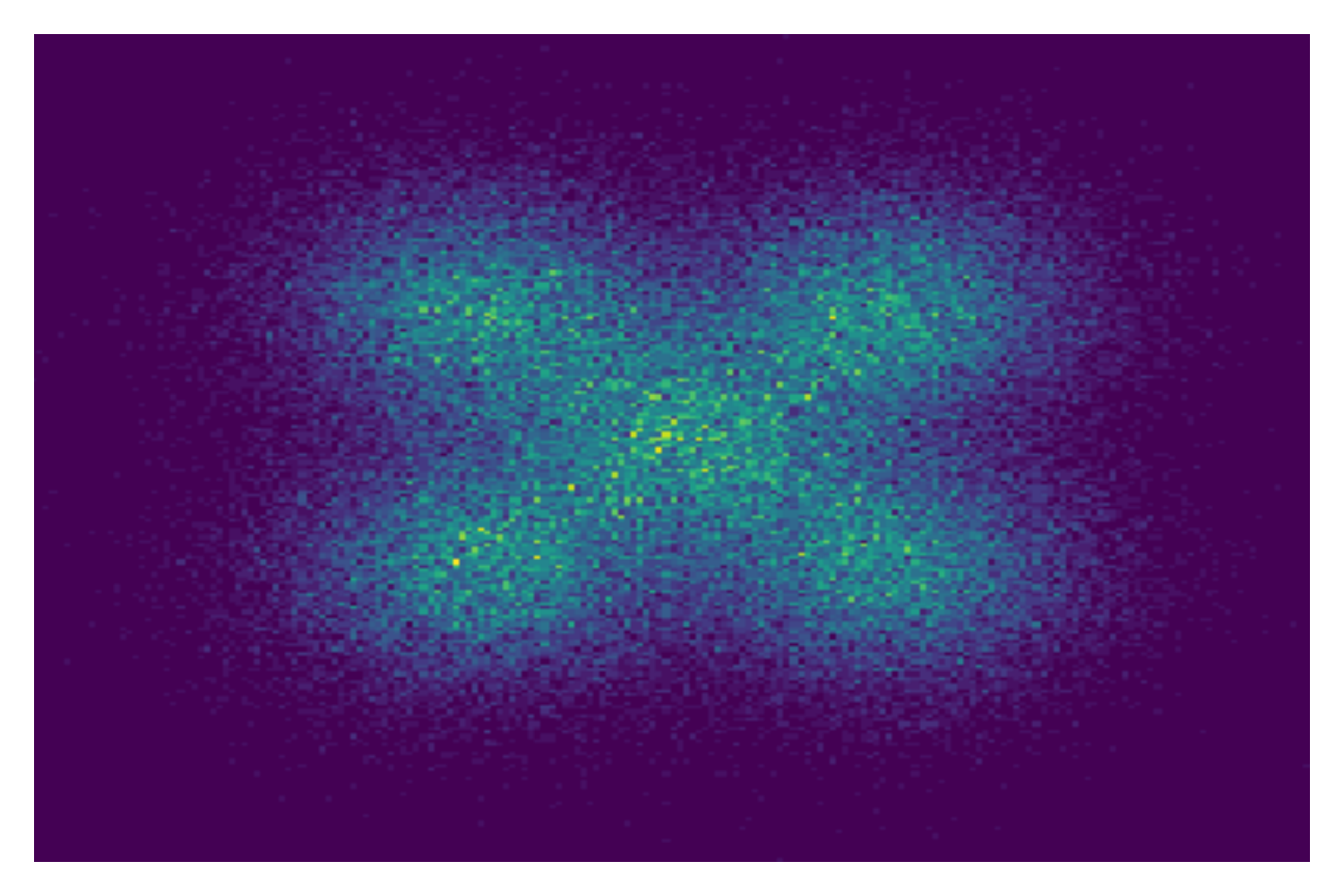}
& \hspace{-6mm}
\includegraphics[height=3.65cm,width=3.65cm]{./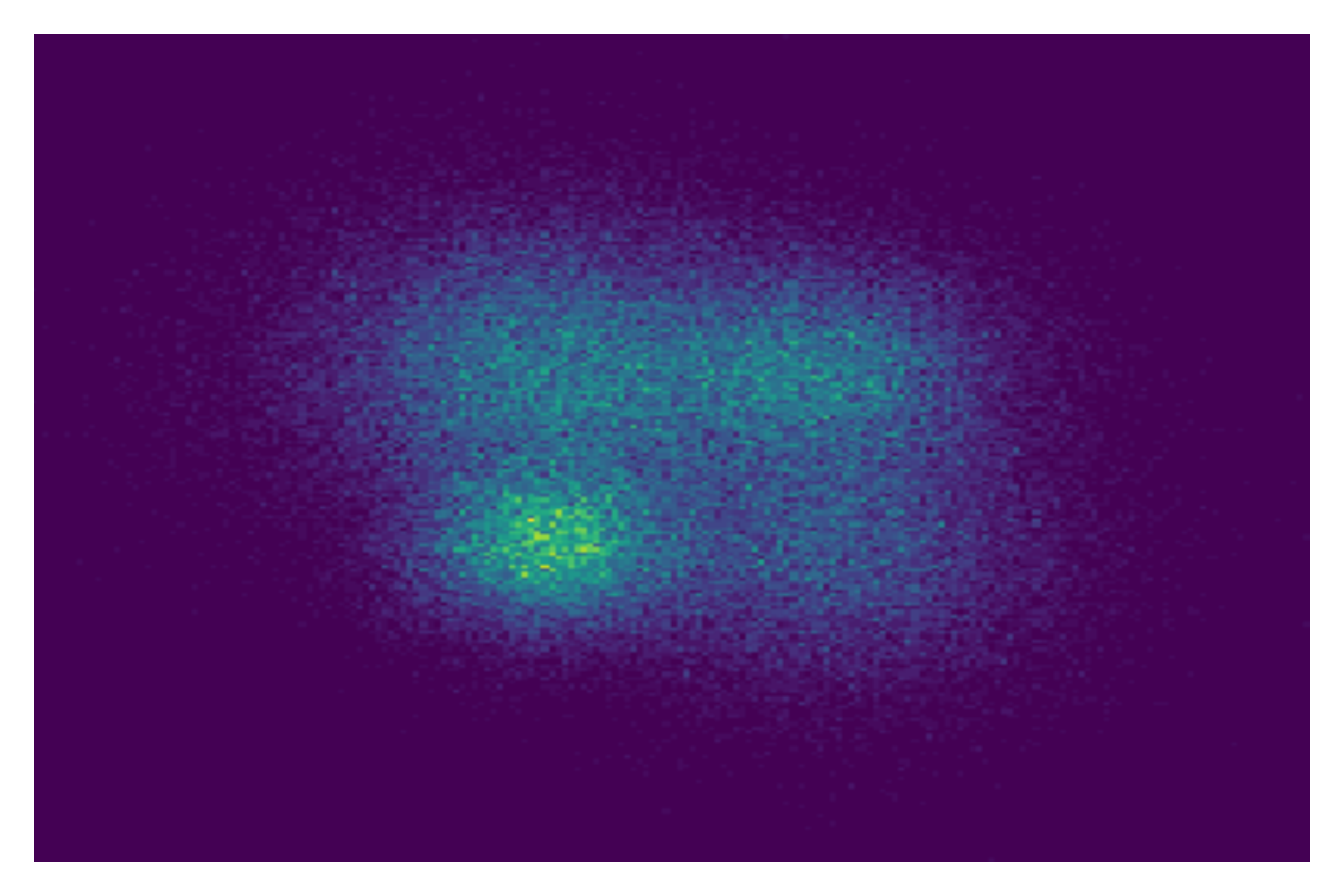}
& \hspace{-6mm}
\includegraphics[height=3.65cm,width=3.65cm]{./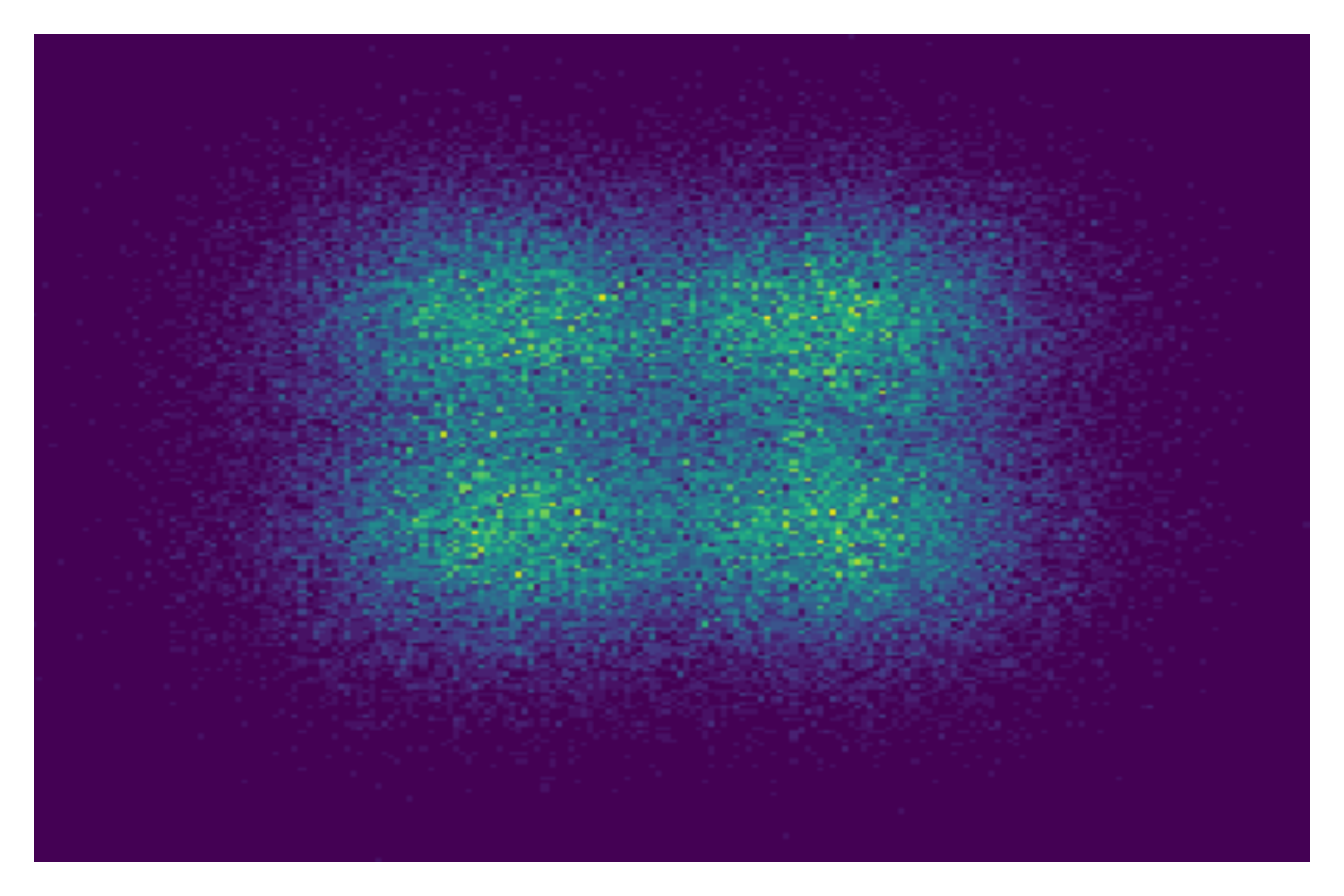} 
& \hspace{-6mm}
\includegraphics[height=3.65cm,width=3.65cm]{./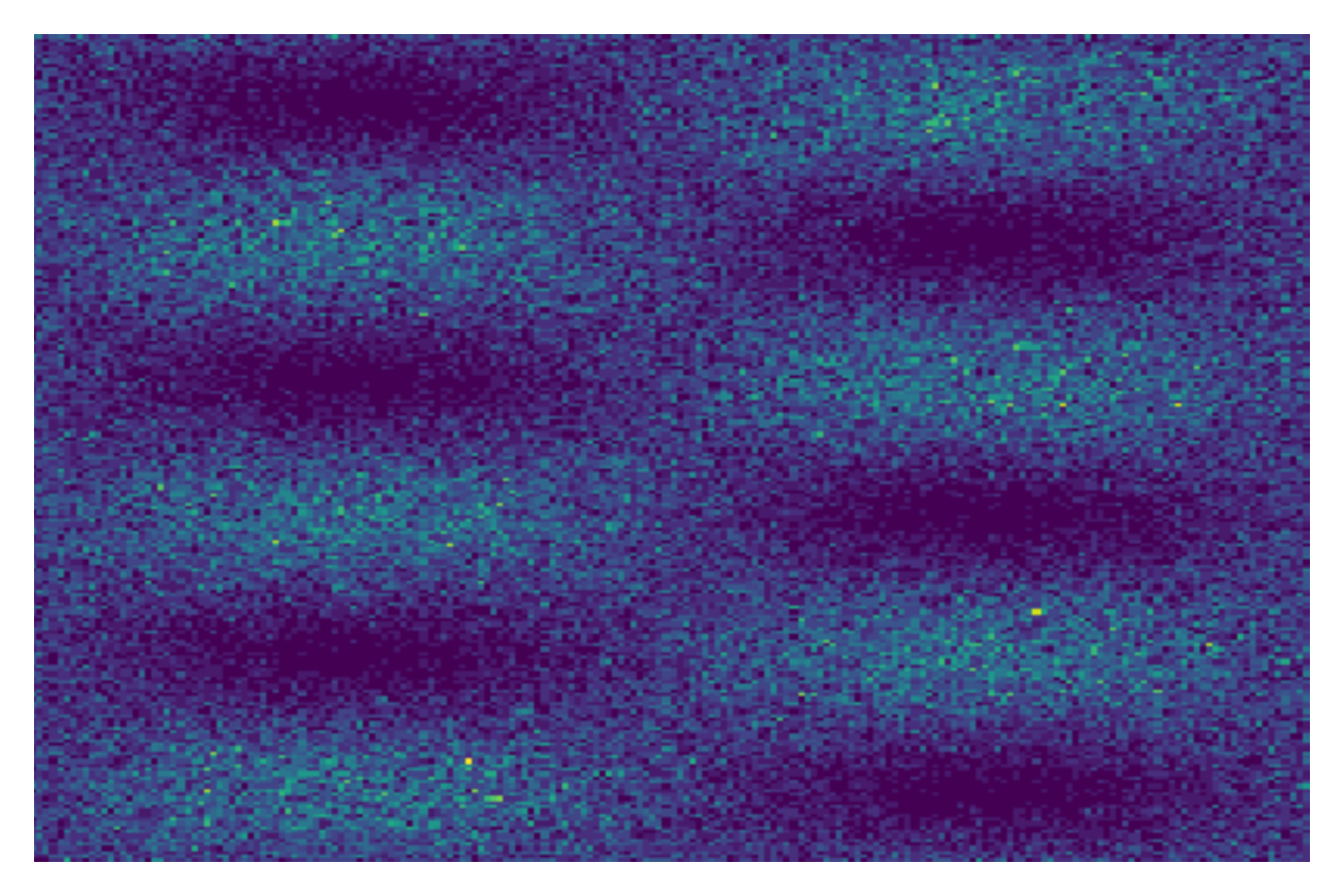} \\
\hspace{-6mm}
\includegraphics[height=4.5cm,width=4.15cm]{./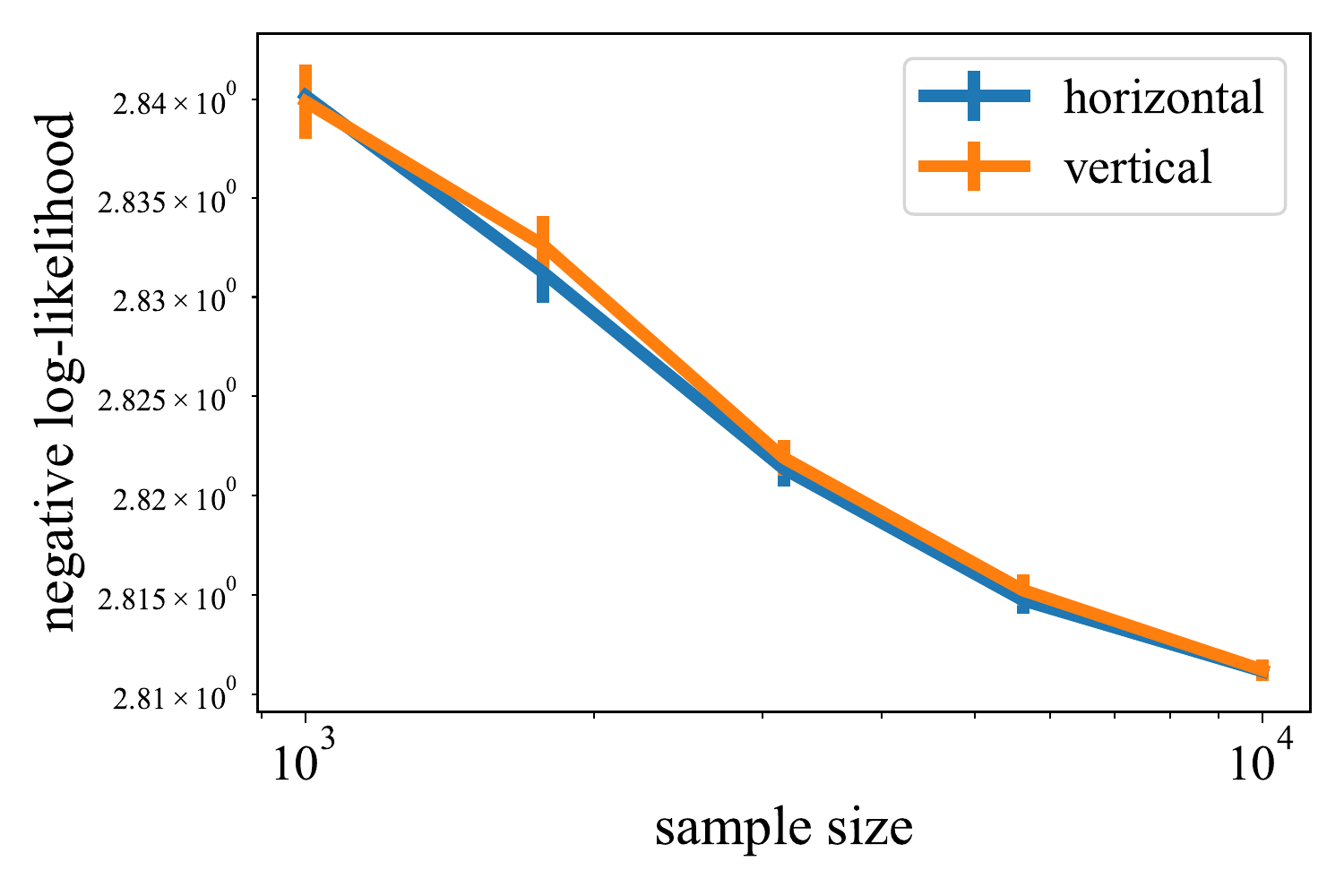}
& \hspace{-6mm}
\includegraphics[height=4.5cm,width=4.15cm]{./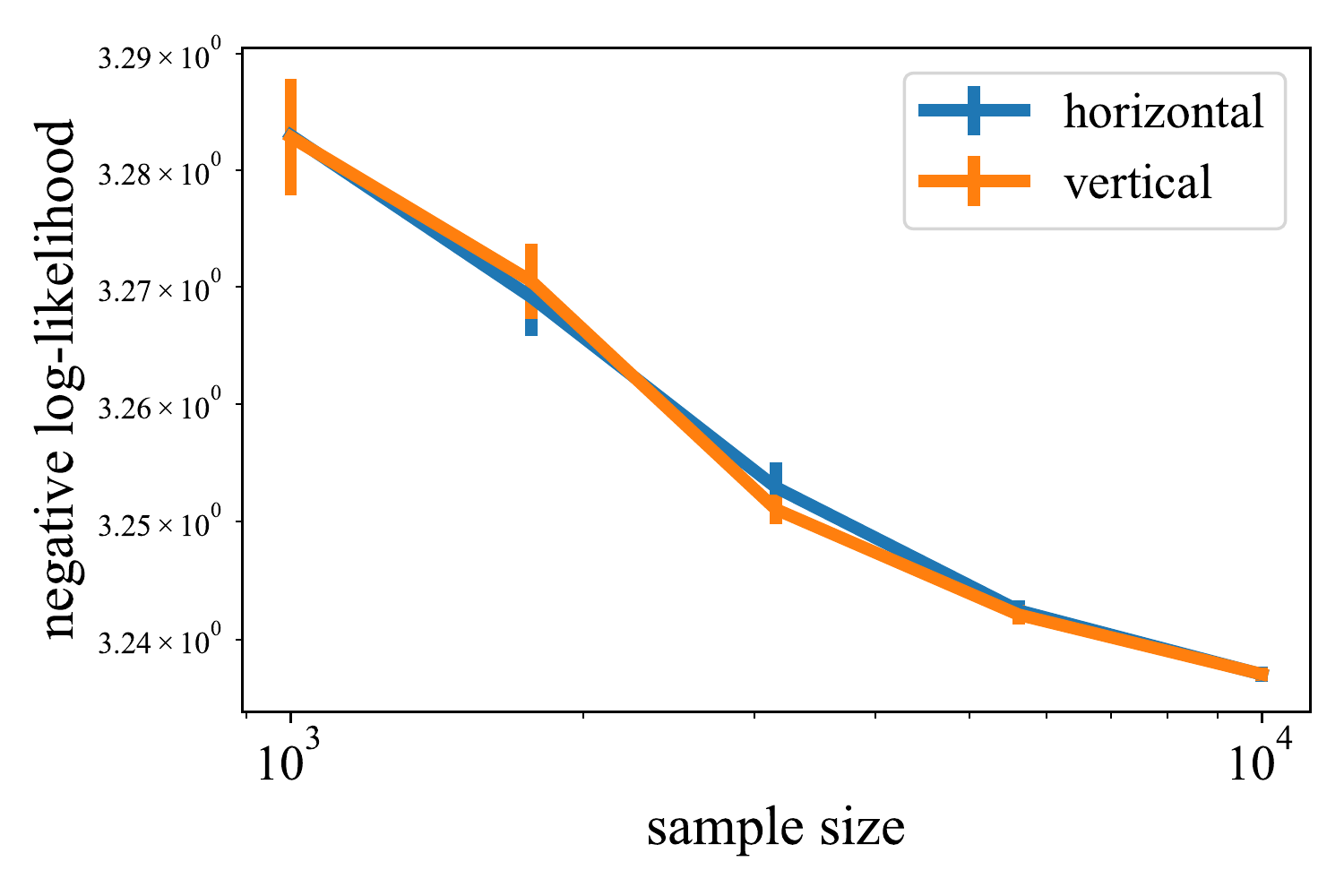}
&\hspace{-6mm}
\includegraphics[height=4.5cm,width=4.15cm]{./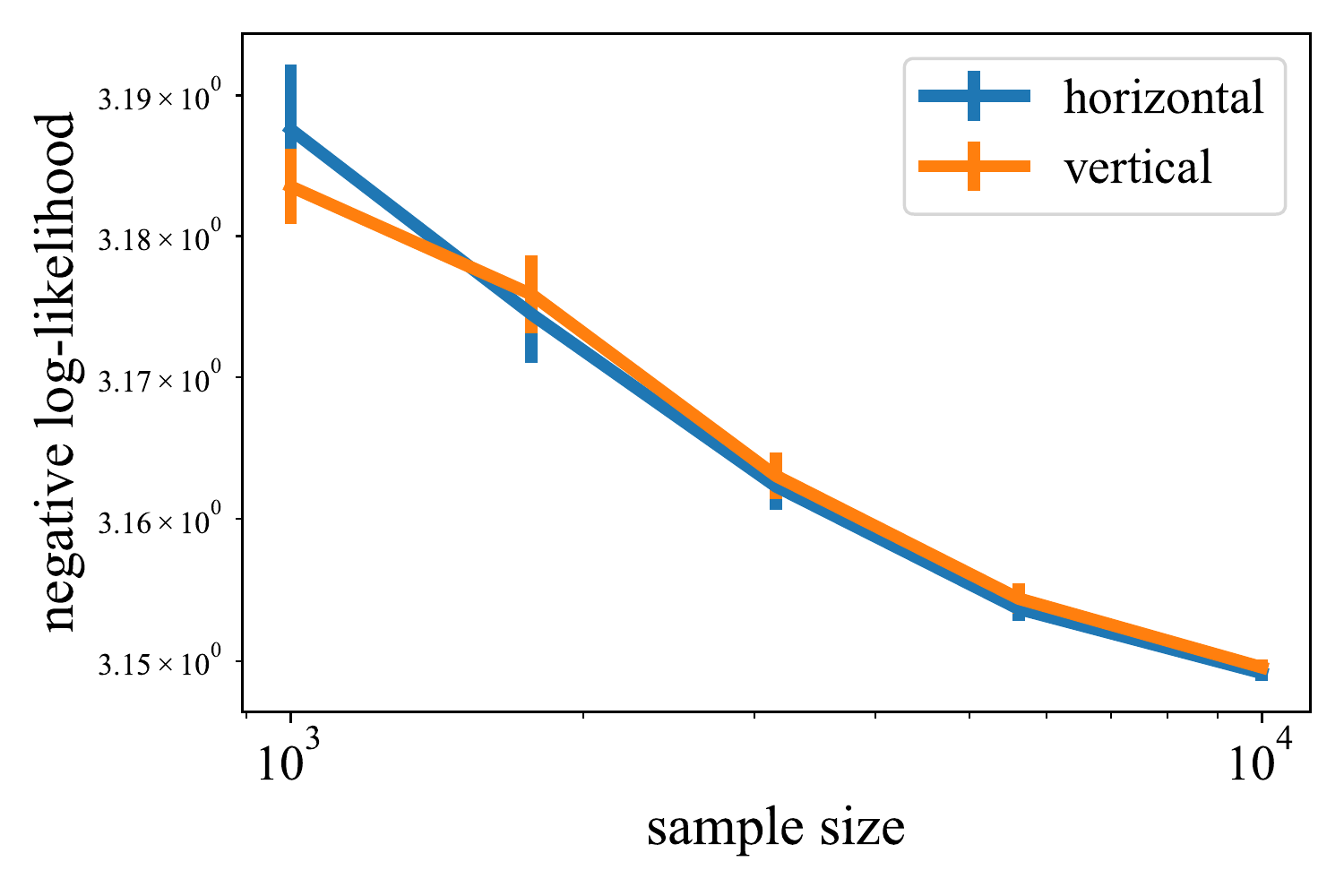}
&\hspace{-6mm}
\includegraphics[height=4.5cm,width=4.15cm]{./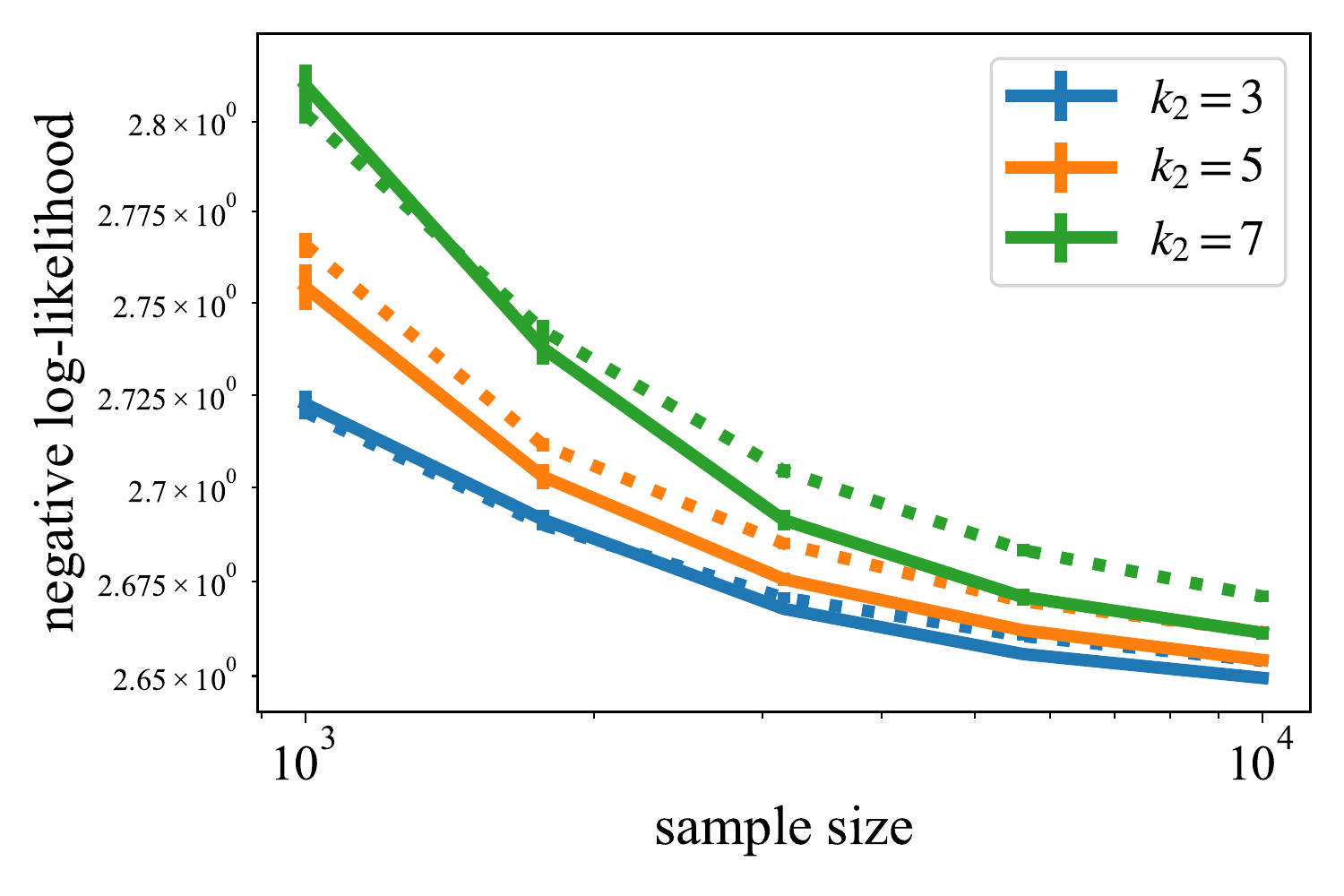} \\
\end{tabular}
\caption{Experimental comparison of convergence rates on 2D examples (top: density; bottom: rates) for the 3 datasets considered in \cite{pmlr-v108-kong20a} and the sine density with varying frequency $k_2\in\{3,5,7\}$.}
\label{fig:kong-chaudhuri}
\end{figure*}

Inspired by the pathological densities constructed in the proof of the ``no free lunch'' Theorem \ref{thm-slow-rates}, which are rapidly oscillating perturbations of the uniform density on the hypercube, we now consider the sine density on the hypercube, defined as
\[
f(x_1,\ldots,x_d) = 1+\prod_{j=1}^d \sin(2\pi k_jx_j)
\]
where 
$k_j\in\mathbb{Z}$ for $j\in[d]$, and
$x=(x_1,\ldots,x_d)\in[0,1]^d$. The smoothness of $f$, as measured by any $L^p$ norm of its derivative(s), decreases as the frequency $|k_j|$ increases. As such, $f$ parametrizes a natural family of functions to test our theoretical results concerning the statistical performance of KR map estimation as a function of the sample size $n$, the smoothness of the underlying target density, and the order of coordinates. The rightmost panels of Figure \ref{fig:kong-chaudhuri} plot the sine density with $k_1=1,k_2=3$ (top row) and convergence rates for the choices $k_1=1$ and $k_2\in\{3,5,7\}$ (bottom row). The dashed lines correspond to estimating the marginal $x_1$ first, followed by $x_2|x_1$; the solid lines indicate the reverse order. We again see an effect of coordinate ordering on convergence rates. It is also apparent that convergence slows down as $k_2$ increases and $f$ becomes less smooth.

\section{Discussion}
\label{sec:discussion}
\paragraph{Related work.}
Previous work on KR couplings has focused on existence and approximation questions in relation to universal approximation~\cite{bogachev2005triangular,alexandrova2006,naf}. This universal approximation property of KR maps has been a frequent motivation from a theoretical viewpoint of normalizing flows; see Sec. 3 in~\cite{kobyzev}. KR maps have been used for various learning and inference problems~\cite{kobyzev,spantini,sampling,moselhy}. 

Kong and Chaudhuri~\cite{pmlr-v108-kong20a} study the expressiveness of basic types of normalizing flows, such as planar flows, Sylvester flows, and Householder flows. In the one-dimensional setting, they show that such flows are universal approximators.
However, when the distributions lives in a $d$-dimensional space with $d\geq 2$, the authors provide a partially negative answer to the universal approximation power of these flows. For example, they exhibit cases where Sylvester flows cannot recover the target distributions. Their results can be seen as complementary to ours as we give examples of arbitrary slow statistical rates and we develop the consistency theory of KR-type flows~\cite{spantini}.

Jaini et al.~\cite{pmlr-v119-jaini20a} investigate the properties of the increasing triangular map required to push a tractable source density with known tails onto a desired target density.
Then they consider the general $d$-dimensional case and show similarly that imposing smoothness condition on the increasing triangular map will result in a target density with the same tail properties as the source. Such results suggest that without any assumption on the target distribution, the transport map might be too irregular to be estimated. These results echo our assumptions on the target to obtain fast rates and complement ours by focusing on the tail behavior while we focus on the consistency and the rates.

\paragraph{Conclusion.}
We have established the
uniform consistency and convergence rates of statistical estimators of the Kn\"othe-Rosenblatt rearrangement, highlighting the anisotropic geometry of function classes at play in triangular flows. Our results shed light on coordinate ordering and lead to statistical guarantees for Jacobian flows. 

\paragraph{Acknowledgements.}
The authors would like to thank the authors of~\cite{pmlr-v108-kong20a} for sharing their code to reproduce their results.
N. Irons is supported by a Shanahan Endowment Fellowship and a Eunice Kennedy Shriver National Institute of Child Health and Human Development training grant, T32 HD101442-01, to the Center for Studies in Demography \& Ecology at the University of Washington, and NSF CCF-2019844. M. Scetbon is supported by "Chaire d’excellence de l’IDEX Paris Saclay". S. Pal is supported by NSF DMS-2052239 and a PIMS CRG (PIHOT). Z. Harchaoui is supported by NSF CCF-2019844, NSF DMS-2134012,  NSF DMS-2023166, CIFAR-LMB, and faculty research awards. Part of this work was done while Z. Harchaoui was visiting the Simons Institute for the Theory of Computing. The authors would like to thank the Kantorovich Initiative of the Pacific Institute for the Mathematical Sciences (PIMS) for supporting this collaboration. This work was first presented at the Joint Statistical Meetings in August 2021.

\bibliography{biblio}
\bibliographystyle{abbrv}

\clearpage
\appendix

\clearpage

This Supplement collects the detailed proofs of the theoretical results stated in the main text and additional materials. 
Sec.~\ref{sec:eq-proof} details the derivations of the Kullback-Leibler objective. Sec.~\ref{sec:slow-proof} details the proof of the slow rates in Sec.~\ref{sec:krmap} of the main text. Sec.~\ref{sec:entropy-proof} provides estimates of metric entropy from Sec.~\ref{sec:consistency} of the main text. Sec.~\ref{sec:consistency-proof} provides the proofs of the statistical consistency from Sec.~\ref{sec:consistency} of the main text. Sec.~\ref{sec:concave-proof} provides the proofs of the Sobolev rates under log-concavity of $g$ from Sec.~\ref{sec:fast} of the main text. Sec.~\ref{sec:ordering-proof} expands on the dimension ordering from Sec.~\ref{sec:fast} of the main text. Sec.~\ref{sec:flows-proof} expands on the extension to Jacobian flows from Sec.~\ref{sec:fast} of the main text. Sec. \ref{sec:separability} discusses conditions under which the KL objective (\ref{min:sample}) defining the KR estimator $S^n=(S^n_1,\ldots,S^n_d)$ is separable in the components $S^n_k$. 
\section{Detailed proofs}

\subsection{Kullback-Leibler objective}
\label{sec:eq-proof}
\begin{proof}[Derivation of (\ref{min:pop})]
By the change of variables formula (\ref{eq:COV}), the density $S\# f$ is given by
\[
(S\# f)(y) = f(S^{-1}(y))|\det (J(S^{-1})(y))|.
\]
Consequently, $\text{KL}(S\# f|g)$ rewrites as
\begin{align*}
\text{KL}(S\# f|g) &= \int_Y (S\# f)(y)\log\left(\frac{(S\# f)(y)}{g(y)}\right)dy \\
&= \int_{\mathcal{Y}} f(S^{-1}(y))|\det(J(S^{-1})(y))|\log\left(\frac{f(S^{-1}(y))|\det(J(S^{-1})(y))|}{g(y)}\right)dy  \\
&= \int_{\mathcal{Y}} f(S^{-1}(y))|\det(JS(S^{-1}(y)))|^{-1}\log\left(\frac{f(S^{-1}(y))}{g(y)|\det(JS(S^{-1}(y)))|}\right)dy \tag{inverse function theorem} \\
&= \int_{\mathcal{X}} f(x)|\det(JS(x))|^{-1}\log\left(\frac{f(x)}{g(S(x))|\det(JS(x))|}\right)\cdot|\det(JS(x))|\ dx \tag{$x:=S^{-1}(y)$}  \\
&= \int_{\mathcal{X}} f(x)\log\left(\frac{f(x)}{g(S(x))|\det(JS(x))|}\right) dx \\
&= \E_{X\sim f}\left\{\log f(X)-\log g(S(X))-\log|\det(JS(x))|\right\} \\
&= \E_{X\sim f}\left\{\log f(X)-\log g(S(X))-\sum_{k=1}^d\log D_kS_k(X)\right\}. 
\end{align*}
The last line follows because $S$ is assumed to be upper triangular and monotone non-decreasing, and therefore
\[
|\det(JS(x))| = \prod_{k=1}^d D_kS_k(x).
\]
Note that in the above calculation, we have also established that
\[
\text{KL}(S\# f|g)=\text{KL}(f|S^{-1}\# g),
\]
for any diffeomorphism $S:\mathbb{R}^d\to\mathbb{R}^d$, since
\begin{align*}
\text{KL}(S\# f|g) &= \int_{\mathcal{X}} f(x)\log\left(\frac{f(x)}{g(S(x))|\det(JS(x))|}\right) dx \\
&= \int_{\mathcal{X}} f(x)\log\left(\frac{f(x)}{(S^{-1}\# g)(x)}\right) dx \tag{change of variables}\\
&= \text{KL}(f|S^{-1}\# g).
\end{align*}
This completes the proof.
\end{proof}

\subsection{Slow rates}
\label{sec:slow-proof}
\subsubsection{Proof of Theorem \ref{thm-slow-rates}}

Our proof follows the argument in Section V of \cite{birge}.

\begin{proof}[Proof of Theorem~\ref{thm-slow-rates}]
For fixed $\epsilon\in(0,1)$, let $\tilde{h}(x;\epsilon)$ be a $C^\infty$ bump function on $\mathbb{R}$ satisfying
\begin{enumerate}
\item $0\le \tilde{h}(x;\epsilon)\le 1 \qquad\forall x\in\mathbb{R}$,
\item $\tilde{h}(x;\epsilon)=1$ on the interval $[\epsilon/4,1/2-\epsilon/4]$,
\item $\tilde{h}(x;\epsilon)=0$ outside of the interval $[0,1/2]$.
\end{enumerate}
Now for $r\in\mathbb{N}$, define the function $h_{\epsilon,r}:[0,1]^d\to[-1,1]$ by
\[
h_{\epsilon,r}(x) = \tilde{h}(x_1r;\epsilon) - \tilde{h}(x_1r-1/2;\epsilon). 
\]
It is clear that $h_{\epsilon,r}$ is smooth, $\sup_{x\in[0,1]^d}|h_{\epsilon,r}(x)|\le 1$, and $\int h_{\epsilon,r}(x)dx = 0$. Therefore, $1+h_{\epsilon,r}\in\mathcal{F}$ is a smooth Lebesgue density on $[0,1]^d$ uniformly bounded by 2. Also note that $|h_{\epsilon,r}(x)|=1$ whenever $[\epsilon/4\le x_1r\le1/2-\epsilon/4]$ or $[1/2+\epsilon/4\le x_1r\le1-\epsilon/4]$. Furthermore, the support of $h_{\epsilon,r}$ is contained in the set $[0,1/r]\times[0,1]^{d-1}$. It follows that
\begin{align*}
\text{TV}(1+h_{\epsilon,r},1-h_{\epsilon,r}) &= \frac{1}{2}\int |(1+h_{\epsilon,r}(x))-(1-h_{\epsilon,r}(x))|dx \\
&= \int |h_{\epsilon,r}(x)|dx \\
&\ge \int_{[\epsilon/4\le x_1r\le 1/2-\epsilon/4]\cup[1/2+\epsilon/4\le x_1r\le 1-\epsilon/4]} 1\ dx_1 \\
&= \frac{1-\epsilon}{r},
\end{align*}
and
\begin{align*}
\text{TV}(1+h_{\epsilon,r},1-h_{\epsilon,r}) &= \int |h_{\epsilon,r}(x)|dx \\
&\le \int_{[0,1/r]} 1\ dx_1 \\
&=1/r.
\end{align*}
As we will see, these bounds on the total variation imply that the perturbations $1+h_{\epsilon,r},1-h_{\epsilon,r}$ are sufficiently similar to make identification a challenging task, but sufficiently different to incur significant loss when mistaken for each other.

Now define the translates $h_i(x;\epsilon,r)=h_{\epsilon,r}\left(x_1-\frac{i-1}{r},x_2,\ldots,x_d\right)$,  which are disjointedly supported with support contained in $H_i=[(i-1)/r,i/r]\times[0,1]^{d-1}$ for $i=1,\ldots,r$. Hereafter, we suppress dependence of $h_i$ on $\epsilon,r$ for notational convenience. Consider the family of densities 
\[
\mathcal{F}(\epsilon,r)=\left\{1+\sum_{i=1}^r \delta_i h_i : \delta_i=\pm 1\right\}\subset\mathcal{F}
\] 
with cardinality $2^r$. For $\delta\in\{\pm1\}^r$ we write
\[
f_\delta = 1+\sum_{i=1}^r \delta_i h_i.
\]
The worst-case KL risk on $\mathcal{F}$ of any density estimate $f_n=T^n\# g$ derived from a KR map estimate $T^n$ can be bounded below as
\begin{align*}
\sup_{f\in\mathcal{F}}\E_f[\text{KL}(f|f_n)] &\ge \sup_{f\in\mathcal{F}(\epsilon,r)} \E_f[\text{KL}(f|f_n)] \\
&\ge \sup_{f\in\mathcal{F}(\epsilon,r)} \E_f[2\text{TV}(f,f_n)^2] \tag{Pinsker's inequality} \\
&\ge \sup_{f\in\mathcal{F}(\epsilon,r)} 2\E_f[\text{TV}(f,f_n)]^2 \tag{Jensen's inequality}.
\end{align*}
We aim to lower bound the total variation risk on $\mathcal{F}(\epsilon,r)$. We have 
\begin{align*}
\sup_{f\in\mathcal{F}(\epsilon,r)} \E_f[\text{TV}(f,f_n)] &\ge 2^{-r}\sum_{f\in\mathcal{F}(\epsilon,r)} \E_f[\text{TV}(f,f_n)] \\
&= 2^{-r}\sum_{\delta\in\{\pm 1\}^r} \E_{f_\delta}\left[\text{TV}\left(f_\delta,f_n\right)\right],
\end{align*}
i.e, the worst-case risk is larger than the Bayes risk associated to the uniform prior on $\mathcal{F}(\epsilon,r)$.
Now note that
\begin{align*}
\text{TV}\left(f_\delta,f_n\right) &= \frac{1}{2}\int |f_n(x)-f_\delta(x)|dx\\
&= \frac{1}{2}\int\left|f_n(x)-\left(1+\sum_{i=1}^r \delta_i h_i(x)\right)\right| dx \\
&= \frac{1}{2}\sum_{i=1}^r \int_{H_i}|f_n(x)-(1+\delta_i h_i(x))|dx \tag{$\{H_i\}$ are disjoint}.
\end{align*}
Define
\[
\ell_i(f_n) = \frac{1}{2}\int_{H_i}|f_n(x)-(1+ h_i(x))|dx, \qquad \ell_i'(f_n) = \frac{1}{2}\int_{H_i}|f_n(x)-(1-h_i(x))|dx
\]
and note that, by the triangle inequality,
\[
\ell_i(f_n)+\ell_i'(f_n) \ge \frac{1}{2}\int_{H_i}|(1+h_i(x))-(1-h_i(x))|dx =\int|h_i(x)|dx \ge \frac{1-\epsilon}{r}.
\]
Writing $F_\delta^n$ to denote the cdf of an iid sample of size $n$ from $f_\delta$, we then have
\begin{align*}
2^{-r}\sum_{\delta} \E_{f_\delta}\left[\text{TV}\left(f_\delta,f_n\right)\right] &= 2^{-r}\sum_{i=1}^r\left\{\sum_{\delta_i=1} \E_{f_\delta}[\ell_i(f_n)]+ \sum_{\delta_i=-1}\E_{f_\delta}[\ell_i'(f_n)]\right\} \\
&=\frac{1}{2}\sum_{i=1}^r \left\{\int \ell_i(f_n) d\left[2^{1-r}\sum_{\delta_i=1}F_\delta^n\right]+\int \ell_i'(f_n) d\left[2^{1-r}\sum_{\delta_i=-1}F_\delta^n\right]
\right\} \\
&\ge \frac{1}{2}\sum_{i=1}^r\left\{\int[\ell_i(f_n)+\ell_i'(f_n)]d\left(2^{1-r}\sum_{\delta_i=1}F_\delta^n\wedge 2^{1-r}\sum_{\delta_i=-1}F_\delta^n\right)\right\} \\
&\ge \frac{1-\epsilon}{2r}\sum_{i=1}^r \int d\left(2^{1-r}\sum_{\delta_i=1}F_\delta^n\wedge 2^{1-r}\sum_{\delta_i=-1}F_\delta^n\right) \\
&:= \frac{1-\epsilon}{2r}\sum_{i=1}^r \pi\left(2^{1-r}\sum_{\delta_i=1}F_\delta^n, 2^{1-r}\sum_{\delta_i=-1}F_\delta^n\right),
\end{align*}
where $x\wedge y=\min(x,y)$. In the last line we defined the testing affinity $\pi$ between two distribution functions $F_p,F_q$ with Lebesgue densities $p,q$,
\[
\pi(F_p,F_q) =\int d(F_p\wedge F_q) = \int (p\wedge q)dx,
\]
which satisfies the well known identity
\[
\pi(F_p,F_q)= 1-\text{TV}(p,q).
\]
Since $\min$ is concave, Jensen's inequality implies that
\begin{align*}
\pi\left(2^{1-r}\sum_{\delta_i=1}F_\delta^n, 2^{1-r}\sum_{\delta_i=-1}F_\delta^n\right) &\ge 2^{1-r}\sum_{(\delta,\delta')\in\Delta_i}\pi(F_\delta^n,F_{\delta'}^n),
\end{align*}
where $\Delta_i = \{(\delta,\delta'):\delta_i=1,\delta'_i=-1,\delta_j=\delta'_j \quad\forall j\neq i\}$. For any $(\delta,\delta')\in\Delta_i$, we have
\begin{align*}
\pi(F_\delta^n,F_{\delta'}^n) &= 1-\text{TV}(f_\delta,f_\delta') \\
&= 1-\frac{1}{2}\int |f_\delta(x)-f_{\delta'}(x)| dx \\
&= 1-\int |h_i(x)| dx \\
&\ge 1-\frac{1}{r}.
\end{align*}
Hence, we conclude that
\[
\pi\left(2^{1-r}\sum_{\delta_i=1}F_\delta^n, 2^{1-r}\sum_{\delta_i=-1}F_\delta^n\right) \ge 2^{1-r}\sum_{\Delta_i}\left(1-\frac{1}{r}\right)=1- \frac{1}{r}.
\]
Thus, putting this all together, we have shown that
\begin{align*}
\sup_{f\in\mathcal{F}(\epsilon,r)} \E_f[\text{TV}(f,f_n)] &\ge \frac{1-\epsilon}{2r}\sum_{i=1}^r\left(1-\frac{1}{r}\right) \\
&= \frac{1-\epsilon}{2}\left(1-\frac{1}{r}\right).
\end{align*}
Sending $\epsilon\to 0$ and $r\to\infty$, it follows that
\[
\sup_{f\in\mathcal{F}(\epsilon,r)} \E_f[\text{TV}(f,f_n)] \ge 1/2,
\]
and hence
\[
\sup_{f\in\mathcal{F}}\E_f[\text{KL}(f|f_n)] \ge \sup_{f\in\mathcal{F}(\epsilon,r)} 2\E_f[\text{TV}(f,f_n)]^2 \ge 1/2.
\]
This completes the proof.
\end{proof}

\subsubsection{Proof of Theorem \ref{thm:slow2}}

\begin{proof}
Let $h_n$ denote the first marginal of the density estimate $f_n=T^n\# g$. Note that $h_n$ is a density on $\mathbb{R}$. Defining $\pi_1:\mathbb{R}^d\to\mathbb{R}$ to be the projection along the first factor, we have 
\[
h_n = \pi_1\# f_n = (\pi_1\circ T^n)\# g.
\]
By Problem 7.5 in Devroye et al. (1996) \cite{devroye}, for any positive sequence $1/16\ge b_1\ge b_2\cdots$ converging to zero and any density estimate $h_n$ there exists a density $h$ on $\mathbb{R}$ such that
\[
\E\left\{\int |h(x)-h_n(x)|dx\right\} \ge b_n.
\]
Letting TV denote the total variation distance, this inequality can be rewritten as 
\[
2\E[\text{TV}(h,h_n)]\ge b_n.
\]
Setting $a_n=b_n^2/2$, which satisfies $a_1\le 1/512$, we find that
\begin{align*}
\E[\text{KL}(h|h_n))] &\ge \E[2\cdot \text{TV}(h,h_n)^2] \tag{Pinsker's inequality} \\
&\ge 2\E[\text{TV}(h,h_n)]^2 \tag{Jensen's inequality} \\
&\ge b_n^2/2 \\
&= a_n.
\end{align*}
Finally, let $f=h^{\otimes d}$ be the density on $\mathbb{R}^d$ defined as a $d$-fold product of $h$. By the chain rule of relative entropy, it follows that
\begin{align*}
\E[\text{KL}(f|f_n)] \ge \E[\text{KL}(h|h_n)] \ge a_n.
\end{align*}
This completes the proof.
\end{proof}

\subsection{Upper bounds on metric entropy}
\label{sec:entropy-proof}
We begin by defining relevant Sobolev function spaces, for which metric entropy bounds are known.

\begin{definition}
For $\mathcal{X}\subseteq\mathbb{R}^d$, define the function space
    \[
    D_s(\mathcal{X}) = \{\phi:D^\alpha \phi\text{ are uniformly continuous for all }|\alpha|\le s\}.
    \]
    and its subset
    \[
    C_s(\mathcal{X}) =\left\{\phi:\mathcal{X}\to\mathbb{R}:\sum_{0\le|\alpha|\le s} \|D^\alpha \phi\|_\infty < \infty\right\}\cap D_s(\mathcal{X}).
    \]
    endowed with the Sobolev norm 
    \[
    \|\phi\|_{H^{s,\infty}(\mathcal{X})} = \sum_{|\alpha|\le s}\|D^\alpha\phi\|_\infty.
    \]
\end{definition}

\begin{prop}[Corollary 3, Nickl and P\"otscher (2007) \cite{nickl}]
Assume $\mathcal{X}\subset\mathbb{R}^d$ is compact and let $\mathcal{F}$ be a bounded subset of $C_s(\mathcal{X})$ with respect to $\|\cdot\|_{H^{s,\infty}(\mathcal{X})}$ for some $s>0$.
The metric entropy of $\mathcal{F}$ in the $L^\infty$ norm is bounded as
\[
H(\epsilon, \mathcal{F},\|\cdot\|_\infty) 
\lesssim \epsilon^{-d/s}.
\]
\label{prop:nickl}
\end{prop}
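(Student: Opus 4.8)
## Proof Proposal

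\textbf{Goal.} We must prove Proposition~\ref{prop:nickl}: if $\mathcal{X}\subset\mathbb{R}^d$ is compact and $\mathcal{F}$ is a bounded subset of $C_s(\mathcal{X})$ in the Sobolev norm $\|\cdot\|_{H^{s,\infty}(\mathcal{X})}$, then $H(\epsilon,\mathcal{F},\|\cdot\|_\infty)\lesssim\epsilon^{-d/s}$. Since this is attributed to Nickl and P\"otscher~\cite{nickl}, the plan is to reconstruct the standard argument rather than invent a new one.

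\textbf{Plan.} First I would reduce to a cube: since $\mathcal{X}$ is compact, enclose it in a large cube $Q=[-R,R]^d$, and observe that covering numbers can only decrease under restriction of the domain, so it suffices to bound $H(\epsilon,\mathcal{F},\|\cdot\|_{L^\infty(Q)})$ for $\mathcal{F}$ bounded in $H^{s,\infty}(Q)$ (using an extension operator, or simply working on $Q$ throughout with a uniform radius $B$ on $\sum_{|\alpha|\le s}\|D^\alpha\phi\|_\infty$). Second, I would recall that $\mathcal{F}$ bounded in $H^{s,\infty}$ embeds into a H\"older ball: writing $s=\lfloor s\rfloor + \{s\}$ (or, in the integer case considered in Assumption~\ref{assumption:smooth}, with H\"older exponent obtained from the mean value theorem on the top-order derivatives restricted to a further shrunken compact set where they are uniformly continuous), every $\phi\in\mathcal{F}$ has all partials up to order $\lfloor s\rfloor$ bounded by $B$ and the top-order partials are uniformly continuous, which on a compact cube upgrades to a uniform modulus of continuity. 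Thus $\mathcal{F}$ sits inside a ball of the classical H\"older space $C^{s}(Q)$ of some finite radius.

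\textbf{Core estimate.} The heart of the proof is the Kolmogorov--Tikhomirov entropy bound for H\"older balls: for the unit ball $\mathcal{B}$ of $C^s(Q)$ with $Q\subset\mathbb{R}^d$ a cube, $H(\epsilon,\mathcal{B},\|\cdot\|_\infty)\lesssim\epsilon^{-d/s}$. I would prove this by the standard discretization: partition $Q$ into $N^d$ subcubes of side $\delta=2R/N$; to each $\phi$ associate the finite Taylor data, namely the values $D^\alpha\phi$ at the center of each subcube for all $|\alpha|\le\lfloor s\rfloor$, each quantized to a grid of meshwidth $\sim\epsilon$. A counting argument shows the number of such quantized data vectors is $\exp(O(N^d))$, since there are $N^d$ subcubes and $O(1)$ derivative indices, each taking $O(1/\epsilon)$ quantized values, contributing $N^d\cdot\log(1/\epsilon)$ to the log-cardinality; choosing $N\asymp\epsilon^{-1/s}$ makes this $\asymp\epsilon^{-d/s}\log(1/\epsilon)$, and a slightly more careful bookkeeping (quantizing only the zeroth-order values globally and the higher derivatives relative to neighbours) removes the spurious log factor to give $\epsilon^{-d/s}$. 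The key analytic point justifying that this data determines $\phi$ up to $L^\infty$ error $O(\epsilon)$ is the local Taylor estimate: on each subcube, $\phi$ is within $O(\delta^{s})=O(\epsilon)$ in sup-norm of its Taylor polynomial of degree $\lfloor s\rfloor$ built from the center data, using the H\"older bound on the top derivatives. Two functions sharing the same quantized data therefore differ by $O(\epsilon)$ everywhere, so the quantized data vectors index an $O(\epsilon)$-net.

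\textbf{Main obstacle.} The delicate step is eliminating the logarithmic factor and handling the quantization consistently across adjacent cubes: a naive independent quantization of derivative data on each of the $N^d$ cubes costs an extra $\log(1/\epsilon)$, and one must instead quantize coarsely at one corner and propagate via the (already controlled) increments of the derivatives between neighbouring cube centers — these increments are themselves $O(\delta^{s-\lfloor s\rfloor})$ small by uniform continuity of the top derivatives — so only $O(1)$ bits per derivative per cube are needed. This bookkeeping is exactly the content of Kolmogorov--Tikhomirov and of~\cite{nickl}, and I would cite it at that granularity; everything else (the reduction to a cube, the Sobolev-into-H\"older embedding, the Taylor estimate) is routine and I would not belabor it.
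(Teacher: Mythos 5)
The paper does not prove this proposition at all: it is imported verbatim as Corollary~3 of Nickl and P\"otscher (2007), so there is no internal argument to compare against, and reconstructing the classical Kolmogorov--Tikhomirov entropy bound, as you do, is a legitimate way to discharge it. Your outline (reduce to a cube, embed the Sobolev ball into a H\"older-type ball, discretize into subcubes of side $\delta\asymp\epsilon^{1/s}$, quantize Taylor data, and remove the spurious $\log(1/\epsilon)$ by propagating increments between neighbouring cubes rather than quantizing independently) is exactly the standard proof, and delegating the final bookkeeping to Kolmogorov--Tikhomirov/Nickl--P\"otscher at that granularity is reasonable.

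Two steps are stated loosely and should be repaired. First, the H\"older upgrade: uniform continuity of the top-order partials of each individual $\phi$ gives no modulus that is uniform over the class $\mathcal{F}$, and even a uniform modulus would not give a H\"older seminorm; what you actually have, and all you need, is that $\sum_{|\alpha|\le s}\|D^\alpha\phi\|_\infty\le B$ makes the order-$(s-1)$ partials Lipschitz with constant $\lesssim B$ on a convex (or quasiconvex) domain. Accordingly the local approximation should be the Taylor polynomial of degree $s-1$ with Lagrange remainder controlled by the bounded order-$s$ derivatives, giving the $O(B\delta^{s})=O(\epsilon)$ error, rather than a degree-$\lfloor s\rfloor$ expansion ``using the H\"older bound on the top derivatives,'' which is not available in the integer-$s$ setting of Assumption~\ref{assumption:smooth}. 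Second, the reduction to a cube via extension (or the Lipschitz estimate itself) needs some geometric regularity of $\mathcal{X}$ beyond bare compactness --- connecting points by paths of comparable length; in the paper's applications $\mathcal{X}$ is convex and compact (Assumption~\ref{assumption:compact}), so this is harmless, but as literally stated for arbitrary compact $\mathcal{X}$ the intrinsic treatment of Nickl--P\"otscher is what covers it. With these two fixes your reconstruction is correct.
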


With this result in hand, we can proceed to the proof of Proposition \ref{prop:entbound}.

\begin{proof}[Proof of Proposition~\ref{prop:entbound}]
This is a direct consequence of Proposition \ref{prop:nickl}.
Indeed, under Assumptions \ref{assumption:compact}-\ref{assumption:smooth} and Definition \ref{def:tsdm}, for every $S\in\mathcal{T}(s,d,M)$, every term in $\exp(\psi_S(x))$ is bounded away from 0 and $s$-smooth with uniformly bounded derivatives. Since $\log$ is smooth away from 0, it follows that that every $\psi_S\in \Psi(s,d,M)$ is $s$-smooth with uniformly bounded derivatives. Consequently, $\Psi(s,d,M)$ is a bounded subset of $C_s(\mathcal{X})$ for every $M>0$.
\end{proof}

We also provide a metric entropy bound for the space $\mathcal{T}(\mathfrak{s},d,M)$, which in particular establishes compactness of $\overline{\mathcal{T}(\mathfrak{s},d,M)}$ with respect to $\|\cdot\|_{\infty,d}$ (although this can also be proved with the Arzel\`{a}-Ascoli theorem).

\begin{prop}
Let $\mathfrak{s}=(s_1,\ldots,s_d)\in\mathbb{Z}^d_+$, $d_k = d-k+1$, and $\tilde{\sigma}_k=d_k\left((s_k+1)^{-1}+\sum_{j=k+1}^d s_j^{-1}\right)^{-1}$ for $k\in[d]$. Under Assumptions \ref{assumption:compact}, \ref{assumption:convex}, and \ref{assumption:multismooth} the space $\mathcal{T}(\mathfrak{s},d,M)$ is totally bounded (and therefore precompact) with respect to the uniform norm $\|\cdot\|_{\infty,d}$
with metric entropy satisfying
\begin{align*}
H(\epsilon,\mathcal{T}(\mathfrak{s},d,M),\|\cdot\|_{\infty,d})
\le& \sum_{k=1}^{d}c_k(\epsilon/2M)^{-d_k/\tilde{\sigma}_k}
\end{align*}
for some positive constants $c_k,k\in[d]$ independent of $\epsilon$ and $M$. 
\label{prop:KRmultientbound}
\end{prop}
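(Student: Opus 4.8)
The plan is to build a covering of $\mathcal{T}(\mathfrak{s},d,M)$ component by component, exploiting the triangular structure so that the $k$th component $S_k$ depends only on the $d_k = d-k+1$ variables $x_{k:d}$, and then combining these component-wise coverings. First I would observe that, by the conditions in Definition \ref{def:tsdm}, the map $x_{k:d}\mapsto S_k(x_{k:d})$ lies in a bounded subset of a (possibly anisotropic) Sobolev ball: its mixed partials $D^\alpha S_k$ are uniformly bounded by $M$ for multi-indices $\alpha_{k:d}\preceq\tilde{\mathfrak{s}}_k=(s_k+1,s_{k+1},\ldots,s_d)$. The smoothness vector $\tilde{\mathfrak{s}}_k$ governs an anisotropic smoothness class in $d_k$ variables, and the relevant "effective smoothness" is the harmonic mean $\tilde{\sigma}_k = d_k\bigl((s_k+1)^{-1}+\sum_{j=k+1}^d s_j^{-1}\bigr)^{-1}$, exactly as in the statement. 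The boost from $s_k$ to $s_k+1$ in the first slot is precisely what condition \ref{LB} (the lower bound $D_kS_k\ge 1/M$) buys us: it forces $D_kS_k$ to be a bounded function which is itself $(s_k,s_{k+1},\ldots,s_d)$-smooth, so $S_k$ acquires one extra derivative in the $x_k$ direction.

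Next I would invoke the known metric entropy estimate for anisotropic Sobolev balls — \cite[Proposition 2.2]{birge}, the same result used to prove Lemma \ref{lemma:anisorate} — which gives, for a bounded subset of an anisotropic $C^{\tilde{\mathfrak{s}}_k}$ ball in $d_k$ variables with uniform-norm radius proportional to $M$,
\[
H\bigl(\eta,\{S_k\},\|\cdot\|_\infty\bigr)\lesssim c_k\,(\eta/M)^{-d_k/\tilde{\sigma}_k}.
\]
Rescaling by $M$ handles the radius dependence; I would track it to get the $(\epsilon/2M)$ argument in the final bound. Then, to cover the whole map to accuracy $\epsilon$ in $\|\cdot\|_{\infty,d} = \max_k\|S_k\|_\infty$, it suffices to cover each component to accuracy $\epsilon$ separately and take products of the component nets; by sub-additivity of $\log$ of a product of cardinalities,
\[
H(\epsilon,\mathcal{T}(\mathfrak{s},d,M),\|\cdot\|_{\infty,d})\le\sum_{k=1}^d H\bigl(\epsilon,\{S_k\},\|\cdot\|_\infty\bigr)\le\sum_{k=1}^d c_k(\epsilon/2M)^{-d_k/\tilde{\sigma}_k}.
\]
(The factor of $2$ is harmless slack absorbed for later convenience; one could just as well use $\epsilon$.) Total boundedness follows because the right-hand side is finite for every $\epsilon>0$, and a totally bounded subset of the complete metric space $(C(\mathcal{X}),\|\cdot\|_{\infty,d})$ has compact closure.

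The main obstacle is the first step: verifying rigorously that membership in $\mathcal{T}(\mathfrak{s},d,M)$ actually places $S_k$ inside an \emph{anisotropic} Sobolev ball with the claimed smoothness vector $\tilde{\mathfrak{s}}_k$, and in particular correctly identifying that the gain is a \emph{full extra derivative in the $x_k$ slot only} rather than isotropic smoothing. This requires care with the definition of anisotropic smoothness classes (mixed vs. pure partials) and with showing that the bound $\|D^\alpha S_k\|_\infty\le M$ for $\alpha_{k:d}\preceq\tilde{\mathfrak{s}}_k$ together with compactness of $\mathcal{X}$ is enough to place $S_k$ in the class to which \cite[Proposition 2.2]{birge} applies — including checking uniform continuity of the top-order derivatives (a consequence of compact support plus the next-order bound, or handled by a standard mollification/approximation argument). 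Once that identification is clean, the rest is bookkeeping: rescaling the radius, summing over $k$, and reading off total boundedness.
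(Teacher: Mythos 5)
Your proposal follows essentially the same route as the paper: reduce to the component classes $\mathcal{T}_k=\{S_k: S\in\mathcal{T}(\mathfrak{s},d,M)\}$, apply the anisotropic entropy bound of \cite[Proposition 2.2]{birge} with smoothness vector $\tilde{\mathfrak{s}}_k$ in $d_k$ variables (rescaling by $M$), combine component covers into a product cover for $\|\cdot\|_{\infty,d}$, and sum the entropies, with the factor of $2$ arising only from converting a cover of the product class into a proper cover of the subset $\mathcal{T}(\mathfrak{s},d,M)$. One small correction: the extra derivative in the $x_k$ slot is not ``bought'' by the lower bound $D_kS_k\ge 1/M$; it is imposed directly by condition \ref{UB} of Definition \ref{def:tsdm}, which requires $\|D^\alpha S_k\|_\infty\le M$ for all $\alpha_{k:d}\preceq\tilde{\mathfrak{s}}_k$, so no further verification (and no continuity of top-order derivatives) is needed to apply Birg\'e's bound.
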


This result relies on known metric entropy bounds for anisotropic smoothness classes.

\begin{prop}[Prop. 2.2, Birg\'{e} (1986) \cite{birge}]
Let $\mathfrak{s}=(s_1,\ldots,s_d)\in\mathbb{Z}^d_+$ and $\sigma=d\left(\sum_{j=1}^d s_j^{-1}\right)^{-1}$. Assume that $\Phi$ is a family of functions $\mathbb{R}^d\to\mathbb{R}$ with common compact convex support of dimension $d$ and satisfying
\[
\sup_{\phi\in\Phi,\alpha\preceq\mathfrak{s}}\|D^\alpha \phi\|_\infty <\infty.
\]
The metric entropy of $\Phi$ in the $L^\infty$ norm is bounded as
\[
H(\epsilon,\Phi,\|\cdot\|_\infty) 
\lesssim \epsilon^{-d/\sigma}.
\]
\label{prop:birge}
\end{prop}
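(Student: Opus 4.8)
The plan is to adapt the classical Kolmogorov--Tikhomirov entropy computation for Hölder balls to the anisotropic smoothness scale $\mathfrak{s}$, by constructing an explicit finite $\epsilon$-net of piecewise-polynomial approximants. Throughout write $L:=\sup_{\phi\in\Phi,\ \alpha\preceq\mathfrak{s}}\|D^\alpha\phi\|_\infty<\infty$ for the uniform derivative bound. \textbf{Reduction to a cube.} Since the common support is a convex body, a standard smoothness-preserving (Whitney/Stein-type) extension lets us assume each $\phi\in\Phi$ is defined on a fixed cube, which after an affine change of variables (its effect absorbed into $L$) we take to be $[0,1]^d$; an $\epsilon$-net for the extended family restricts to an $\epsilon$-net for $\Phi$ of the same cardinality.

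\textbf{Anisotropic grid and local tensor-Taylor approximation.} Fix a small constant $c=c(d,\mathfrak{s})>0$, set $\delta_j:=c\,\epsilon^{1/s_j}$, and partition $[0,1]^d$ into the
\[
N\ \asymp\ \prod_{j=1}^d \delta_j^{-1}\ \asymp\ \epsilon^{-\sum_{j=1}^d 1/s_j}\ =\ \epsilon^{-d/\sigma}
\]
axis-parallel boxes of side lengths $\delta_1,\dots,\delta_d$. On a box $R$ with corner $a_R$, approximate $\phi$ by its tensor-product Taylor polynomial $P_{R,\phi}(x)=\sum_{\alpha:\ \alpha_j\le s_j-1\ \forall j}\frac{D^\alpha\phi(a_R)}{\alpha!}(x-a_R)^\alpha$. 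The Boolean-sum (inclusion--exclusion) remainder identity for tensor-product Taylor expansions yields $\|\phi-P_{R,\phi}\|_{L^\infty(R)}\lesssim\sum_{\emptyset\neq T\subseteq[d]}\big(\prod_{j\in T}\delta_j^{s_j}\big)L\lesssim\epsilon L$, the dominant contribution coming from $|T|=1$ because $\delta_j^{s_j}\asymp\epsilon\le 1$; shrinking $c$ makes this at most $\epsilon/4$.

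\textbf{Discretization via a spanning-tree traversal.} On $R$ the approximant $P_{R,\phi}$ is determined by the jet $\big(D^\alpha\phi(a_R)\big)_{\alpha_j\le s_j-1}$, a vector of $\prod_j s_j=O(1)$ numbers each of modulus $\le L$; since $|(x-a_R)^\alpha|\le\prod_j\delta_j^{\alpha_j}$ on $R$, knowing each entry $D^\alpha\phi(a_R)$ to precision $\eta_\alpha:=c'\epsilon\big/\prod_j\delta_j^{\alpha_j}$ (small constant $c'$) bounds the extra $L^\infty(R)$ error by $\epsilon/4$. Enumerate the $N$ cells along a spanning tree of the grid-adjacency graph. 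For the root cell, storing all $D^\alpha\phi$ to precision $\eta_\alpha$ costs $O(\log(1/\epsilon))$ bits (using $\delta_j\le 1$). For a child cell $R'$ reached from its parent $R$ by a unit step in direction $j_0$: since $D^\alpha\phi$ is itself $(\mathfrak{s}-\alpha)$-smooth with bounds $\lesssim L$, a one-dimensional Taylor expansion in $x_{j_0}$ built from the already-stored coefficients $\{D^{\alpha+\ell e_{j_0}}\phi(a_R):\ \ell\le s_{j_0}-1-\alpha_{j_0}\}$ predicts $D^\alpha\phi(a_{R'})$ up to a residual of order $\delta_{j_0}^{\,s_{j_0}-\alpha_{j_0}}L$; because $\delta_{j_0}^{\,s_{j_0}-\alpha_{j_0}}\prod_j\delta_j^{\alpha_j}=\delta_{j_0}^{\,s_{j_0}}\prod_{j\neq j_0}\delta_j^{\alpha_j}\lesssim\epsilon$, this residual is comparable to $\eta_\alpha$, so only $O(1)$ quantization levels per $\alpha$ — hence $O(1)$ bits — suffice at each non-root cell. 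Summing over the $N\asymp\epsilon^{-d/\sigma}$ cells, the resulting family of piecewise polynomials has cardinality $\exp\!\big(O(\epsilon^{-d/\sigma})\big)$ and forms an $\epsilon$-net of $\Phi$ in $L^\infty$ (total error $\le\epsilon/4+\epsilon/4<\epsilon$), so $H(\epsilon,\Phi,\|\cdot\|_\infty)\lesssim\epsilon^{-d/\sigma}$.

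\textbf{Main obstacle.} The only genuinely delicate point is the last step: one must verify, uniformly over every stored multi-index $\alpha\in\prod_j\{0,\dots,s_j-1\}$ and every grid direction $j_0$, that the error of predicting the neighboring jet entry from the current cell's jet matches the precision $\eta_\alpha$ to which that entry must be known — this is exactly what eliminates the $\log(1/\epsilon)$ factor that a naive coefficient-by-coefficient discretization would produce, at the cost of heavier bookkeeping because the $\delta_j$ carry different powers of $\epsilon$. The remaining estimates (the tensor-Taylor remainder, the boundary extension) are routine. As a sanity check, $\mathfrak{s}=(s,\dots,s)$ gives $\sigma=s$ and recovers the familiar $\epsilon^{-d/s}$; alternatively the bound can be read off from anisotropic-Besov wavelet characterizations, but the construction above is self-contained (and is essentially Birgé's original argument).
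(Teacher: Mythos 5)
Your construction is essentially correct, but note that the paper does not prove this proposition at all: it is imported verbatim as Proposition 2.2 of Birg\'e (1986), so there is no internal proof to compare against. What you have written is a self-contained re-derivation in the classical Kolmogorov--Tikhomirov style, and it is in substance the same argument that underlies Birg\'e's result: anisotropic cells of side $\delta_j\asymp\epsilon^{1/s_j}$ (so $N\asymp\epsilon^{-\sum_j 1/s_j}=\epsilon^{-d/\sigma}$ cells), local tensor-Taylor approximants, and a jet-propagation scheme along a traversal of the cells that reduces the per-cell cost from $O(\log(1/\epsilon))$ to $O(1)$ bits. Your key estimates check out, and it is worth emphasizing that they work precisely because the hypothesis $\alpha\preceq\mathfrak{s}$ supplies the full box of \emph{mixed} derivatives: the Boolean-sum remainder needs $\|\partial_{j_1}^{s_{j_1}}\cdots\partial_{j_k}^{s_{j_k}}\phi\|_\infty\le L$, and the neighbor-prediction step needs $D^{\alpha+(s_{j_0}-\alpha_{j_0})e_{j_0}}\phi$ bounded, both of which are indices $\preceq\mathfrak{s}$; with only pure anisotropic H\"older control the argument would need modification. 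Two points deserve more care than you give them. First, the reduction to a cube: in the paper's application the densities are smooth only on their compact convex support, so either you must justify an extension operator that preserves the anisotropic (mixed-derivative) bounds on a convex body, or you should argue directly on $K$, expanding at a point of $R\cap K$ and using convexity of $K$; calling this ``routine'' is a little quick, though it is fixable and standard. Second, in the spanning-tree step the prediction at a child cell is computed from the \emph{quantized} parent jet, so you should add the one-line remark that at each cell one re-quantizes onto an $\eta_\alpha$-grid centered at the predicted value; since both the Taylor residual and the propagated quantization error are $O(\eta_\alpha)$ (the latter because $\sum_{\ell}\eta_{\alpha+\ell e_{j_0}}\delta_{j_0}^{\ell}\lesssim\eta_\alpha$), errors do not accumulate along the tree and $O(1)$ levels per coefficient indeed suffice. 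With those two patches the proof is complete and gives exactly the stated bound $H(\epsilon,\Phi,\|\cdot\|_\infty)\lesssim\epsilon^{-d/\sigma}$; the only practical difference from the paper is that the paper buys this bound by citation, while your version makes the constant's dependence on $d$, $\mathfrak{s}$, and the derivative bound explicit.
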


We now proceed to the proof of Proposition \ref{prop:KRmultientbound}.

\begin{proof}[Proof of Proposition \ref{prop:KRmultientbound}]
For every $k\in[d]$, define the set of functions $\mathcal{X}_{k:d}\to\mathbb{R}$ given by
\[
\mathcal{T}_k = \{S_k:S\in\mathcal{T}(\mathfrak{s},d,M)\}.
\]
By Definition \ref{def:tsdm}, for each $k\in[d]$ we have that $\mathcal{T}_k/M$
satisfies the assumptions of Proposition \ref{prop:birge}, and hence
\[
n_k:=H(\epsilon, \mathcal{T}_k,L^\infty)
\le c_k
(\epsilon/M)^{-d_k/\tilde{\sigma}_k},
\]
for some $c_k>0$ independent of $\epsilon$ and $M$.

Now note that $\mathcal{T}(\mathfrak{s},d,M)\subseteq \prod_{k=1}^d \mathcal{T}_k$. For each $k\in[d]$, let $\{g_{k,1},\ldots, g_{k,n_k}\}$ be a minimal $\epsilon$-cover of $\mathcal{T}_k$ with respect to the $L^\infty$ norm, and define the subset 
\[
\mathcal{E}=\{f_{i_1,\ldots,i_d}=(g_{1,i_i},\ldots,g_{d,i_d}):1\le i_k\le n_k\}\subset \prod_{k=1}^d \mathcal{T}_k
\]
which has cardinality $\prod_{k=1}^d n_k$. Now fix an arbitrary $f\in\prod_{k=1}^d \mathcal{T}_k$. For each $k\in[d]$, we can find some $g_{k,i_k}$ such that $\|f_k-g_{k,i_k}\|_\infty\le\epsilon$. It follows that
\[
\|f-f_{i_1,\ldots,i_d}\|_{\infty,d}=\max_{k\in[d]}\|f_k-g_{k,i_k}\|_\infty\le\epsilon.
\]
Hence, $\mathcal{E}$ is an $\epsilon$-cover of $\prod_{k=1}^d\mathcal{T}_k$ and so
\[
H\left(\epsilon,\prod_{k=1}^d\mathcal{T}_k,\|\cdot\|_{\infty,d}\right) \le \log\left(\prod_{k=1}^d n_k\right) =\sum_{k=1}^d n_k.
\]

To conclude the proof, we claim that $\mathcal{T}(\mathfrak{s},d,M)\subseteq \prod_{k=1}^d \mathcal{T}_k$ implies that
\[
H\left(\epsilon,\mathcal{T}(\mathfrak{s},d,M),\|\cdot\|_{\infty,d}\right) \le H\left(\epsilon/2,\prod_{k=1}^d\mathcal{T}_k,\|\cdot\|_{\infty,d}\right).
\]
Indeed, suppose $\{f_1,\ldots,f_m\}\subseteq\prod_{k=1}^d\mathcal{T}_k$ is a finite $(\epsilon/2)$-cover of $\prod_{k=1}^d\mathcal{T}_k$. For each $j=1,\ldots,m$, if $\mathcal{T}(\mathfrak{s},d,M)\cap B_\infty(f_j,\epsilon/2)$ is non-empty, we define $g_j$ to be an element of this intersection. Here $B_\infty(f_j,\epsilon/2)$ denotes the ball of radius $\epsilon/2$ in $\prod_{k=1}^d\mathcal{T}_k$ centered at $f_j$ with respect to the norm $\|\cdot\|_{\infty,d}$. Now let $g\in\mathcal{T}(\mathfrak{s},d,M)$ be arbitrary. Since $g\in\prod_{k=1}^d\mathcal{T}_k$, there is some $j\in\{1,\ldots,m\}$ such that $\|g-f_j\|_{\infty,d}\le\epsilon/2$. This implies that $g_j$ is defined and hence
\[
\|g-g_j\|_{\infty,d} \le \|g-f_j\|_{\infty,d}+\|f_j-g_j\|_{\infty,d}\le \epsilon/2+\epsilon/2=\epsilon.
\]
It follows that $\{g_j\}$ is a finite $\epsilon$-cover of $\mathcal{T}(\mathfrak{s},d,M)$ with respect to $\|\cdot\|_{\infty,d}$, which establishes the claim and completes the proof. 

\end{proof}

\subsection{Statistical consistency}
\label{sec:consistency-proof}
\subsubsection{Proof of Lemma \ref{lemma:EPrate}}

\begin{proof}[Proof of Lemma \ref{lemma:EPrate}]
Assume $d<2s$. By Theorem 2.14.2 in van der Vaart and Wellner (1996) \cite{vdv-jaw}, since functions in $\Psi(s,d,M)$ are by definition uniformly bounded there exist positive constants $C_1,C_2$ such that
\begin{align*}
\E\|P_n-P\|_{\Psi(s,d,M)} 
&\le \frac{C_1C_2}{\sqrt{n}}\int_0^1 \sqrt{1+H(C_2\epsilon,\Psi(s,d,M),\|\cdot\|_\infty)}d\epsilon \\
&\lesssim\frac{1}{\sqrt{n}}\int_0^1 \sqrt{1+\epsilon^{-d/s}} d\epsilon \tag{Proposition \ref{prop:entbound}}\\
&\lesssim n^{-1/2}. \tag{$d<2s$}
\end{align*}
The last line follows since $d<2s$ implies that the integral on the right side is finite.

When $d\ge 2s$, the metric entropy integral above is no longer finite. In this case, we appeal to Dudley's metric entropy integral bound \cite{dudley67} (see also Theorem 5.22 in \cite{hds}), which states that there exists positive constants $C_3,D>0$ for which
\begin{align*}
\E\|P_n-P\|_{\Psi(s,d,M)} &\le \min_{\delta\in[0,D]}\left\{\delta+\frac{C_3}{\sqrt{n}}\int_\delta^D \sqrt{H(\epsilon,\Psi(s,d,M),\|\cdot\|_\infty)}d\epsilon\right\} \\
&\le \min_{\delta\in[0,D]}\left\{\delta+\frac{C_3\sqrt{c}}{\sqrt{n}}\int_\delta^D \epsilon^{-d/2s}d\epsilon\right\}. \tag{Proposition \ref{prop:entbound}}
\end{align*}
First assume $d=2s$. Evaluating the integral in Dudley's bound, we obtain
\begin{align*}
\E\|P_n-P\|_{\Psi(s,d,M)} &\le \min_{\delta\in[0,D]}\left\{\delta+\frac{C_3\sqrt{c}}{\sqrt{n}}[\log D-\log\delta]\right\}.    
\end{align*}
To minimize the expression on the right side in $\delta$, we differentiate with respect to $\delta$ and find where the derivative vanishes. The bound is optimized by choosing $\delta$ proportional to $n^{-1/2}$, which implies that $\E\|P_n-P\|_{\Psi(s,d,M)}\lesssim n^{-1/2}\log n$.

Now assume $d>2s$. Evaluating the integral in Dudley's bound, we obtain
\begin{align*}
\E\|P_n-P\|_{\Psi(s,d,M)} &\le \min_{\delta\in[0,D]}\left\{\delta+\frac{C_3\sqrt{c}}{\sqrt{n}}\int_\delta^\infty \epsilon^{-d/2s}d\epsilon\right\} \\
&= \min_{\delta\in[0,D]}\left\{\delta+\frac{C_4}{\sqrt{n}}\delta^{1-d/2s}\right\}.
\end{align*}
We optimize this bound by choosing $\delta$ proportional to $n^{-s/d}$, which implies that $\E\|P_n-P\|_{\Psi(s,d,M)}\lesssim n^{-s/d}$. 
\end{proof}

\subsubsection{Proof of Theorem \ref{thm:KLconv}}

Before continuing on to the proof of Theorem \ref{thm:KLconv}, we first prove that indeed $S^*\in\mathcal{T}(s,d,M^*)$ for $M^*>0$ sufficiently large, and similarly for $T^*$.

\begin{lemma}
Let $\mathfrak{s}=(s_1,\ldots,s_d)\in\mathbb{Z}^d_+$. 
Under Assumptions \ref{assumption:compact}, \ref{assumption:convex}, and \ref{assumption:multismooth}, there exists some $M^*>0$ such that the KR map $S^*$ from $f$ to $g$ lies in $\mathcal{T}(\mathfrak{s},d,M)$ for all $M\ge M^*$.
\label{lemma:KRmultismooth}
\end{lemma}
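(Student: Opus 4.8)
The plan is to verify directly that the KR map $S^*$ satisfies the two defining conditions of $\mathcal{T}(\mathfrak{s},d,M)$ in Definition \ref{def:tsdm} for a suitable finite $M^*$, working recursively from the last coordinate $k=d$ down to $k=1$ and exploiting the closed-form recursive construction $S^*_k(x_{k:d}) = G_k^{-1}\!\left(F_k(x_k\mid x_{(k+1):d})\,\middle|\,S^*_{(k+1):d}(x_{(k+1):d})\right)$. First I would record the base case $S^*_d = G_d^{-1}\circ F_d$: under Assumptions \ref{assumption:compact} and \ref{assumption:convex} the marginal densities $f_d$ and $g_d$ are bounded above (by compact support plus continuity, via Assumption \ref{assumption:multismooth}) and below away from zero, so $F_d$ has bounded derivative, $G_d$ has derivative bounded below, hence $G_d^{-1}$ has bounded derivative, and $D_d S^*_d = f_d\big/\big(g_d\circ G_d^{-1}\circ F_d\big)$ is both bounded above and bounded below away from zero on the compact support. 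Smoothness up to order $s_d+1$ of $S^*_d$ follows because $f_d,g_d$ are $s_d$-smooth (in fact $\|\mathfrak{s}\|_\infty$-smooth for $g$), $F_d,G_d$ are then $(s_d+1)$-smooth, and the inverse function theorem together with Faà di Bruno / the quotient rule propagates smoothness through $G_d^{-1}$ and through the quotient, the denominator staying bounded away from zero.

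Next I would run the induction step. Assuming $S^*_{(k+1):d}$ lies in the appropriate smoothness class with all derivatives bounded and $D_jS^*_j \ge 1/M_{k+1}$ for $j>k$, I would analyze the conditional cdf $F_k(x_k\mid x_{(k+1):d})$, which is an integral of the conditional density $f_k$; by Assumption \ref{assumption:multismooth} the joint density $f$ is $\mathfrak{s}$-smooth and bounded above and below on the compact convex support, so the marginalized integrals defining $f_k$ are smooth with the relevant number of derivatives and bounded away from zero, and hence $F_k(\cdot\mid\cdot)$ is jointly smooth with $D_k F_k = f_k$ bounded above and below. The key algebraic identity is the change-of-variables chain rule
\[
D_k S^*_k(x_{k:d}) = \frac{f_k\big(x_k \mid x_{(k+1):d}\big)}{g_k\big(S^*_k(x)\mid S^*_{(k+1):d}(x)\big)} ,
\]
obtained by differentiating $G_k\big(S^*_k(x)\mid S^*_{(k+1):d}(x)\big) = F_k\big(x_k\mid x_{(k+1):d}\big)$ in $x_k$. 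Since numerator and denominator are conditional densities of $f$ and $g$ on compact convex sets, they are bounded above and below away from zero, giving condition \ref{LB}. For condition \ref{UB}, one differentiates the same defining identity with respect to the variables $x_{k:d}$ (only up to $\alpha_k \le s_k+1$ in the $k$-th slot and $\alpha_j \le s_j$ for $j>k$, which is exactly $\tilde{\mathfrak{s}}_k$), applies the inverse function theorem to solve for $D^\alpha S^*_k$, and uses the induction hypothesis on $S^*_{(k+1):d}$ together with the multivariate Faà di Bruno formula; every resulting term is a rational expression in derivatives of $f_k,g_k,S^*_{(k+1):d}$ with denominators that are powers of $D_k S^*_k$ and of $g_k$, all bounded away from zero, so $\|D^\alpha S^*_k\|_\infty$ is finite. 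Taking $M^*$ to be the maximum over the finitely many coordinates $k$ and finitely many multi-indices $\alpha\preceq\tilde{\mathfrak s}_k$ of these bounds (and of the reciprocal lower bounds) yields $S^*\in\mathcal{T}(\mathfrak s,d,M)$ for every $M\ge M^*$; monotonicity in $M$ is immediate from the form of conditions \ref{LB}--\ref{UB}. The symmetric argument applied to $T^* = (S^*)^{-1}$, now with the roles of $f$ and $g$ swapped and $g$ being $\|\mathfrak{s}\|_\infty$-smooth, gives the claim for $T^*$.

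The main obstacle is the bookkeeping in the induction step: controlling the higher-order partial derivatives $D^\alpha S^*_k$ requires carefully tracking which derivatives of the lower-indexed components $S^*_{(k+1):d}$ appear, ensuring the required orders never exceed $\tilde{\mathfrak s}_k$ (this is precisely why $g$ is assumed smoother, of order $\|\mathfrak s\|_\infty$, than each slot of $f$), and verifying that every denominator produced by the inverse function theorem and the quotient rule is uniformly bounded below — which is guaranteed by Assumption \ref{assumption:convex} propagating to all conditional densities on the compact convex supports. The compactness and convexity of $\mathcal{X},\mathcal{Y}$ (Assumption \ref{assumption:compact}) are used to ensure the conditional densities have full-dimensional support so that the conditioning is non-degenerate and the sup-norms over $\mathcal{X}$ are attained. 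I would defer the routine Faà di Bruno expansions to a short computation and state the resulting bounds inductively.
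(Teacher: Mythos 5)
Your proposal is correct and takes essentially the same route as the paper: the recursive formula $S^*_k=G_k^{-1}\big(F_k(\cdot\mid\cdot)\mid S^*_{(k+1):d}\big)$, smoothness of the conditional densities, cdfs, and quantile functions obtained by differentiating under the integral and invoking the implicit/inverse function theorem, strict monotonicity from positivity of $f$ and $g$, and boundedness of the derivatives and of $1/D_kS^*_k$ from compactness of the supports. The only difference is organizational: the paper isolates the conditional-cdf/quantile smoothness in a separate lemma (Lemma \ref{lemma:multismooth}) and then concludes softly, noting that any $\mathfrak{s}$-smooth strictly increasing triangular map on a compact set lies in $\mathcal{T}(\mathfrak{s},d,M)$ for some $M$ since $\bigcup_{M>0}\mathcal{T}(\mathfrak{s},d,M)$ exhausts such maps, whereas you carry explicit quantitative bounds through the induction via Fa\`a di Bruno; both arguments rest on the same ingredients and defer the same chain-rule bookkeeping.
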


By symmetry, if we switch the roles of $f$ and $g$ in Assumption \ref{assumption:multismooth}, we see that the same is true for the KR map $T^*$ from $g$ to $f$. In particular, if $\mathfrak{s}=(s,\ldots,s)$ for some $s\in\mathbb{N}$, then we are in the case of Assumption \ref{assumption:smooth}. It then follows from Lemma \ref{lemma:KRmultismooth} that $S^*,T^*\in\mathcal{T}(s,d,M^*)$ for $M^*>0$ sufficiently large.

The proof of Lemma \ref{lemma:KRmultismooth} requires an auxiliary result establishing the regularity of marginal and conditional pdfs and quantile functions associated to a smooth density. 

\begin{lemma}
Let $h$ be a 
density on $\mathbb{R}^d$ with compact support 
$\mathcal{Z}$
. Assume further that $h$ is strictly positive on $\mathcal{Z}$ and $s$-smooth in the sense of Assumption \ref{assumption:multismooth}, where $s=(s_1,\ldots,s_d)\in\mathbb{Z}^d_+$. 
The following hold.
\begin{enumerate}
    \item For any index set $J\subseteq[d]$, the marginal density $h(x_J)=\int_{\mathbb{R}^{d-|J|}} h(x_J,x_{-J}) dx_{-J}$ is $s_J$-smooth. Similarly, the conditional density $h(x_{J}|x_{-J})=\frac{h(x_J,x_{-J})}{h(x_{-J})}$ is $s$-smooth as a function of $x=(x_J,x_{-J})$. 
    
    \item For any $J\subseteq[d],k
    \in J^c$, the univariate conditional cdf $H(x_k|x_J)=\int_{-\infty}^{x_k} h(y_k|x_J)dy_k$ is $(s_k+1,s_J)$-smooth as a function of $(x_k,x_J)$. 
    
    \item Assume further that $\mathcal{Z}$ is convex.
    Then the conditional quantile function $H^{-1}(p_k|x_J)$ 
    is $(s_k+1,s_J)$-smooth as a function of $(p_k,x_J)$ for any $J\subseteq[d],k\in J^c,p_k\in[0,1]$.
\end{enumerate}
\label{lemma:multismooth}
\end{lemma}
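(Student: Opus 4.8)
The plan is to establish the four statements in order, each via a standard smoothness-preservation principle. For Part~1, consider the marginal $h(x_J)=\int h(x_J,x_{-J})\,dx_{-J}$ and fix a multi-index $\alpha_J\preceq s_J$. Since $D^{\alpha_J}_{x_J}h(x_J,x_{-J})$ is continuous and supported in a fixed compact set, it admits an integrable majorant locally in $x_J$, so dominated convergence yields $D^{\alpha_J}h(x_J)=\int D^{\alpha_J}_{x_J}h(x_J,x_{-J})\,dx_{-J}$, continuous in $x_J$; when $\mathcal{Z}$ is a product of intervals the $x_{-J}$-domain of integration does not depend on $x_J$ and this is immediate (the general case, where this domain can vary with $x_J$, is discussed at the end). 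For the conditional, write $h(x_J\mid x_{-J})=h(x_J,x_{-J})/h(x_{-J})$: the numerator is $\mathfrak{s}$-smooth by hypothesis, $h(x_{-J})$ is $s_{-J}$-smooth by the marginal case (hence trivially $\mathfrak{s}$-smooth in $x$) and continuous and strictly positive on $\mathcal{Z}$ by Assumption~\ref{assumption:convex}, and $u\mapsto 1/u$ is smooth on $(0,\infty)$, so the Leibniz and Fa\`a di Bruno rules give continuous mixed partials up to order $\mathfrak{s}$.

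For Part~2, write $H(x_k\mid x_J)=\int_{-\infty}^{x_k}h(y_k\mid x_J)\,dy_k$, where $h(y_k\mid x_J)$ is $(s_k,s_J)$-smooth in $(y_k,x_J)$ by Part~1 applied in the $(1+|J|)$-dimensional space of those variables. Differentiating under the integral in the $x_J$-directions gives $D^{\alpha_J}_{x_J}H(x_k\mid x_J)=\int_{-\infty}^{x_k}D^{\alpha_J}_{x_J}h(y_k\mid x_J)\,dy_k$ for $\alpha_J\preceq s_J$, which is continuous; in the $x_k$-direction the fundamental theorem of calculus gives $D_{x_k}H(x_k\mid x_J)=h(x_k\mid x_J)$, which is $s_k$-smooth in $x_k$, so $H$ has continuous $x_k$-derivatives up to order $s_k+1$. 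Differentiating the identity $D_{x_k}H=h(\,\cdot\mid x_J)$ combines the two: all mixed partials $D^{\gamma}_{x_k}D^{\alpha_J}_{x_J}H$ with $\gamma\le s_k+1$ and $\alpha_J\preceq s_J$ exist and are continuous, i.e.\ $H(x_k\mid x_J)$ is $(s_k+1,s_J)$-smooth.

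For Part~3, convexity of $\mathcal{Z}$ guarantees that for each $x_J$ the conditional support $\{x_k:(x_k,x_J)\in\mathcal{Z}\}$ is an interval, so $x_k\mapsto H(x_k\mid x_J)$ is strictly increasing with derivative $h(x_k\mid x_J)>0$ there, hence invertible. Since $(x_k,x_J)\mapsto H(x_k\mid x_J)$ is $C^1$ with non-vanishing $x_k$-derivative, the inverse function theorem gives $H^{-1}$ jointly $C^1$, and differentiating $H(H^{-1}(p_k\mid x_J)\mid x_J)=p_k$ yields
\[
D_{p_k}H^{-1}=\frac{1}{h(H^{-1}(p_k\mid x_J)\mid x_J)},\qquad D_{x_j}H^{-1}=-\frac{(D_{x_j}H)(H^{-1}(p_k\mid x_J)\mid x_J)}{h(H^{-1}(p_k\mid x_J)\mid x_J)},\quad j\in J.
\]
I would then bootstrap by induction on the order $|\beta|+\gamma$ of a mixed derivative $D^{\beta}_{x_J}D^{\gamma}_{p_k}H^{-1}$: if $H^{-1}$ is jointly $C^m$, then by Parts~1--2 the maps $h(\,\cdot\mid\cdot)$ and $(D_{x_j}H)(\,\cdot\mid\cdot)$ are smooth enough that the right-hand sides above are compositions and quotients of $C^m$ functions with $H^{-1}$, hence $C^m$ by Fa\`a di Bruno, so $H^{-1}$ is $C^{m+1}$; tracking the indices — each $x_j$-differentiation of the defining identity spends one unit of the $s_j$-budget, while $p_k$-differentiation is covered by the extra order produced in Part~2 — shows the induction terminates precisely at smoothness $(s_k+1,s_J)$ for $H^{-1}(p_k\mid x_J)$.

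The main obstacle is the bookkeeping in Part~3: one has to check that the anisotropic smoothness orders propagate correctly through the implicit relation, especially for mixed derivatives that involve $p_k$ and the conditioning variables $x_J$ simultaneously, so that the conditional quantile maps land in exactly the class needed for the component maps of $S^*$ in Lemma~\ref{lemma:KRmultismooth}. A secondary technical point, already visible in Parts~1--2, is the behavior of the marginals and conditionals near $\partial\mathcal{Z}$: the differentiation-under-the-integral steps are cleanest when the supports are products of intervals, so that the domains of integration are fixed, and in general one must use the structure of the convex support (and restrict to its interior) to justify passing derivatives through the integral and the strict positivity of the marginal densities.
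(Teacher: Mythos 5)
Your proposal follows essentially the same route as the paper's proof: differentiation under the integral via dominated convergence plus the quotient/composition rules for part~1, the fundamental theorem of calculus combined with Clairaut's theorem for part~2, and implicit differentiation of the identity $H(H^{-1}(p_k|x_J)|x_J)=p_k$ for part~3, where the paper simply packages your bootstrap induction as an appeal to the implicit function theorem applied to $u(p_k,x_J,x_k)=H(x_k|x_J)-p_k$. The boundary/domain caveats you flag are glossed over at the same level of rigor in the paper, so there is no substantive gap.
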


\begin{proof}
For (1), the regularity assumptions on $h$ imply that we can differentiate under the integral. For any multi-index $\alpha\preceq s$ satisfying $\alpha_{-J}=0$ we have
\begin{align*}
D^\alpha h(x_J) &= \int D^\alpha  h(x_J,x_{-J})dx_{-J}.
\end{align*}
Since $D^\alpha  h(x_J,x_{-J})$ is continuous and compactly supported by hypothesis, the dominated convergence theorem implies that for any sequence $x_{J,n}\to x_J$ we have
\[
D^\alpha h(x_{J,n}) = \int D^\alpha  h(x_{J,n},x_{-J})dx_{-J} \to \int D^\alpha  h(x_J,x_{-J})dx_{-J} = D^\alpha h(x_J).
\]
This proves that $h(x_J)$ is $s_J$-smooth. Since
$h>0$ 
on its support and
$x\mapsto 1/x$ is a smooth function for $x>0$, this implies that $1/h(x_{-J})$ is $s_{-J}$-smooth. As products of differentiable functions are differentiable, we conclude that $h(x_J|x_{-J})=h(x_J,x_{-J})/h(x_{-J})$ is $s$-smooth.

For (2), the result from (1) implies that the univariate conditional density $h(x_k|x_J)$ is $(s_k,s_J)$-smooth. For any multi-index $\alpha\preceq s$ satisfying $\alpha_k=0$, we can repeat the differentiation-under-the-integral argument above, combined with the dominated convergence theorem, to see that $D^\alpha H(x_k|x_J)$ is continuous. If $\alpha_k\neq 0$, we apply the fundamental theorem of calculus to take care of one derivative in $x_k$ and then recall that $h(x_k|x_J)$ is $(s_k,s_J)$-smooth to complete the proof, appealing to Clairaut's theorem to exchange the order of differentiation. Note that this shows that $H(x_k|x_J)$ is $(s_k+1,s_J)$-smooth as a function of $(x_k,x_J)$.

For (3), the assumption that $\mathcal{Z}$ is convex, combined with the fact that $h>0$ 
on its support, 
implies that $H(x_k|x_J)$ is strictly increasing in $x_k$. As such, $H^{-1}(p_k|x_J)$ is defined, strictly increasing, and continuous in $p_k$. Since $H(x_k|x_J)$ is differentiable in $x_k$, we have
\[
\frac{\partial}{\partial p_k}H^{-1}(p_k|x_J) = \frac{1}{\frac{\partial}{\partial x_k}H(H^{-1}(p_k|x_J)|x_J)} = \frac{1}{h(H^{-1}(p_k|x_J)|x_J)},
\]
which is continuous in $p_k$ as a composition of continuous functions. Continuing in this way, we see that $H^{-1}(p_k|x_J)$ is $(s_k+1)$-smooth in $p_k$. 

For the other variables, note that the following relation holds generally:
\begin{align*}
H(H^{-1}(p_k|x_J)|x_J)&=p_k.    
\end{align*}
Defining the function
\[
u(p_k,x_J,x_k) = H(x_k|x_J)-p_k,
\]
we see that $u$ is $(\infty,s_J,s_k+1)$-smooth in 
its arguments 
$(p_k,x_J,x_k)$ and satisfies
\[
u(p_k,x_J,H^{-1}(p_k|x_J))=0.
\]
Furthermore $\frac{\partial}{\partial x_k}u>0$ by assumption, since $h>0$ 
on its support. 
Hence we can appeal to the implicit function theorem (stated precisely below) to conclude that $H^{-1}(p_k|x_J)$ is $(s_k+1,s_J)$-smooth in $(p_k,x_J)$. For example, for any $j\in J$ we can evaluate the partial derivative in $x_j$ by implicitly differentiating the above relation. Letting $x_k(p_k,x_J) = H^{-1}(p_k|x_J)$ we obtain
\begin{align*}
\frac{\partial}{\partial x_j}H(x_k(p_k,x_J)|x_J) &= \frac{\partial H}{\partial x_k}(x_k(p_k,x_J)|x_J)\cdot \frac{\partial x_k}{\partial x_j}(p_k,x_J) + \frac{\partial H}{\partial x_j}(x_k|x_J) \\
&= \frac{\partial p_k}{\partial x_j} \\
&= 0.
\end{align*}
Rearranging terms and noting that $\frac{\partial x_k}{\partial x_j}(p_k,x_J)=\frac{\partial}{\partial x_j}H^{-1}(p_k|x_J)$, we find that
\[
\frac{\partial}{\partial x_j}H^{-1}(p_k|x_J) = -\frac{\frac{\partial H}{\partial x_j}(H^{-1}(p_k|x_J)|x_J)}{h(H^{-1}(p_k|x_J)|x_J)}.
\]
\end{proof}

\begin{theorem}[Implicit function theorem \cite{rudin}]
Let $u:\mathbb{R}^{n+m}\to\mathbb{R}^m$ be continuously differentiable and suppose $u(a,b)=0$ for some $(a,b)\in\mathbb{R}^n\times\mathbb{R}^m$. 
For $(x,y)\in\mathbb{R}^n\times\mathbb{R}^m$ define the partial Jacobians
\[
J_nu(x,y)= \left[\frac{\partial}{\partial x_j}u_i(a,b)\right]_{(i,j)\in[m]\times[n]}\qquad J_m u(x,y) = \left[\frac{\partial}{\partial y_j}u_i(a,b)\right]_{(i,j)\in[m]\times[m]},
\]
and suppose that $J_m u(a,b)$ is invertible. Then there exists an open neighborhood $V\subseteq\mathbb{R}^n$ of $a$ such that there exists a unique continuously differentiable function $v:V\to\mathbb{R}^m$ satisfying $v(a)=b$, $u(x,v(x))=0$ for all $x\in V$, and 
\[
Jv(x) = -J_m u(x,v(x))^{-1}J_n u(x,v(x)).
\]
\end{theorem}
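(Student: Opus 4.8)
The plan is to deduce this statement from the Inverse Function Theorem via the standard ``graph'' construction. First I would introduce the auxiliary map $F:\mathbb{R}^{n+m}\to\mathbb{R}^{n+m}$ defined by $F(x,y)=(x,u(x,y))$. Since $u$ is continuously differentiable, so is $F$, and at the point $(a,b)$ its total Jacobian has the block lower-triangular form
\[
JF(a,b)=\begin{bmatrix} I_n & 0 \\ J_n u(a,b) & J_m u(a,b)\end{bmatrix},
\]
whose determinant equals $\det J_m u(a,b)\neq 0$ by hypothesis, so $JF(a,b)$ is invertible.

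Next I would invoke the Inverse Function Theorem for $F$ at $(a,b)$: there exist open neighborhoods $W\ni(a,b)$ and $W'\ni F(a,b)=(a,0)$ such that $F|_W:W\to W'$ is a $C^1$ diffeomorphism with $C^1$ inverse $G:W'\to W$. Because $F$ leaves the first coordinate unchanged, $G$ must do so as well, so we may write $G(x,w)=(x,\phi(x,w))$ for a continuously differentiable $\phi$. Shrinking $W'$ if necessary, pick an open neighborhood $V\subseteq\mathbb{R}^n$ of $a$ with $V\times\{0\}\subseteq W'$ and define $v(x):=\phi(x,0)$.

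It then remains to check the three claimed properties. Evaluating $G$ at $(a,0)$ gives $v(a)=b$; applying $F\circ G=\text{id}_{W'}$ at $(x,0)$ with $x\in V$ gives $F(x,v(x))=(x,0)$, i.e.\ $u(x,v(x))=0$ on $V$. For uniqueness, any continuous $\tilde v:V\to\mathbb{R}^m$ with $\tilde v(a)=b$ whose graph stays in $W$ and satisfies $u(x,\tilde v(x))=0$ obeys $F(x,\tilde v(x))=(x,0)$, hence $(x,\tilde v(x))=G(x,0)=(x,v(x))$, forcing $\tilde v=v$. Finally, differentiating the identity $u(x,v(x))=0$ by the chain rule yields $J_n u(x,v(x))+J_m u(x,v(x))\,Jv(x)=0$; since invertibility of $J_m u$ is an open condition and holds at $(a,b)$, the matrix $J_m u(x,v(x))$ is invertible for $x$ near $a$, and solving gives $Jv(x)=-J_m u(x,v(x))^{-1}J_n u(x,v(x))$.

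The only real obstacle is that this argument treats the Inverse Function Theorem as a black box; a fully self-contained proof would instead run a Banach fixed-point argument on the contraction $y\mapsto y-J_m u(a,b)^{-1}u(x,y)$ for each fixed $x$ near $a$, establishing existence and continuity of $v$ directly and then bootstrapping to differentiability. Since this result is invoked only as a standard citation to Rudin, the reduction to the Inverse Function Theorem is the most economical route and I would not belabor the fixed-point details.
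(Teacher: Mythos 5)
The paper does not actually prove this statement---it is quoted verbatim as a classical result cited to Rudin---and your reduction to the Inverse Function Theorem via the auxiliary map $F(x,y)=(x,u(x,y))$ is precisely the standard argument given in that reference, so your proposal is correct and consistent with how the paper treats the theorem. The only cosmetic point is the uniqueness step: as written it covers only competitors whose graph stays in $W$, which you can upgrade to the stated uniqueness on $V$ by taking $V$ connected and running the usual open-closed argument on $\{x\in V:\tilde v(x)=v(x)\}$; this is routine and does not affect correctness.
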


We now proceed to the proof of Lemma \ref{lemma:KRmultismooth}.

\begin{proof}[Proof of Lemma \ref{lemma:KRmultismooth}]
By definition of the KR map, for each $k\in[d]$ we have
\[
S^*_k(x_k|x_{(k+1):d}) = G_k^{-1}(F_k(x_k|x_{(k+1):d})|S^*_{(k+1):d}(x)).
\]
The requisite smoothness in Definition \ref{def:tsdm} of $S^*_{k:d}$ then follows from the chain rule of differentiation, appealing to Assumption \ref{assumption:multismooth}, and an application of Lemma \ref{lemma:multismooth}. Note also that $x_k \mapsto G_k^{-1}(F_k(x_k|x_{(k+1):d})|S^*_{(k+1):d}(x))$ is strictly increasing as a composition of strictly increasing functions, since $f,g$ are bounded away from 0 on their supports by assumption.
Defining $\mathcal{T}(\mathfrak{s},d)\subset\mathcal{T}$ as the subset of strictly increasing triangular maps that are $\mathfrak{s}$-smooth (although not necessarily with uniformly bounded derivatives), we see that $S^*\in\mathcal{T}(\mathfrak{s},d)$. Since $\mathcal{T}(\mathfrak{s},d)=\cup_{M>0}\mathcal{T}(\mathfrak{s},d,M)$, there exists some $M^*>0$ for which $S^*\in\mathcal{T}(\mathfrak{s},d,M^*)$. Since $\mathcal{T}(\mathfrak{s},d,M_1)\subseteq\mathcal{T}(\mathfrak{s},d,M_2)$ for all $M_1\le M_2$ by definition, we conclude that $S^*\in\mathcal{T}(\mathfrak{s},d,M)$ for all $M\ge M^*$.
\end{proof}

With Lemma \ref{lemma:KRmultismooth} in hand, we can now prove Theorem \ref{thm:KLconv}.

\begin{proof}[Proof of Theorem \ref{thm:KLconv}]
The theorem is a direct result of inequality (\ref{eq:lossbound}) and Lemma \ref{lemma:EPrate}. Indeed, since $S^*\in\mathcal{T}(s,d,M^*)$ by Lemma \ref{lemma:KRmultismooth}, we have $\inf_{S\in\mathcal{T}(s,d,M^*)}P\psi_S=P\psi_{S^*}=0$, and therefore
\begin{align*}
\E[P\psi_{S^n}] &= \E\left\{P\psi_{S^n}-\inf_{S\in\mathcal{T}(s,d,M^*)} P\psi_{S}\right\}\\
&\le \E\left\{2\|P_n-P\|_{\Psi(s,d,M^*)}+ R_n\right\} \tag{inequality (\ref{eq:lossbound})} \\
&\lesssim \E\|P_n-P\|_{\Psi(s,d,M^*)}.
\end{align*}
Appealing to Lemma \ref{lemma:EPrate} establishes the stated bound on $\E[P\psi_{S^n}]$. To conclude that $P\psi_{S^n} \stackrel{p}{\to}0$, we apply Markov's inequality:
\[
P(P\psi_{S^n} \ge \epsilon) \le \epsilon^{-1}\E[P\psi_{S^n}] \to 0.
\]
As $\epsilon > 0$ was arbitrary, this completes the proof.
\end{proof}

\subsubsection{Proof of KL lower semicontinuity}

Although Theorem \ref{thm:KLconv} only establishes a weak form of consistency in terms of the KL divergence, we leverage this result to prove strong consistency, in the sense of uniform convergence of $S^n$ to $S^*$ in probability, in Theorem \ref{thm:uniformconv}. The proof requires understanding the regularity of the KL divergence with respect to the topology induced by the $\|\cdot\|_{\infty,d}$ norm. Lemma \ref{lemma:lsc} establishes that KL is lower semicontinuous with respect to this topology. It relies on the weak lower semicontinuity of KL proved by Donsker and Varadhan using their dual representation in Lemma 2.1 of \cite{donsker-varadhan}.

\begin{lemma}
Under Assumptions \ref{assumption:compact}-\ref{assumption:smooth}, the functional $S\mapsto P\psi_S$ on the domain $\mathcal{T}$ is lower semicontinuous with respect to the uniform norm $\|\cdot\|_{\infty,d}$.
\label{lemma:lsc}
\end{lemma}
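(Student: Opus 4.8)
The plan is to use the identity $P\psi_S=\text{KL}(S\#f\mid g)$ recorded just after Definition~\ref{def:psi} (a consequence of (\ref{min:pop})), which reduces the claim to lower semicontinuity of $S\mapsto\text{KL}(S\#f\mid g)$ on $\mathcal{T}$ with respect to $\|\cdot\|_{\infty,d}$. I would prove this by factoring the functional as $S\mapsto S\#f\mapsto\text{KL}(S\#f\mid g)$: first show that $S\mapsto S\#f$ is continuous from $(\mathcal{T},\|\cdot\|_{\infty,d})$ into the space of probability measures equipped with the topology of weak convergence, and then invoke weak lower semicontinuity of relative entropy. Since a composition of a continuous map with a lower semicontinuous one is lower semicontinuous, this yields the lemma. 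Note that only the restriction of $S$ to $\mathcal{X}=\text{supp}(f)$ enters $S\#f$, by Assumption~\ref{assumption:compact}.

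For the continuity of $S\mapsto S\#f$, suppose $S^n\to S$ in $\|\cdot\|_{\infty,d}$, so that $\sup_{x\in\mathcal{X}}\|S^n(x)-S(x)\|\to 0$; since $\mathcal{Y}$ is closed the limit $S$ still maps $\mathcal{X}$ into $\mathcal{Y}$, so $S\#f$ is a genuine probability measure supported in $\mathcal{Y}$. For any bounded continuous $h$ on $\mathbb{R}^d$, its restriction to the compact set $\mathcal{Y}$ is uniformly continuous, hence $h\circ S^n\to h\circ S$ uniformly on $\mathcal{X}$, and therefore $\int h\,d(S^n\#f)=\int_{\mathcal{X}}h(S^n(x))f(x)\,dx\to\int_{\mathcal{X}}h(S(x))f(x)\,dx=\int h\,d(S\#f)$. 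As $h$ was arbitrary, $S^n\#f\rightsquigarrow S\#f$.

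For the weak lower semicontinuity of $\mu\mapsto\text{KL}(\mu\mid g)$, I would use the Donsker--Varadhan dual representation (Lemma~2.1 of \cite{donsker-varadhan}),
\[
\text{KL}(\mu\mid g)=\sup_{\phi}\left\{\int\phi\,d\mu-\log\int e^{\phi}\,dg\right\},
\]
the supremum being over bounded continuous functions $\phi$. For each fixed such $\phi$ the map $\mu\mapsto\int\phi\,d\mu-\log\int e^{\phi}\,dg$ is weakly continuous (by definition of weak convergence, using that $\int e^{\phi}\,dg\in[e^{-\|\phi\|_\infty},e^{\|\phi\|_\infty}]\subset(0,\infty)$ and continuity of $\log$ there), hence weakly lower semicontinuous; a pointwise supremum of lower semicontinuous functionals is lower semicontinuous, and the value $+\infty$ is accommodated without change. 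Combining the two steps, if $S^n\to S$ in $\|\cdot\|_{\infty,d}$ then
\[
\liminf_{n\to\infty}P\psi_{S^n}=\liminf_{n\to\infty}\text{KL}(S^n\#f\mid g)\ge\text{KL}(S\#f\mid g)=P\psi_S,
\]
which is the asserted lower semicontinuity. The only point that genuinely needs care is the first step: one must verify that uniform convergence of the maps on the compact support of $f$ does deliver weak convergence of the pushforwards and that the limiting map still takes values in $\mathcal{Y}$; no absolute-continuity subtleties arise, since lower semicontinuity tolerates $\text{KL}=+\infty$, and the remaining ingredient is a direct citation of the Donsker--Varadhan lemma.
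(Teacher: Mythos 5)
Your proposal is correct and follows essentially the same route as the paper: uniform convergence of the maps gives weak convergence of the pushforwards $S^n\#f\rightsquigarrow S\#f$, and the conclusion follows from the weak lower semicontinuity of $\mathrm{KL}$ via the Donsker--Varadhan representation. The only (cosmetic) difference is that the paper obtains $\int h\circ S^n\,df\to\int h\circ S\,df$ by pointwise convergence plus dominated convergence, which works for arbitrary measurable maps in $\mathcal{T}$, whereas your uniform-continuity argument implicitly requires the maps to take values in the compact set $\mathcal{Y}$.
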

\begin{proof}
Assume $\|S^n-S\|_{\infty,d}\to 0$ for fixed maps $S^n,S\in\mathcal{T}$. We first claim that $\nu_n := S^n\#\mu\rightsquigarrow S\#\mu =:\nu$ for any probability measure $\mu$. We will show that $\int h\ d\nu_n \to \int h\ d\nu $
for all bounded continuous functions $h\in C_b(\mathbb{R}^d)$. Note that $\int h\ d\nu_n = \int h\circ S^n\ d\mu$.
By assumption, $h\circ S^n$ is bounded and measurable with $\|h\circ S^n\|_\infty \le \|h\|_\infty$ for all $n$. Furthermore, $h\circ S^n\to h\circ S$ pointwise, since $h$ is continuous. Then by the dominated convergence theorem,
$\int |h\circ S^n-h\circ S|\ d\mu \to 0$.
In particular, this implies that $\int h\ d\nu_n = \int h\circ S^n\ d\mu \to \int h\circ S\ d\mu = \int h\ d\nu$.
This proves the claim.

Combining this fact with the weak lower semicontinuity of KL divergence \cite{donsker-varadhan} proves the lemma. Indeed, since $\|S^n-S\|_{\infty,d}\to 0$ implies that $S^n\#\mu\rightsquigarrow S\#\mu$ for any $\mu$, it follows that
\begin{align*}
P\psi_S &=\text{KL}(S\#f|g) \\
&\le \liminf_{n\to\infty} \text{KL}(S^n\#f|g) \tag{KL is weakly lower semicontinuous} \\
&=\liminf_{n\to\infty} P\psi_{S^n}.
\end{align*}
Thus, $S\mapsto P\psi_S$ on $\mathcal{T}$ is lower semicontinuous with respect to $\|\cdot\|_{\infty,d}$.
\end{proof}

As a corollary, we also obtain an existence result for minimizers of $S\mapsto P\psi_S$ on $\overline{\mathcal{T}(s,d,M)}$.

\begin{corollary}
Under Assumptions \ref{assumption:compact}-\ref{assumption:smooth}, for any $M>0$ the minimum
$
\inf_{S\in\overline{\mathcal{T}(s,d,M)}} P\psi_S
$
is attained.
\end{corollary}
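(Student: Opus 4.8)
The plan is to invoke the classical Weierstrass argument: a lower semicontinuous functional on a compact metric space attains its infimum. Both ingredients are already available, so the proof is short. First I would record that $\overline{\mathcal{T}(s,d,M)}$ is compact with respect to $\|\cdot\|_{\infty,d}$. Since $\mathcal{X}$ is compact by Assumption \ref{assumption:compact}, the space $(C(\mathcal{X})^d,\|\cdot\|_{\infty,d})$ is complete, and the homogeneous case $\mathfrak{s}=(s,\ldots,s)$ of Proposition \ref{prop:KRmultientbound} shows that $\mathcal{T}(s,d,M)$ is totally bounded in this norm; a totally bounded subset of a complete metric space has compact closure, so $\overline{\mathcal{T}(s,d,M)}$ is compact.

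Next I would verify that $\overline{\mathcal{T}(s,d,M)}\subseteq\mathcal{T}$, so that Lemma \ref{lemma:lsc} is applicable on this set. This is routine: if $S^n\to S$ in $\|\cdot\|_{\infty,d}$ with each $S^n$ upper triangular and monotone non-decreasing, then each component $S_k$ remains a function of $(x_k,\ldots,x_d)$ only (it is a pointwise limit of maps not depending on $x_1,\ldots,x_{k-1}$), and monotone non-decreasingness of $x_k\mapsto S_k(x_k,\ldots,x_d)$ is preserved under pointwise limits; hence $S\in\mathcal{T}$.

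With these in place I would run the Weierstrass step. Set $m=\inf_{S\in\overline{\mathcal{T}(s,d,M)}}P\psi_S$, which is finite and in fact nonnegative since $P\psi_S=\mathrm{KL}(S\#f|g)\ge 0$. Choose a minimizing sequence $S^n\in\overline{\mathcal{T}(s,d,M)}$ with $P\psi_{S^n}\to m$. By compactness, extract a subsequence $S^{n_j}\to\tilde S$ in $\|\cdot\|_{\infty,d}$; closedness gives $\tilde S\in\overline{\mathcal{T}(s,d,M)}$, and Lemma \ref{lemma:lsc} (which applies since $\overline{\mathcal{T}(s,d,M)}\subseteq\mathcal{T}$) yields $P\psi_{\tilde S}\le\liminf_{j\to\infty}P\psi_{S^{n_j}}=m$. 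Combined with $P\psi_{\tilde S}\ge m$ by definition of $m$, this gives $P\psi_{\tilde S}=m$, so the infimum is attained at $\tilde S$.

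The only point needing care is the reconciliation of the two function-space conventions: the entropy/total-boundedness estimate is phrased for $\mathcal{T}(\mathfrak{s},d,M)$ while the lower-semicontinuity lemma is phrased for the ambient cone $\mathcal{T}$, so one must confirm $\overline{\mathcal{T}(s,d,M)}\subseteq\mathcal{T}$ before invoking Lemma \ref{lemma:lsc}. Beyond that bookkeeping the argument is the standard lower-semicontinuous-minimization template, and I do not anticipate a genuine obstacle.
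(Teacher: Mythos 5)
Your proof is correct and follows essentially the same route as the paper, which invokes the direct method of calculus of variations: compactness of $\overline{\mathcal{T}(s,d,M)}$ in $\|\cdot\|_{\infty,d}$ (via Proposition \ref{prop:KRmultientbound}) together with boundedness below and lower semicontinuity of $S\mapsto P\psi_S$ (Lemma \ref{lemma:lsc}). Your extra bookkeeping — checking $\overline{\mathcal{T}(s,d,M)}\subseteq\mathcal{T}$ and spelling out the minimizing-sequence extraction — is just an explicit rendering of the same argument.
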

\begin{proof}
This follows from the direct method of calculus of variations, since $\overline{\mathcal{T}(s,d,M)}$ is compact with respect to $\|\cdot\|_{\infty,d}$ (Proposition \ref{prop:KRmultientbound})
and $S\mapsto P\psi_S$ is bounded below and lower semicontinuous with respect to $\|\cdot\|_{\infty,d}$. 
\end{proof}

\subsubsection{Uniform consistency of the inverse map}

We have proved consistency of the estimator $S^n$ of the sampling map $S^*$, which pushes forward the target density $f$ to the source density $g$. However, we require knowledge of the direct map $T^*=(S^*)^{-1}$ to generate new samples from $f$ by pushing forward samples from $g$ under $T^*$. In this section we prove consistency and a rate of convergence of the estimator $T^n=(S^n)^{-1}$ of the direct map $T^*$. 

First note that $\text{KL}(S\# f|g)=\text{KL}(f|S^{-1}\# g)$ for any diffeomorphism $S:\mathbb{R}^d\to\mathbb{R}^d$, as we proved in Section \ref{sec:eq-proof}. As such, the consistency and rate of convergence of $\text{KL}(S^n\# f|g)$ obtained in Theorem \ref{thm:KLconv} yield the same results for the estimator $T^n$ in terms of $\text{KL}(f|T^n\# g)$ under identical assumptions. We can also establish uniform consistency of $T^n$ as we did for $S^n$ in Theorem \ref{thm:uniformconv}, although the proof of this fact requires a bit more work.

\begin{theorem}
Suppose Assumptions \ref{assumption:compact}-\ref{assumption:smooth} hold.
Let $S^n$ be any near-optimizer of the functional $S\mapsto P_n\psi_S$ on $\mathcal{T}(s,d,M^*)$, i.e., suppose
\[
P_n\psi_{S^n} = \inf_{S\in\mathcal{T}(s,d,M^*)} P_n\psi_S+o_P(1).
\]
Let $T^n=(S^n)^{-1}$. Then $\|T^n-T^*\|_{\infty,d}\stackrel{p}{\to}0$, i.e, $T^n$ is a uniformly consistent estimator of $T^*$.
\label{thm:invconsist}
\end{theorem}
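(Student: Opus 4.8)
\emph{Proof proposal.} The plan is to show that the inversion operation $S\mapsto S^{-1}$ is (Lipschitz) continuous on $\mathcal{T}(s,d,M^*)$ with respect to $\|\cdot\|_{\infty,d}$, and then to read off the conclusion from Theorem~\ref{thm:uniformconv}. The first and crucial step is a \emph{uniform} bi-Lipschitz estimate: every $S\in\mathcal{T}(s,d,M^*)$ is a bijection onto $\mathcal{Y}$ whose inverse is $L$-Lipschitz for a constant $L=L(M^*,d)$ independent of $S$. By Definition~\ref{def:tsdm}, the Jacobian $JS(x)$ is upper triangular with diagonal entries $D_kS_k(x)\in[1/M^*,M^*]$ and off-diagonal entries bounded in absolute value by $M^*$. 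A purely linear-algebraic computation (expanding the inverse of a triangular matrix over paths) then bounds $\|(JS(x))^{-1}\|_{\mathrm{op}}$ by some $L(M^*,d)$, polynomial in $M^*$ and exponential in $d$. Since $\mathcal{Y}$ is convex and, by the inverse function theorem, $J(S^{-1})(y)=(JS(S^{-1}(y)))^{-1}$, it follows that $S^{-1}$ is $L$-Lipschitz on $\mathcal{Y}$, uniformly over $S\in\mathcal{T}(s,d,M^*)$.

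Granting this, the continuity of inversion is elementary. Fix $S,S'\in\mathcal{T}(s,d,M^*)$ with $\|S-S'\|_{\infty,d}\le\eta$, fix $y\in\mathcal{Y}$, and set $x=S^{-1}(y)$, $x'=(S')^{-1}(y)$, so that $S(x)=S'(x')=y$. Then $\|S(x)-S(x')\|=\|S'(x')-S(x')\|\lesssim\eta$, and applying the $L$-Lipschitz map $S^{-1}$ gives $\|x-x'\|=\|S^{-1}(S(x))-S^{-1}(S(x'))\|\le L\,\|S(x)-S(x')\|\lesssim L\eta$. Taking the supremum over $y$ yields $\|S^{-1}-(S')^{-1}\|_{\infty,d}\lesssim L\,\|S-S'\|_{\infty,d}$. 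Specializing to $S=S^n$ and $S'=S^*$ (both bijections onto $\mathcal{Y}$, so that $T^n=(S^n)^{-1}$ and $T^*=(S^*)^{-1}$ are defined on all of $\mathcal{Y}$), we obtain $\|T^n-T^*\|_{\infty,d}\lesssim L\,\|S^n-S^*\|_{\infty,d}$, and the right-hand side converges to $0$ in probability by Theorem~\ref{thm:uniformconv}. Hence $\|T^n-T^*\|_{\infty,d}\stackrel{p}{\to}0$.

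The main obstacle is the uniform bound in the first step: one must control the off-diagonal entries of the inverse of an upper-triangular matrix whose entries are bounded and whose diagonal is bounded away from zero, uniformly over the whole class, and also verify that $S^n$ is genuinely invertible with image $\mathcal{Y}$ (this follows from strict monotonicity and the boundary behaviour forced by membership in $\mathcal{T}(s,d,M^*)$ together with $S^*\in\mathcal{T}(s,d,M^*)$, or one simply restricts to the common image). An alternative route, which I would not pursue in detail, avoids the explicit Lipschitz bound: using Fa\`a di Bruno one shows the inverses $T^n$ lie in a fixed compact class $\overline{\mathcal{T}(s,d,\tilde M)}$ (cf.\ Proposition~\ref{prop:KRmultientbound} and Lemma~\ref{lemma:multismooth}), and then along any subsequence one extracts a uniform limit $\bar T$ and identifies $\bar T=T^*$ by passing to the limit in the identity $S^n(T^n(y))=y$, using the uniform convergence $S^n\to S^*$ and the continuity of $S^*$. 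The direct Lipschitz argument is cleaner and is the one I would write up.
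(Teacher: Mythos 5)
Your proposal is correct and follows essentially the same route as the paper: the paper also derives a uniform Lipschitz bound on the inverses by bounding the entries of $(JS^n)^{-1}$ for upper triangular Jacobians with diagonal in $[1/M^*,M^*]$ and bounded off-diagonal entries (Lemma~\ref{lemma:matrixbound} plus the inverse function theorem), and then writes $y=S^*(x)$ so that $\|T^n(y)-T^*(y)\|=\|T^n(S^*(x))-T^n(S^n(x))\|\le C(d,M^*)\|S^n-S^*\|_{\infty,d}$, concluding via Theorem~\ref{thm:uniformconv}. Your estimate $\|T^n-T^*\|_{\infty,d}\lesssim L\|S^n-S^*\|_{\infty,d}$ is exactly this argument, so no substantive difference.
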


The proof of Theorem \ref{thm:invconsist} relies heavily on the bounds on $S^n$ and its derivatives posited in Definition \ref{def:tsdm}, which we utilize in conjunction with the inverse function theorem to uniformly bound the Jacobian $JT^n(y)=(JS^n(T^n(y)))^{-1}$ over $y\in\mathcal{Y}$ and $n\in \mathbb{N}$, thereby establishing uniform equicontinuity of the family of estimators $\{T^n\}_{n=1}^\infty$. We combine this uniform equicontinuity with the uniform consistency of $S^n$ from Theorem \ref{thm:uniformconv} to complete the proof.

First we establish a lemma that allows us to bound the derivatives of the inverse map estimates $T^n$.

\begin{lemma}
Suppose $A\in\mathbb{R}^{d\times d}$ is an invertible upper triangular matrix satisfying 
\[
\max_{i,j\in[d], i<j}|A_{ij}|\le L \qquad\text{and}\qquad \min_{j\in[d]}|A_{jj}|\ge 1/M
\]
for some positive $L,M>0$. Then $A^{-1}$ is upper triangular and the diagonal entries are bounded as
\[
\max_{j\in[d]}|A^{-1}_{jj}|\le M.
\]
Furthermore, the superdiagonal terms $i,j\in[d]$ with $i<j$ are bounded as
\[
|A_{ij}^{-1}|\le M^2L(ML+1)^{j-i-1}.
\]
\label{lemma:matrixbound}
\end{lemma}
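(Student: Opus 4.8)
The plan is to exploit the explicit back-substitution formula for inverting an upper triangular matrix and then run an induction on the distance $m=j-i$ from the diagonal. First I would record two standard facts: the inverse of an invertible upper triangular matrix is again upper triangular, and its diagonal entries are the reciprocals of those of $A$. Writing $B:=A^{-1}$, this already gives $B_{jj}=A_{jj}^{-1}$, hence $|B_{jj}|=|A_{jj}|^{-1}\le M$, which is the diagonal claim. For the off-diagonal entries, I would extract a recursion from $AB=I$: using upper triangularity of both $A$ and $B$, for $i<j$ the $(i,j)$ entry of $AB$ vanishes, so
\[
0=\sum_{k=i}^{j} A_{ik}B_{kj}=A_{ii}B_{ij}+\sum_{k=i+1}^{j}A_{ik}B_{kj},
\]
which rearranges to $B_{ij}=-A_{ii}^{-1}\sum_{k=i+1}^{j}A_{ik}B_{kj}$.

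Next I would prove by induction on $m=j-i\ge 1$ the bound $|B_{ij}|\le c_m$, where $c_m:=M^2L(ML+1)^{m-1}$. The base case $m=1$ is immediate: the recursion gives $B_{i,i+1}=-A_{ii}^{-1}A_{i,i+1}B_{i+1,i+1}$, so $|B_{i,i+1}|\le M\cdot L\cdot M=M^2L=c_1$. For the inductive step with $m\ge 2$, I would split off the $k=j$ term in the recursion and bound the remaining terms using $|A_{ik}|\le L$ for $i<k$, $|B_{jj}|\le M$, and the inductive hypothesis applied to the entries $B_{kj}$ with $i<k<j$ (which satisfy $j-k\le m-1$):
\[
|B_{ij}|\le M\Big(LM+L\sum_{l=1}^{m-1}c_l\Big)=ML\Big(M+\sum_{l=1}^{m-1}c_l\Big).
\]
A geometric-series computation, $\sum_{l=1}^{m-1}c_l=M^2L\cdot\frac{(ML+1)^{m-1}-1}{ML}=M\big[(ML+1)^{m-1}-1\big]$, yields $M+\sum_{l=1}^{m-1}c_l=M(ML+1)^{m-1}$, so $|B_{ij}|\le M^2L(ML+1)^{m-1}=c_m$. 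This closes the induction and establishes the superdiagonal bound.

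The argument is essentially bookkeeping, so I do not expect a genuine obstacle. The one point requiring care is choosing the inductive quantity $c_m$ correctly at the outset so that the geometric sum telescopes back into the recursion with exactly the claimed constant $M^2L(ML+1)^{j-i-1}$ rather than a slightly weaker one; getting the exponent to line up is the only subtlety, and the computation above shows it does.
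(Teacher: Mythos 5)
Your proof is correct, and it takes a genuinely different route from the paper's. The paper splits $A=D+U$ into its diagonal and strictly upper triangular parts, uses nilpotency of $D^{-1}U$ to write the finite Neumann expansion $A^{-1}=\sum_{k=0}^{d-1}(-D^{-1}U)^kD^{-1}$, proves a combinatorial lemma giving $\bigl|(U^k)_{ij}\bigr|\le L^k\binom{j-i-1}{k-1}$ (by induction on $k$ using Pascal's formula), and then resums with the binomial theorem to reach $M^2L(ML+1)^{j-i-1}$; the expansion also yields upper triangularity of $A^{-1}$ as a by-product. You instead extract the back-substitution recursion $B_{ij}=-A_{ii}^{-1}\sum_{k=i+1}^{j}A_{ik}B_{kj}$ from $AB=I$ and run a (strong) induction on the off-diagonal distance $m=j-i$, closing it with a geometric-series computation; your algebra checks out and reproduces exactly the paper's constant. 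Your argument is shorter and more elementary, avoiding the combinatorial bookkeeping with powers of the all-ones strictly upper triangular matrix, though it quotes upper triangularity of $A^{-1}$ and $B_{jj}=A_{jj}^{-1}$ as standard facts rather than deriving them (both also follow immediately from the same recursion, so this is a cosmetic gap at most); the paper's series expansion, by contrast, delivers those structural facts and an explicit term-by-term representation of $A^{-1}$ in one stroke.
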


\begin{proof}
Let $D=\text{diag}(A)$ denote the matrix with diagonal entries $D_{jj}=A_{jj}$ for $j\in[d]$ and zeros elsewhere. Note that $D$ is invertible since $A$ is, and $D^{-1}$ is diagonal with entries $D^{-1}_{jj}=1/A_{jj}$ for $j\in[d]$. Let $U=A-D$ denote the strictly upper triangular part of $A$. Now note that
\[
A^{-1} = (D+U)^{-1} =[D(I+D^{-1}U)]^{-1} = (I+D^{-1}U)^{-1}D^{-1}.
\]
To calculate $(I+D^{-1}U)^{-1}$ we make use of the matrix identity
\[
(I-X)\left[\sum_{k=0}^{d-1}X^k\right] = I-X^d.
\]
We plug in $X=-D^{-1}U$ and note that $D^{-1}U$ is strictly upper triangular, which implies that it is nilpotent of degree at most $d$, i.e., $(D^{-1}U)^d=0$. Hence, we obtain
\[
(I+D^{-1}U)\left[\sum_{k=0}^{d-1}(-D^{-1}U)^k\right] = I-(-D^{-1}U)^d=I,
\]
which implies that $(I+D^{-1}U)^{-1} = \sum_{k=0}^{d-1}(-D^{-1}U)^k$. It follows that
\[
A^{-1} = \sum_{k=0}^{d-1}(-D^{-1}U)^k D^{-1}.
\]
This shows that $A^{-1}$ is upper triangular as a sum of products of upper triangular matrices. Note also that the terms in the sum with $k>0$ are strictly upper triangular. Hence, we see that $\text{diag}(A^{-1})=D^{-1}$ and therefore the diagonal entries of $A^{-1}$ satisfy $
|A^{-1}_{jj}| = \frac{1}{|A_{jj}|}\le M
$
by hypothesis.

Now we bound the superdiagonal entries. By repeated application of the triangle inequality and appealing to the bounds on $A_{ij}$, we can bound $A^{-1}_{ij}$ for $i<j$ as
\begin{align*}
|A^{-1}_{ij}| &= \left|\sum_{k=0}^{d-1} [(D^{-1}U)^k D^{-1}]_{ij}\right| \\
&= \left|\sum_{k=1}^{d-1} [(D^{-1}U)^k D^{-1}]_{ij}\right| \tag{$i<j$}\\
&\le \sum_{k=1}^{d-1} |[(D^{-1}U)^k D^{-1}]_{ij}| \\
&\le \sum_{k=1}^{d-1} |[(MU)^k M]_{ij}| \tag{$|D^{-1}_{jj}|=|A^{-1}_{jj}|\le M$}\\
&= \sum_{k=1}^{d-1} M^{k+1}|(U^k)_{ij}|.
\end{align*}
Now let $V$ denote the $d\times d$ matrix with ones strictly above the diagonal and zeros elsewhere, i.e., 
\[
V_{ij} = 
\begin{cases}
1, &i < j, \\
0, & i\ge j.
\end{cases}
\]
We now claim that for $k\ge 1$,
\[
(V^k)_{ij} =
\begin{cases}
\binom{j-i-1}{k-1}, & k\le j-i \\
0, &\text{else}.
\end{cases}
\]
Our proof of the claim proceeds by induction. The statement is clearly true for the base case $k=1$ by definition of $V$. Suppose the claim holds up to $k$. It follows that 
\begin{align*}
(V^{k+1})_{ij} &= (V^k\cdot V)_{ij} \\
&= \sum_{\ell=1}^d (V^k)_{i\ell}V_{\ell j} \\
&= \sum_{\ell=1}^d \binom{\ell-i-1}{k-1}1_{[k\le \ell-i]}\cdot 1_{[\ell<j]} \tag{inductive step}\\
&= 1_{[k+1\le j-i]}\sum_{\ell=i+k}^{j-1} \binom{\ell-i-1}{k-1} \\
&= 1_{[k+1\le j-i]}\sum_{p=k-1}^{j-i-2} \binom{p}{k-1} \tag{$p:=\ell-i-1$}\\
&= 1_{[k+1\le j-i]}\left\{1+\sum_{p=k}^{j-i-2} \binom{p}{k-1}\right\} \\
&= 1_{[k+1\le j-i]}\left\{1+\sum_{p=k}^{j-i-2} \left[\binom{p+1}{k}-\binom{p}{k}\right]\right\} \tag{Pascal's formula}\\
&= 1_{[k+1\le j-i]}\left\{1+\binom{j-i-1}{k}-\binom{k}{k}\right\} \tag{telescoping sum}\\
&= 1_{k+1\le j-i}\binom{j-i-1}{k}.
\end{align*}
This proves the claim for $k+1$. The result then follows by induction.

Note now that $|U_{ij}|\le LV_{ij}$ for all $i,j\in[d]$ by hypothesis. Again applying the triangle inequality and the assumption $\max_{i<j}|A_{ij}|\le L$, it follows that
\begin{align*}
|A^{-1}_{ij}| &\le \sum_{k=1}^{d-1} M^{k+1}|(U^k)_{ij}| \\
&\le \sum_{k=1}^{d-1} M^{k+1}|((LV)^k)_{ij}| \\
&= \sum_{k=1}^{d-1}M^{k+1}L^k\binom{j-i-1}{k-1}1_{[k\le j-i]} \\
&= \sum_{k=1}^{j-i}M^{k+1}L^k\binom{j-i-1}{k-1}\\
&= M^2L\sum_{\ell=0}^{j-i-1}(ML)^{\ell}1^{j-i-1-\ell}\binom{j-i-1}{\ell} \tag{$\ell:=k-1$}\\
&= M^2L(ML+1)^{j-i-1}. \tag{binomial formula}
\end{align*}
This completes the proof.
\end{proof}

We now use this lemma to establish strong consistency of the direct map estimator.

\begin{proof}[Proof of Theorem \ref{thm:invconsist}]
First note that convexity of $\mathcal{X}$ along with positivity $f>0$ on $\mathcal{X}$ implies that the inverse KR map $T^*=(S^*)^{-1}$
is well-defined and continuous, as noted in Lemma \ref{lemma:KRmultismooth}. Furthermore, by definition of $\mathcal{T}(s,d,M^*)$, the inverse maps $T^n=(S^n)^{-1}$ exist and are continuous also.

We first prove that the sequence $\{T^n\}_{n=1}^\infty$ is uniformly equicontinuous with respect to the $\ell_\infty$ norm $\|\cdot\|_\infty$ on $\mathbb{R}^d$.
For a function $S:\mathcal{X}\to\mathcal{Y}$ let $JS(x)$ denote the Jacobian matrix at $x\in\mathcal{X}$. For a matrix $A:\mathbb{R}^d\to\mathbb{R}^d$ let $\|A\|_{p}$ denote the operator norm induced by the $\ell_p$ norm on $\mathbb{R}^d$, i.e., 
\[
\|A\|_p = \sup\left\{\|Ax\|_p:x\in\mathbb{R}^d, \|x\|_p=1\right\}.
\]
When $p=\infty$, this norm is simply the maximum absolute row sum of the matrix:
\[
\|A\|_\infty = \max_{1\le i\le d}\sum_{j=1}^d |A_{ij}|.
\]
We aim to bound
$
\|JT^n(y)\|_\infty
$
uniformly in $y\in\mathcal{Y},n\in\mathbb{N}$. Since $\{S^n\}\subset\mathcal{T}(s,d,M^*)$, the Jacobian matrix $JS^n(x)$ is upper triangular for every $x\in\mathcal{X}$ and
\[
\max_{i<j}\sup_{x\in\mathcal{X}}|[JS^n(x)]_{ij}|=\max_{i<j}\|D_jS^n_i\|_\infty\le M^*.
\]
Similarly, we also have that
\[
\min_{j\in [d]}\inf_{x\in\mathcal{X}}|[JS^n(x)]_{jj}|=\min_{j\in [d]}\inf_{x\in\mathcal{X}}|D_jS^n_j(x)|\ge 1/M^*.
\]
It follows that for each $x\in\mathcal{X}$, the Jacobian $JS^n(x)$ satisfies the hypotheses of Lemma \ref{lemma:matrixbound} with $L=M^*$. Applying the inverse function theorem, we conclude from the lemma that
the entries of $JT^n(y)=(JS^n(T^n(y)))^{-1}$ are bounded as
\begin{align*}
\max_{i<j}\sup_{y\in\mathcal{Y}} |[JT^n(y)]_{ij}|&\le (M^*)^3((M^*)^2+1)^{j-i-1}, \\
\max_{j\in [d]}\sup_{y\in\mathcal{Y}} |[JT^n(y)]_{jj}|&\le M^*.
\end{align*}
It follows that the $\ell_\infty$ operator norm is bounded as
\begin{align*}
\sup_{y\in\mathcal{Y}}\|JT^n(y)\|_\infty \le&\ M^*+\sum_{j=2}^d (M^*)^3((M^*)^2+1)^{j-2} \\
=&\begin{cases}
M^*+(M^*)^3\cdot\frac{1-((M^*)^2+1)^{d-2}}{1-((M^*)^2+1)}, & d\ge 2, \\
M^*, &d =1.
\end{cases} \tag{partial geometric series}\\
=& \begin{cases}
M^*((M^*)^2+1)^{d-2}, & d\ge 2, \\
M^*, & d=1.
\end{cases} \\
=&\ M^*\cdot \max\{1,((M^*)^2+1)^{d-2}\}\\
:=&\ C(d,M^*).
\end{align*}
Here we have used the convention that $\sum_{j=2}^{1}a_j=0$ for any sequence $\{a_j\}$.
Now we apply the mean value inequality for vector-valued functions to deduce that the $\{T^n\}$ are uniformly equicontinuous. Indeed, for any $y_1,y_2\in\mathcal{Y}$, we have
\begin{align*}
\|T^n(y_1)-T^n(y_2)\|_\infty &\le \sup_{y\in\mathcal{Y}}\|JT^n(y)\|_\infty\|y_1-y_2\|_\infty \\
&\le C(d,M^*)\|y_1-y_2\|_\infty.
\end{align*}

Now let $y\in\mathcal{Y}$ and note that $y=S^*(x)$ for some $x\in\mathcal{X}$. 
We then have
\begin{align*}
\|T^n(y)-T^*(y)\|_\infty &= \|T^n(y)-T^*(S^*(x))\|_\infty  \\
&= \|T^n(y)-x\|_\infty  \\
&= \|T^n(S^*(x))-T^n(S^n(x))\|_\infty \\
&\le C(d,M^*)\|S^*(x)-S^n(x)\|_\infty \\
&\le C(d,M^*)\|S^*-S^n\|_{\infty,d}.
\end{align*}
Since $y\in\mathcal{Y}$ was arbitrary, we can take the supremum over $y$ on the left side to obtain the desired result:
\[
\|T^n-T^*\|_{\infty,d} \le C(d,M^*)\|S^*-S^n\|_{\infty,d} \stackrel{p}{\to} 0. \tag{Theorem \ref{thm:uniformconv}}
\]
\end{proof}

\subsection{Sobolev rates under log-concavity}
\label{sec:concave-proof}
\subsubsection{Proof of Theorem \ref{thm:KRrate}}

Suppose the source density $g$ is log-concave, which implies that $S\mapsto P\psi_S$ and $S\mapsto P_n\psi_S$ are strictly convex functionals. 
Since $\mathcal{T}(s,d,M)$ is convex, 
\[
\min_{S\in\mathcal{T}(s,d,M)} P\psi_S, \qquad \min_{S\in\mathcal{T}(s,d,M)} P_n\psi_S
\]
are convex optimization problems. If in addition $g$ is strongly log-concave, we obtain strong convexity of the objective, as we show in Lemma \ref{lemma:strongcvx}.

\begin{lemma}
Suppose Assumptions \ref{assumption:compact}-\ref{assumption:smooth} hold. Assume further that the source density $g$ is $m$-strongly log-concave for some $m>0$:
\[
[\nabla \log g(y_1)-\nabla \log g(y_2)]^T(y_1-y_2) \le m\|y_1-y_2\|_2^2 \qquad\forall y_1,y_2\in\mathcal{Y}.
\]
Then the map $S\mapsto P\psi_S$ on $\mathcal{T}(s,d,M)$ is $\min\{m,M^{-2}\}$-strongly convex with respect to the $L^2$ Sobolev-type norm
\begin{align*}
\|S\|_{H^{1,2}_f(\mathcal{X})}^2 :=&\  \|S\|^2_{L^2_f(\mathcal{X})}+\sum_{k=1}^d\|D_kS_k\|_{L^2_f(\mathcal{X})}^2.
\end{align*}
\label{lemma:strongcvx}
\end{lemma}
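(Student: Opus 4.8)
The plan is to establish strong convexity directly from its standard characterization: $S\mapsto P\psi_S$ is $\mu$-strongly convex with respect to $\|\cdot\|_{H^{1,2}_f(\mathcal{X})}$, with $\mu:=\min\{m,M^{-2}\}$, if and only if the functional
\[
S\mapsto P\psi_S-\tfrac{\mu}{2}\|S\|^2_{H^{1,2}_f(\mathcal{X})}
\]
is convex on the convex set $\mathcal{T}(s,d,M)$. Recalling $\psi_S(x)=\log f(x)-\log g(S(x))-\sum_{k=1}^d\log D_kS_k(x)$, the term $\E_f[\log f(X)]$ does not depend on $S$, so it suffices to prove convexity of $S\mapsto-\E_f[\log g(S(X))]-\sum_k\E_f[\log D_kS_k(X)]-\tfrac{\mu}{2}\|S\|^2_{H^{1,2}_f(\mathcal{X})}$. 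I would split $\|S\|^2_{H^{1,2}_f(\mathcal{X})}=\sum_k\|S_k\|^2_{L^2_f(\mathcal{X})}+\sum_k\|D_kS_k\|^2_{L^2_f(\mathcal{X})}$ and pair the first block with the $-\log g$ term and the second block with the $-\log D_kS_k$ terms.

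For the $-\log g$ piece: the assumed $m$-strong log-concavity of $g$ is equivalent to convexity of $y\mapsto-\log g(y)-\tfrac{m}{2}\|y\|_2^2$ on $\mathcal{Y}$, which is well defined and $C^1$ there by Assumptions \ref{assumption:compact}--\ref{assumption:convex}. Since $S\mapsto S(x)$ is affine and maps $\mathcal{T}(s,d,M)$ into $\mathcal{Y}$, and since integration against $f$ preserves convexity, the functional $S\mapsto-\E_f[\log g(S(X))]-\tfrac{m}{2}\sum_k\|S_k\|^2_{L^2_f(\mathcal{X})}$ is convex; as $\mu\le m$, replacing $\tfrac{m}{2}$ by $\tfrac{\mu}{2}$ only adds a convex quadratic and keeps it convex. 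For the Jacobian pieces: for every $S\in\mathcal{T}(s,d,M)$, condition \ref{LB} and condition \ref{UB} of Definition \ref{def:tsdm} (the latter gives in particular $\|D_kS_k\|_\infty\le M$) force $D_kS_k(x)\in[1/M,M]$ for all $k\in[d]$ and $x\in\mathcal{X}$. On $(0,M]$ the scalar function $t\mapsto-\log t-\tfrac{1}{2M^2}t^2$ is convex, since its second derivative $t^{-2}-M^{-2}$ is nonnegative there. Because $S\mapsto D_kS_k(x)$ is linear and maps $\mathcal{T}(s,d,M)$ into $[1/M,M]\subset(0,M]$, composing and integrating against $f$ shows $S\mapsto-\E_f[\log D_kS_k(X)]-\tfrac{1}{2M^2}\|D_kS_k\|^2_{L^2_f(\mathcal{X})}$ is convex for each $k$; summing over $k$ and using $\mu\le M^{-2}$ gives convexity of $S\mapsto-\sum_k\E_f[\log D_kS_k(X)]-\tfrac{\mu}{2}\sum_k\|D_kS_k\|^2_{L^2_f(\mathcal{X})}$.

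Adding the two convex functionals and the constant $\E_f[\log f(X)]$ yields convexity of $S\mapsto P\psi_S-\tfrac{\mu}{2}\|S\|^2_{H^{1,2}_f(\mathcal{X})}$, which is the desired $\min\{m,M^{-2}\}$-strong convexity. Assumption \ref{assumption:smooth} and compactness enter only to guarantee that all the expectations are finite so that the functional is real-valued. The one point requiring genuine care is the correct pairing of each block of the Sobolev norm with the term that absorbs it: strong log-concavity of $g$ controls the $L^2_f$ part of $S$ itself, while the Jacobian bound $D_kS_k\le M$ built into $\mathcal{T}(s,d,M)$ is precisely what is needed for the scalar inequality $t^{-2}\ge M^{-2}$ and hence for the $L^2_f$ part of $D_kS_k$; everything else is the standard stability of convexity under affine precomposition and under integration.
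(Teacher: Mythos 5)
Your proof is correct, but it takes a genuinely different route from the paper. The paper's proof computes the G\^ateaux derivative $\nabla P\psi_S(A) = -\E\{\nabla\log g(S(X))^TA(X)+\sum_k D_kA_k(X)/D_kS_k(X)\}$ (justifying differentiation under the integral via dominated convergence) and then verifies the first-order monotonicity form of strong convexity, $(\nabla P\psi_A-\nabla P\psi_B)(A-B)\ge \min\{m,M^{-2}\}\|A-B\|^2_{H^{1,2}_f(\mathcal{X})}$, using strong log-concavity for the $L^2_f$ block and the algebraic identity $\frac{D_k(A_k-B_k)}{D_kB_k}-\frac{D_k(A_k-B_k)}{D_kA_k}=\frac{[D_k(A_k-B_k)]^2}{D_kA_kD_kB_k}\ge M^{-2}[D_k(A_k-B_k)]^2$ for the derivative block. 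You instead use the zeroth-order characterization that $S\mapsto P\psi_S-\tfrac{\mu}{2}\|S\|^2_{H^{1,2}_f(\mathcal{X})}$ is convex (valid here because $H^{1,2}_f$ is Hilbertian, a point worth stating explicitly), and reduce everything to pointwise scalar convexity: $y\mapsto-\log g(y)-\tfrac{m}{2}\|y\|_2^2$ convex on the convex set $\mathcal{Y}$, and $t\mapsto-\log t-\tfrac{1}{2M^2}t^2$ convex on $(0,M]$, composed with the linear evaluation maps $S\mapsto S(x)$ and $S\mapsto D_kS_k(x)$ and integrated against $f$. Both arguments isolate the same structural inputs — strong log-concavity absorbs the $L^2_f$ part of $S$, while the upper bound $D_kS_k\le M$ from condition \ref{UB} of Definition \ref{def:tsdm} (not the lower bound, which only guarantees $\log D_kS_k$ is finite) delivers the $M^{-2}$ modulus — and both give the same constant $\min\{m,M^{-2}\}$. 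Your version is more elementary in that it needs no G\^ateaux differentiation or exchange of limit and expectation; the paper's version has the side benefit of producing the explicit gradient formula and the gradient-monotonicity/quadratic-minorization form $P\psi_{S^n}-P\psi_{S^*}\ge\nabla P\psi_{S^*}(S^n-S^*)+\tfrac{\mu}{2}\|S^n-S^*\|^2_{H^{1,2}_f(\mathcal{X})}$ that is invoked directly (with $\nabla P\psi_{S^*}=0$) in the proof of Theorem \ref{thm:KRrate}; if you proceed from your characterization you would recover that inequality from convexity of $P\psi_S-\tfrac{\mu}{2}\|S\|^2$ plus differentiability at $S^*$, which is a short extra step but should be acknowledged.
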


\begin{proof}
We first calculate the G\^ateaux derivative of $S\mapsto P\psi_S$ in the direction $A\in\mathcal{T}(s,d,M)$.
\begin{align*}
\nabla P\psi_S(A) &= \lim_{t\to 0}\frac{P\psi_{S+tA}-P\psi_S}{t} \\
&=\lim_{t\to 0}-t^{-1}\E\bigg\{\left[\log g((S+tA)(X))-\log g(S(X))\right] \\
&\hspace{3cm}+\sum_{k=1}^d\left[\log D_k(S_k+tA_k)(X)-\log D_kS_k(X)\right]\bigg\} \\
&= -\E\left\{\nabla \log g(S(X))^TA(X)+\sum_k\frac{D_kA_k(X)}{D_kS_k(X)}\right\}.
\end{align*}
We can differentiate under the integral by the dominated convergence theorem, since the integrand is smooth and compactly supported by Assumptions \ref{assumption:compact}-\ref{assumption:smooth}.

Now note that $\nabla P\psi_S(A)$ is a bounded linear operator in $A$. Furthermore, since the KR map $S^*$ is the global minimizer of $S\mapsto P\psi_S$, we have $\nabla P\psi_{S^*}(A)= 0$ for all $A\in\mathcal{T}(s,d,M)$ satisfying $S^*+tA\in\mathcal{T}(s,d,M)$ for all $t$ sufficiently small.
We now check the strong convexity condition. Assume $A,B\in\mathcal{T}(s,d,M)$ for some $M>0$. We have
\begin{align*}
(\nabla P\psi_A-\nabla & P\psi_B)(A-B) = \nabla P\psi_A(A-B)-\nabla P\psi_B(A-B) \\
&= \E\{[\nabla\log g(B(X))-\nabla\log g(A(X))]^T(A(X)-B(X))\\
&\qquad+\sum_k\left[ \frac{D_k(A_k-B_k)(X)}{D_kB_k(X)}-\frac{D_k(A_k-B_k)(X)}{D_kA_k(X)}\right]\}\\
&\ge \E\left\{m\|A(X)-B(X)\|_2^2+\sum_k\left[ \frac{D_k(A_k-B_k)(X)}{D_kB_k(X)}-\frac{D_k(A_k-B_k)(X)}{D_kA_k(X)}\right]\right\} \tag{strong log-concavity of $g$}\\
&= \E\left\{m\|A(X)-B(X)\|_2^2+\sum_k \frac{[D_k(A_k-B_k)(X)]^2}{D_kA_k(X)D_kB_k(X)}\right\} \\
&\ge \E\left\{m\|A(X)-B(X)\|_2^2+\frac{1}{M^2}\sum_k [D_k(A_k-B_k)(X)]^2\right\} \tag{$A,B\in\mathcal{T}(s,d,M)$}\\
&= m\|A-B\|_{L^2(f)}^2+\frac{1}{M^2}\sum_k\|D_k(A_k-B_k)\|_{L^2(f)}^2 \\
&\ge \min\{m,M^{-2}\}\|A-B\|_{H^1(f)}^2. 
\end{align*}
Hence, $S\mapsto P\psi_S$ satisfies the first-order strong convexity condition.
\end{proof}

We now proceed to prove Theorem \ref{thm:KRrate}.

\begin{proof}[Proof of Theorem \ref{thm:KRrate}]
By Lemma \ref{lemma:strongcvx}, strong convexity of $S\mapsto P\psi_S$ with respect to $\|\cdot\|_{H^{1,2}_f(\mathcal{X})}$ implies that
\begin{align*}
P\psi_{S^n} &= P\psi_{S^n}-P\psi_{S^*} \\&\ge \nabla P\psi_{S^*}(S^n-S^*)+\frac{\min\{m,(M^*)^{-2}\}}{2}\|S^n-S^*\|_{H^{1,2}_f(\mathcal{X})}^2 \\
&= \frac{\min\{m,(M^*)^{-2}\}}{2}\|S^n-S^*\|_{H^{1,2}_f(\mathcal{X})}^2,
\end{align*}
since $\nabla P\psi_{S^*}(S^n-S^*)= 0$, as $S^*$ minimizes $P\psi_S$. We complete the proof by appealing to the bound on $\E[P\psi_{S^n}]$ established in Theorem \ref{thm:KLconv}.
\end{proof}

\subsubsection{Sobolev rates for the inverse map}

Now we prove a rate of convergence of the inverse map estimator in the $L^2$ Sobolev norm $\|\cdot\|_{H^{1,2}_g(\mathcal{Y})}$ assuming strong log-concavity, as in Theorem \ref{thm:KRrate}.

\begin{theorem}
Suppose Assumptions \ref{assumption:compact}-\ref{assumption:smooth} hold. Assume further that $g$ is
$m$-strongly log-concave. 
Let $S^n$ be a near-optimizer of the functional $S\mapsto P_n\psi_S$ on $\mathcal{T}(s,d,M^*)$ with remainder $R_n$ satisfying
\[
\E[R_n] = \E\left\{P_n\psi_{S^n} - \inf_{S\in\mathcal{T}(s,d,M^*)} P_n\psi_S\right\} \lesssim \E\|P_n-P\|_{\Psi(s,d,M^*)}.
\]
Then $T^n=(S^n)^{-1}$ converges to $T^*=(S^*)^{-1}$ with respect to the norm $\|\cdot\|_{H^{1,2}_g(\mathcal{Y})}$ with rate
\[
\frac{\min\{m,(M^*)^{-2}\}}{2}\E\|T^n-T^*\|_{H^1(g)}^2 \lesssim \begin{cases}
n^{-1/2}, & d < 2s, \\
n^{-1/2}\log n, &d=2s, \\
n^{-s/d}, &d>2s.
\end{cases}
\]
\label{thm:invrate}
\end{theorem}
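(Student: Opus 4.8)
The plan is to transfer the Sobolev-type bound on $S^n-S^*$ from Theorem \ref{thm:KRrate} to a bound on $T^n-T^*$ via a change-of-variables argument, exploiting the uniform control on $S^n$ and its derivatives built into the definition of $\mathcal{T}(s,d,M^*)$. First I would record the consequences of the inverse function theorem: since $S^n,S^*\in\mathcal{T}(s,d,M^*)$ are $C^1$ diffeomorphisms with upper triangular Jacobians, the inverses $T^n=(S^n)^{-1}$ and $T^*=(S^*)^{-1}$ are $C^1$ and upper triangular, and satisfy $D_kT^n_k(y)=1/D_kS^n_k(T^n(y))$ and $D_kT^*_k(y)=1/D_kS^*_k(T^*(y))$; in particular $D_kT^n_k,D_kT^*_k\in[1/M^*,M^*]$. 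I would also recall that $T^*\#g=f$ exactly, so that $g(S^*(x))|\det JS^*(x)|=f(x)$ by the change of variables formula (\ref{eq:COV}), while $T^n\#g=f_n$ obeys $0<c_1\le f_n\le c_2$ uniformly, since $f_n(x)=g(S^n(x))|\det JS^n(x)|$ with $g$ bounded above and below on $\mathcal{Y}$ and $|\det JS^n(x)|=\prod_kD_kS^n_k(x)\in[(M^*)^{-d},(M^*)^d]$.

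Next I would handle the $L^2$ term. Using the uniform equicontinuity of $\{T^n\}$ established in the proof of Theorem \ref{thm:invconsist} (via Lemma \ref{lemma:matrixbound}), for every $x\in\mathcal{X}$ and $y=S^*(x)$ one has $\|T^n(y)-T^*(y)\|=\|T^n(S^*(x))-T^n(S^n(x))\|\le C(d,M^*)\|S^n(x)-S^*(x)\|$. Squaring, integrating against $g(y)\,dy$, and substituting $y=S^*(x)$ — so that $g(y)\,dy$ becomes $g(S^*(x))|\det JS^*(x)|\,dx=f(x)\,dx$ — yields $\|T^n-T^*\|_{L^2_g(\mathcal{Y})}^2\le C(d,M^*)^2\|S^n-S^*\|_{L^2_f(\mathcal{X})}^2$.

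For the derivative terms I would write, for each $k\in[d]$,
\[
|D_kT^n_k(y)-D_kT^*_k(y)| \le (M^*)^2\,\bigl|D_kS^*_k(T^*(y))-D_kS^n_k(T^n(y))\bigr|,
\]
using the lower bound $1/M^*$ on the diagonal derivatives, and then split the right-hand side as $|D_kS^*_k(T^*(y))-D_kS^*_k(T^n(y))| + |D_kS^*_k(T^n(y))-D_kS^n_k(T^n(y))|$. The first piece is at most $\mathrm{Lip}(D_kS^*_k)\,\|T^n(y)-T^*(y)\|$ — here $D_kS^*_k$ is Lipschitz because $S^*\in\mathcal{T}(s,d,M^*)$ with $s\ge1$ forces its second-order mixed partials $D_jD_kS^*_k$ to be bounded by $M^*$ — so after integration against $g$ it is controlled by the $L^2$ bound just proved. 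For the second piece I would substitute $x=T^n(y)$, so that $g(y)\,dy$ becomes $f_n(x)\,dx\le(c_2/\inf_\mathcal{X} f)\,f(x)\,dx$, giving $\int|D_kS^*_k(x)-D_kS^n_k(x)|^2 f_n(x)\,dx\lesssim\|D_kS^*_k-D_kS^n_k\|_{L^2_f(\mathcal{X})}^2$. Summing over $k$ and combining with the $L^2$ term gives $\|T^n-T^*\|_{H^{1,2}_g(\mathcal{Y})}^2\lesssim\|S^n-S^*\|_{H^{1,2}_f(\mathcal{X})}^2$; taking expectations and invoking Theorem \ref{thm:KRrate} (which is where the $m$-strong log-concavity of $g$ enters) yields the claimed rate, the prefactor $\min\{m,(M^*)^{-2}\}/2$ being a positive constant absorbed into $\lesssim$.

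The main obstacle is the derivative term: unlike the $L^2$ term it cannot be handled by a single substitution, because $D_kT^n_k$ and $D_kT^*_k$ evaluate $D_kS^n_k$ and $D_kS^*_k$ at two different points $T^n(y)\neq T^*(y)$. The interpolation above resolves this, but it crucially needs (i) the uniform lower bound $D_kS_k\ge1/M^*$ valid throughout $\mathcal{T}(s,d,M^*)$, (ii) the Lipschitz regularity of the true diagonal derivative $D_kS^*_k$ (which requires $s\ge1$), and (iii) the uniform two-sided bound on the density estimate $f_n=T^n\#g$; all three follow from the definition of $\mathcal{T}(s,d,M^*)$ together with Assumptions \ref{assumption:compact}--\ref{assumption:smooth}.
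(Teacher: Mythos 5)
Your proposal is correct and follows the same overall strategy as the paper: bound $\|T^n-T^*\|_{H^{1,2}_g(\mathcal{Y})}^2$ by a constant multiple of $\|S^n-S^*\|_{H^{1,2}_f(\mathcal{X})}^2$ using the uniform Jacobian control from Lemma \ref{lemma:matrixbound} (as in Theorem \ref{thm:invconsist}) together with a change of variables, and then invoke Theorem \ref{thm:KRrate}, which is where strong log-concavity enters. The only substantive difference is in the derivative term: you interpolate through $D_kS^*_k(T^n(y))$, so your second piece $|D_kS^*_k(T^n(y))-D_kS^n_k(T^n(y))|$ forces the substitution $x=T^n(y)$, turning $g(y)\,dy$ into $f_n(x)\,dx$ and requiring the extra comparison $f_n\lesssim f$ (which you justify correctly via $|\det JS^n|\le (M^*)^d$, boundedness of $g$, and Assumption \ref{assumption:convex}). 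The paper instead interpolates through $D_kS^n_k(T^*(y))$, so its second piece is evaluated at $T^*(y)$ and the substitution $x=T^*(y)$ converts $g(y)\,dy$ into $f(x)\,dx$ exactly, with no density-comparison constant; the Lipschitz control it needs is on $D_kS^n_k$ rather than on $D_kS^*_k$, and both are available since all maps lie in $\mathcal{T}(s,d,M^*)$ (Definition \ref{def:tsdm} bounds the relevant second-order partials by $M^*$). Your route costs one additional, harmless multiplicative constant absorbed into $\lesssim$; otherwise the two arguments are equivalent, and your treatment of the $L^2$ term and of the prefactor $\min\{m,(M^*)^{-2}\}/2$ matches the paper's.
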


\begin{proof}
We aim to bound $\|T^n-T^*\|_{H^{1,2}_g(\mathcal{Y})}$ by a multiple of $\|S^n-S^*\|_{H^{1,2}_f(\mathcal{X})}$, which will establish the same rate of convergence for the inverse map (up to constant factors) as derived for $S^n$ in Theorem \ref{thm:KRrate}.

In the proof of Theorem \ref{thm:invconsist} we showed that we can bound the $\ell_\infty$ matrix norm of the Jacobian $JT^n(y)$ uniformly as
\[
\sup_{y\in\mathcal{Y}}\|JT^n(y)\|_\infty \le C(d,M^*) := M^*\cdot \max\{1,((M^*)^2+1)^{d-2}\}.
\]
Since $JT^n(y)$ is upper triangular, we can use the exact same argument to arrive at the same bound on the $\ell_1$ matrix norm, which equals the maximum absolute column sum of the matrix,
\[
\|A\|_1 = \max_{1\le j\le d}\sum_{i=1}^d |A_{ij}|.
\]
Hence we conclude that $\sup_{y\in\mathcal{Y}}\|JT^n(y)\|_1 \le C(d,M^*)$.
Now we apply H\"older's inequality for matrix norms to conclude that
\[
\sup_{y\in\mathcal{Y}}\|JT^n(y)\|_2 \le \sup_{y\in\mathcal{Y}}\sqrt{\|JT^n(y)\|_1\|JT^n(y)\|_\infty} \le C(d,M^*).
\]
Consequently, we can bound the first term in $\|\cdot\|_{H^{1,2}_g(\mathcal{Y})}$ as
\begin{align*}
\|T^n-T^*\|_{L^2_g(\mathcal{Y})}^2 &= \int \|T^n(y)-T^*(y)\|_2^2 \ g(y)dy \\
&= \int \|T^n(S^*(x))-T^*(S^*(x))\|_2^2\ |\det JS^*(x)| g(S^*(x))dx \tag{Change of variables} \\
&= \int \|T^n(S^*(x))-T^*(S^*(x))\|_2^2\ f(x)dx \tag{$g=T_\# f$} \\
&= \int \|T^n(S^*(x))-T^n(S^n(x))\|_2^2\ f(x)dx \\
&\le \int \sup_{y\in\mathcal{Y}}\|JT^n(y)\|_2^2\|S^*(x)-S^n(x)\|_2^2\ f(x)dx \\
&\le C(d,M^*)^2\int \|S^*(x)-S^n(x)\|_2^2\ f(x)dx \\
&= C(d,M^*)^2 \|S^*-S^n\|_{L^2_f(\mathcal{X})}^2.
\end{align*}

To bound the deviations of the first derivatives, note that
\begin{align*}
|D_kT^n_k(y)-D_kT^*_k(y)| &= \left|\frac{1}{D_kS_k^n(T^n(y))} - \frac{1}{D_kS^*_k(T^*(y))}\right| \\
&= \left|\frac{D_kS_k^n(T^n(y))-D_kS^*_k(T^*(y))}{[D_kS_k^n(T^n(y))][D_kS^*_k(T^*(y))]}\right| \\
&\le (M^*)^2 \left|D_kS_k^n(T^n(y))-D_kS^*_k(T^*(y))\right| \tag{$S,S^n\in\mathcal{T}(s,d,M^*)$}\\
&\le (M^*)^2\left|D_kS_k^n(T^n(y))-D_kS_k^n(T^*(y))\right|\\
&\quad +(M^*)^2\left|D_kS_k^n(T^*(y))-D_kS^*_k(T^*(y))\right| \\
&\le (M^*)^2\sup_{x\in\mathcal{X}}\|\nabla( D_kS_k^n)(x)\|_2\|T^n(y)-T^*(y)\|_2  \\
&\quad +(M^*)^2\left|D_kS_k^n(T^*(y))-D_kS^*_k(T^*(y))\right|. \tag{mean value inequality}
\end{align*}
Now note that $S_k^n$ depends only on $(x_k,\ldots,x_d)$, and therefore, since $S^n\in\mathcal{T}(s,d,M^*)$ implies that $\max_{j\in[d]}\|D_jD_kS_k^n\|_\infty\le M^*$, we have
\begin{align*}
\sup_{x\in\mathcal{X}}\|\nabla( D_kS_k^n)(x)\|_2 &= \sup_{x\in\mathcal{X}}\sqrt{\sum_{j=1}^d  |D_jD_kS_k^n(x)|^2} \\
&= \sup_{x\in\mathcal{X}}\sqrt{\sum_{j=k}^d  |D_jD_kS_k^n(x)|^2} \\
&\le\sqrt{\sum_{j=k}^d  (M^*)^2} \\
&= M^*\sqrt{d-k+1}.
\end{align*}
Hence, we conclude that
\begin{align*}
|D_kT^n_k(y)-D_kT^*_k(y)|^2 
&\le \{(M^*)^3\sqrt{d-k+1}\|T^n(y)-T^*(y)\|_2 \\
&\qquad +(M^*)^2\left|D_kS_k^n(T^*(y))-D_kS^*_k(T^*(y))\right|\}^2 \\
&\le 2(M^*)^6(d-k+1)\|T^n(y)-T^*(y)\|_2^2 \\
&\qquad +2(M^*)^4\left|D_kS_k^n(T^*(y))-D_kS^*_k(T^*(y))\right|^2. \tag{$(a+b)^2\le 2(a^2+b^2)$}
\end{align*}
Summing over $k$ and integrating against the density $g$, we obtain
\begin{align*}
\sum_{k=1}^d\|D_kT^n_k-& D_kT^*_k\|_{L^2_g(\mathcal{Y})}^2 = \sum_{k=1}^d \int |D_kT^n_k(y)-D_kT^*_k(y)|^2 g(y)dy \\
&\le \sum_{k=1}^d \int 2(M^*)^6(d-k+1)\|T^n(y)-T^*(y)\|_2^2\ g(y)dy \\
&\quad +\sum_{k=1}^d\int 2(M^*)^4\left|D_kS_k^n(T^*(y))-D_kS^*_k(T^*(y))\right|^2g(y)dy \\
&= (M^*)^6d(d+1)\|T^n-T^*\|^2_{L^2_g(\mathcal{Y})}\\
&\quad+2(M^*)^4 \sum_{k=1}^d\int \left|D_kS_k^n(x)-D_kS^*_k(x)\right|^2g(S^*(x))|\det JS^*(x)|dx \\
&= (M^*)^6d(d+1)\|T^n-T^*\|^2_{L^2_g(\mathcal{Y})}\\
&\quad+2(M^*)^4 \sum_{k=1}^d\int \left|D_kS_k^n(x)-D_kS^*_k(x)\right|^2f(x)dx \\
&= (M^*)^6d(d+1)\|T^n-T^*\|^2_{L^2_g(\mathcal{Y})}+2(M^*)^4 \sum_{k=1}^d\|D_k(S_k^n-S^*_k)\|^2_{L^2_f(\mathcal{X})} \\
&\le (M^*)^6d(d+1)C(d,M^*)^2\|S^n-S^*\|^2_{L^2_f(\mathcal{X})}+2(M^*)^4 \sum_{k=1}^d\|D_k(S_k^n-S^*_k)\|^2_{L^2_f(\mathcal{X})} \\
\end{align*}
Putting all of these calculations together, we have shown that
\begin{align*}
\|T^n-T^*\|^2_{H^{1,2}_g(\mathcal{Y})} &= \|T^n-T^*\|^2_{L^2_g(\mathcal{Y})}+\sum_{k=1}^d \|D_k(T^n_k-T^*_k)\|^2_{L^2_g(\mathcal{Y})} \\
&\le C(d,M^*)^2\|S^n-S^*\|^2_{L^2_f(\mathcal{X})} + (M^*)^6d(d+1)C(d,M^*)^2\|S^n-S^*\|^2_{L^2_f(\mathcal{X})}\\
&\quad+2(M^*)^4 \sum_{k=1}^d\|D_k(S_k^n-S^*_k)\|^2_{L^2_f(\mathcal{X})} \\
&\le \tilde{C}(d,M^*)^2\|S^n-S^*\|^2_{H^{1,2}_f(\mathcal{X})},
\end{align*}
where we define
\[
\tilde{C}(d,M)^2 = \max\{C(d,M)^2[1+M^6d(d+1)],2M^4\}.
\]
Finally, we appeal to the bound on $\|S^n-S^*\|^2_{H^{1,2}_f(\mathcal{X})}$ derived in Theorem \ref{thm:KRrate} to conclude the proof:
\begin{align*}
\frac{\min\{m,(M^*)^{-2}\}}{2}\E\|T^n-T^*\|^2_{H^{1,2}_g(\mathcal{Y})}&\le \frac{\min\{m,(M^*)^{-2}\}}{2}\tilde{C}(d,M^*)^2\E\|S^n-S^*\|_{H^{1,2}_f(\mathcal{X})}^2 \\
&\lesssim \begin{cases}
n^{-1/2}, & d < 2s, \\
n^{-1/2}\log n, &d=2s, \\
n^{-s/d}, &d>2s.
\end{cases}
\end{align*}
\end{proof}

\subsection{Dimension ordering}
\label{sec:ordering-proof}
\subsubsection{Proof of Lemma \ref{lemma:anisorate}}

Now we establish a rate of convergence in the anisotropic smoothness setting. Define
\[
\psi_{S}^k(x) = \log f_k(x_k|x_{(k+1):d})-\log g_k(S_k(x)|S_{(k+1):d}(x))+\log D_kS_k(x),
\]
which is $(s_k,\ldots,s_d)$-smooth in $(x_k,\ldots,x_d)$ whenever each $S_k$ is $(s_k+1,s_{k+1},\ldots,s_d)$-smooth in $(x_k,x_{k+1},\ldots,x_d)$ by Assumption \ref{assumption:multismooth} and Lemma \ref{lemma:multismooth}.
Since 
\[
f(x)=\prod_{k=1}^d f_k(x_k|x_{(k+1):d})
\]
by the chain rule of densities, and similarly for $g$, we have
\[
(P_n-P)\psi_S = \sum_{k=1}^d(P_n-P)\psi_{S}^k.
\]
Note that $\psi_{S}^k$ is a function of $S_{k:d}$ and the $d_k$ variables $(x_k,\ldots,x_d)$ only. Defining
\[
\Psi_k(\mathfrak{s},d,M) = \{\psi_S^k : S\in\mathcal{T}(\mathfrak{s},d,M)\},
\]
we have the following analog of Proposition \ref{prop:entbound}.

\begin{prop}
Under Assumptions \ref{assumption:compact}, \ref{assumption:convex}, and \ref{assumption:multismooth}, 
the metric entropy of $\Psi_k(\mathfrak{s},d,M)$ in the $L^\infty$ norm is bounded as
\[
H(\epsilon, \Psi_k(\mathfrak{s},d,M),\|\cdot\|_\infty) 
\lesssim \epsilon^{-d_k/{\sigma}_k}.
\]
Consequently, $\Psi_k(\mathfrak{s},d,M)$ is totally bounded and therefore precompact in $L^\infty(\mathcal{X}_{k:d})$.
\label{prop:multientbound}
\end{prop}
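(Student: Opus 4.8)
The plan is to mirror the proof of Proposition~\ref{prop:entbound}, replacing the isotropic entropy estimate of Nickl and P\"otscher with the anisotropic estimate of Birg\'e (Proposition~\ref{prop:birge}). First I would observe that, by construction, $\psi_S^k$ depends only on the $d_k = d-k+1$ variables $(x_k,\ldots,x_d)$, so it is naturally a function on $\mathcal{X}_{k:d}$, the projection of $\mathcal{X}$ onto these coordinates. Since $\mathcal{X}$ is compact and convex with nonempty interior (Assumption~\ref{assumption:compact}), $\mathcal{X}_{k:d}$ is a compact convex subset of $\mathbb{R}^{d_k}$ of full dimension $d_k$, which is exactly the setting of Proposition~\ref{prop:birge}.

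The heart of the argument is to show that every $\psi_S^k\in\Psi_k(\mathfrak{s},d,M)$ is $(s_k,s_{k+1},\ldots,s_d)$-smooth with derivatives bounded uniformly over $S\in\mathcal{T}(\mathfrak{s},d,M)$, i.e.\ $\sup_{\psi\in\Psi_k,\ \alpha\preceq(s_k,\ldots,s_d)}\|D^\alpha\psi\|_\infty<\infty$. This is done term by term, as in the proof of Proposition~\ref{prop:entbound}. For the first term, Lemma~\ref{lemma:multismooth}(1) applied to $f$ shows the conditional density $f_k(x_k\mid x_{(k+1):d})$ is $(s_k,s_{k+1},\ldots,s_d)$-smooth, and it is bounded away from $0$ because $f$ is (Assumption~\ref{assumption:convex}) on a compact set; since $\log$ is smooth away from $0$, $\log f_k(x_k\mid x_{(k+1):d})$ has the required smoothness with bounded derivatives. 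For $\log D_kS_k$, condition~\ref{LB} of Definition~\ref{def:tsdm} gives $D_kS_k\ge 1/M$, while condition~\ref{UB} with $\tilde{\mathfrak{s}}_k=(s_k+1,s_{k+1},\ldots,s_d)$ supplies exactly the one extra derivative in $x_k$ and the needed derivatives in $x_{k+1},\ldots,x_d$ to make $D_kS_k$, hence $\log D_kS_k$, $(s_k,\ldots,s_d)$-smooth with derivatives bounded in terms of $M$. For the composite term $\log g_k(S_k(x)\mid S_{(k+1):d}(x))$, Lemma~\ref{lemma:multismooth} applied to $g$ (which is $\|\mathfrak{s}\|_\infty$-smooth and bounded away from $0$ by Assumptions~\ref{assumption:convex} and~\ref{assumption:multismooth}) shows $y\mapsto\log g_k(y_k\mid y_{(k+1):d})$ is $\|\mathfrak{s}\|_\infty$-smooth; the inner maps $S_j$, $j\ge k$, each depend only on $(x_j,\ldots,x_d)$ and satisfy $\|D^\alpha S_j\|_\infty\le M$ for $\alpha_{j:d}\preceq\tilde{\mathfrak{s}}_j$ by Definition~\ref{def:tsdm}, which is more than enough to differentiate the composition up to order $s_i$ in each $x_i$ via the multivariate chain rule, since $\|\mathfrak{s}\|_\infty\ge s_i$ for every $i$. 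Summing the three contributions, $\Psi_k(\mathfrak{s},d,M)$ is a bounded family in the anisotropic smoothness class with index $(s_k,\ldots,s_d)$ on $\mathcal{X}_{k:d}\subset\mathbb{R}^{d_k}$, with all the uniform constants depending only on $d$, $\mathfrak{s}$, $M$, and the smoothness and lower bounds of $f,g$.

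Finally I would invoke Proposition~\ref{prop:birge} with $d$ replaced by $d_k$ and $\mathfrak{s}$ replaced by $(s_k,\ldots,s_d)$, so the relevant harmonic mean is $\sigma=d_k\big(\sum_{j=k}^d s_j^{-1}\big)^{-1}=\sigma_k$, yielding $H(\epsilon,\Psi_k(\mathfrak{s},d,M),\|\cdot\|_\infty)\lesssim\epsilon^{-d_k/\sigma_k}$. Since this is finite for every $\epsilon>0$, $\Psi_k(\mathfrak{s},d,M)$ is totally bounded, and as a subset of the complete space $L^\infty(\mathcal{X}_{k:d})$ it is precompact. The main obstacle is the bookkeeping in the composite-$g$ term: one must track which mixed partials are required and confirm that conditions~\ref{LB}--\ref{UB} of Definition~\ref{def:tsdm} together with Lemma~\ref{lemma:multismooth} supply exactly the smoothness needed — no less — for the chain-rule differentiation of $\log g_k(S_k\mid S_{(k+1):d})$ to close with uniform bounds, and that the projected support $\mathcal{X}_{k:d}$ is genuinely $d_k$-dimensional so that Proposition~\ref{prop:birge} applies with the correct exponent.
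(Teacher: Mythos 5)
Your proposal takes essentially the same route as the paper's (largely implicit) argument: verify that each $\psi_S^k$ is an anisotropically $(s_k,\ldots,s_d)$-smooth function of $(x_k,\ldots,x_d)$ with derivatives bounded uniformly over $\mathcal{T}(\mathfrak{s},d,M)$ (via Lemma~\ref{lemma:multismooth}, Assumption~\ref{assumption:multismooth}, and conditions~\ref{LB}--\ref{UB} of Definition~\ref{def:tsdm}), and then apply Birg\'e's anisotropic entropy bound (Proposition~\ref{prop:birge}) on the $d_k$-dimensional compact convex projection $\mathcal{X}_{k:d}$ with harmonic-mean exponent $\sigma_k$, exactly as Proposition~\ref{prop:entbound} invoked the Nickl and P\"otscher bound. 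The delicate chain-rule bookkeeping for the composite term $\log g_k(S_k\mid S_{(k+1):d})$, which you flag, is handled at the same level of detail as in the paper itself, so your argument matches the paper's proof in both structure and substance.
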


Hence, we obtain in Lemma \ref{lemma:anisorate} a bound on the supremum process analogous to Lemma \ref{lemma:EPrate} now applied to the family $\Psi_k(\mathfrak{s},d,M)$ of $(s_k,\ldots,s_d)$-smooth maps $\psi_{S}^k$.

\begin{proof}[Proof of Lemma \ref{lemma:anisorate}]
The metric entropy integral bounds utilized in the proof of Lemma \ref{lemma:EPrate}, combined with the entropy bound on $\Psi_k(\mathfrak{s},d,M)$ derived in Proposition \ref{prop:multientbound}, yield the following rate
\[
\E\|P_n-P\|_{\Psi_k(\mathfrak{s},d,M)}=\E\left\{\sup_{S\in\mathcal{T}(\mathfrak{s},d,M)} |(P_n-P)\psi_{S}^k|\right\} \lesssim 
c_{n,k}.
\]
Applying the triangle inequality then yields 
\begin{align*}
\E\|P_n-P\|_{\Psi(\mathfrak{s},d,M)} &= \E\left\{\sup_{S\in\mathcal{T}(\mathfrak{s},d,M)}|(P_n-P)\psi_S|\right\}\\
&= \E\left\{\sup_{S\in\mathcal{T}(\mathfrak{s},d,M)}\left|\sum_{k=1}^d(P_n-P)\psi^k_S\right|\right\}\\
&\le \sum_{k=1}^d \E\left\{\sup_{S\in\mathcal{T}(\mathfrak{s},d,M)}\left|(P_n-P)\psi^k_S\right|\right\}\\
&= \sum_{k=1}^d \E\|P_n-P\|_{\Psi_k(\mathfrak{s},d,M)} \\
&\lesssim
\sum_{k=1}^d  c_{n,k}.
\end{align*}
\end{proof}

\subsection{Jacobian flows}
\label{sec:flows-proof}
\begin{proof}[Proof of Theorem \ref{thm:flow}]
The proof is practically identical to that of Theorem \ref{thm:KLconv}, since the functions in $\Psi_m(\Sigma,s,M)$ are $s$-smooth with uniformly bounded derivatives (analogous to $\Psi(s,d,M)$) by definition of $\mathcal{J}_m(\Sigma,s,M)$, the chain rule of differentiation, and the relation
\begin{align*}
\psi_S(x) 
= & \log [f(x)/g(S(x))] - \sum_{j=1}^m \sum_{k=1}^d \log D_kU_k^j(x^j),
\end{align*}
where we define
\[
x^j = \Sigma^j \circ U^{j-1} \circ \Sigma^{j-1}\circ\cdots\circ U^1\circ \Sigma^1(x), \quad j\in[m].
\]
Hence, the entropy estimates for $\Psi(s,d,M)$ in Proposition \ref{prop:entbound} hold also for $\Psi_m(\Sigma,s,M)$. Thus, we obtain similar bounds on $\E\|P_n-P\|_{\Psi_m(\Sigma,s,M)}$ as in Lemma \ref{lemma:EPrate}. Combining this with the risk decomposition (\ref{eq:lossbound}) and the argument in Theorem \ref{thm:KLconv} completes the proof.
\end{proof}

\subsection{On separability}
\label{sec:separability}
Suppose the source $g$ is a product density, i.e., $g(y)=\prod_{k=1}^d g_k(y_k)$ for some smooth densities $g_k:\mathbb{R}\to\mathbb{R}$. As the source density $g$ is a degree of freedom in our problem, we are free to choose $g$ to factor as such. For example, $g$ could be 
the standard normal density in $d$-dimensions or
the uniform density on a box $\mathcal{Y}\subset\mathbb{R}^d$. We will show that the task of estimating the KR map $S^*$ is amenable to distributed computation in this case.

\begin{assumption}
The source density $g$ factors as a product: $g(y)=\prod_{k=1}^d g_k(y_k)$.
\label{assumption:product}
\end{assumption}

Recall the minimization objective defining our estimator:
\[
-\frac{1}{n}\sum_{i=1}^n\left[\log g(S(X^i))+\sum_{k=1}^d\log D_kS_k(X^i)\right],
\]
where $X^i=(X^i_1,\ldots,X^i_d),i=1,\ldots,n$ is an iid random sample from $f$. Here we omit the entropy term involving $f$ that does not depend on $S$ without loss of generality. For a general density $g$, we can simplify the above expression by appealing to the chain rule for densities 
\[
g(y) = \prod_{k=1}^d g_k(y_k|y_{k+1},\ldots,y_d),
\]
where 
\[
g_k(y_k|y_{k+1},\ldots,y_d) = \frac{\int g(y_1,\ldots,y_d)dy_1\cdots dy_{k-1}}{\int g(y_1,\ldots,y_d)dy_1\cdots dy_{k}}.
\]
The objective then becomes
\[
-\frac{1}{n}\sum_{i=1}^n\sum_{k=1}^d\left[\log g_k(S_k(X^i)|S_{k+1}(X^i),\ldots,S_d(X^i))+\log D_kS_k(X^i)\right].
\]
When $g$ is a product density we have $g_k(y_k|y_{k+1},\ldots,y_d)=g_k(y_k)$ and we obtain
\[
\sum_{k=1}^d\left\{-\frac{1}{n}\sum_{i=1}^n\left[\log g_k(S_k(X^i))+\log D_kS_k(X^i)\right]\right\},
\]
which is a separable objective over the component maps $S_k$. In this case, we can find our estimator $S^n=(S^n_1,\ldots,S^n_d)$ by solving for the components $S^n_k$ in parallel.

\end{document}